\documentclass{article}

\usepackage{microtype}
\usepackage{graphicx}
\usepackage{subfigure}
\usepackage{booktabs} 

\newlength{\without}

\usepackage{multirow}
\usepackage[thinc]{esdiff}
\usepackage{amsthm}
\usepackage{amssymb}
\newtheorem{definition}{Definition}
\newtheorem{theorem}{Theorem}
\newtheorem{lemma}{Lemma}
\newtheorem{property}{Property}

\newtheorem{assumption}{Assumption}
\newtheorem{corollary}{Corollary}
\usepackage{mathtools}
\DeclarePairedDelimiter\ceil{\lceil}{\rceil}
\DeclarePairedDelimiter\floor{\lfloor}{\rfloor}
\usepackage{hyperref}



\usepackage[accepted]{icml2021}

\begin{document}

\twocolumn[
\icmltitle{Bayesian Optimistic Optimisation with Exponentially Decaying Regret}




\begin{icmlauthorlist}
\icmlauthor{Hung Tran-The}{to}
\icmlauthor{Sunil Gupta}{to}
\icmlauthor{Santu Rana}{to}
\icmlauthor{Svetha Venkatesh}{to}
\end{icmlauthorlist}

\icmlaffiliation{to}{Applied Artificial Intelligence Institute, Deakin University, Geelong, Australia}

\icmlcorrespondingauthor{Hung Tran-The}{hung.tranthe@deakin.edu.au}

\icmlkeywords{Machine Learning, ICML}

\vskip 0.3in
]



\printAffiliationsAndNotice{} 

\begin{abstract}
 Bayesian optimisation (BO) is a well-known efficient algorithm for finding the global optimum of expensive, black-box functions. The current practical BO algorithms have regret bounds ranging from $\mathcal{O}(\frac{logN}{\sqrt{N}})$ to $\mathcal O(e^{-\sqrt{N}})$, where $N$ is the number of evaluations. This paper explores the possibility of improving the regret bound in the noiseless setting by intertwining concepts from BO and tree-based optimistic optimisation which are based on partitioning the search space. We propose the BOO algorithm, a first practical approach which can achieve an exponential regret bound with order $\mathcal O(N^{-\sqrt{N}})$ under the assumption that the objective function is sampled from a Gaussian process with a Mat\'ern kernel with smoothness parameter $\nu > 4 +\frac{D}{2}$, where $D$ is the number of dimensions. We perform experiments on optimisation of various synthetic functions and machine learning hyperparameter tuning tasks and show that our algorithm outperforms baselines.
\end{abstract}
\section{Introduction}
We consider a global optimisation problem whose goal is to maximise $f(x)$ subject to $x \in \mathcal X \subset \mathbb{R}^D$, where $D$ is the number of dimensions and $f$ is an expensive black-box functions that can only be evaluated point-wise. The performance of a global optimisation algorithm is typically evaluated using \emph{simple regret}, which is given as
$$r_N = \text{sup}_{x \in \mathcal X} f(x) - \text{max}_{1 \le i \le N}f(x_i)$$
where $x_i$ is the $i$-th sample, and $N$ is the number of function evaluations. In this paper, we consider the case that the evaluation of $f$ is noiseless.

Bayesian optimisation (BO) provides an efficient model-based solution for global optimisation. The core idea is to transform a global optimisation problem into a sequence of auxiliary optimisation problems of a surrogate function called the acquisition function. The acquisition function is built using a model of the function through its limited observations and recommends the next function evaluation location. Regret analysis has been done for many existing BO algorithms, and typically the regret is sub-linear following order $\mathcal{O}\left(\sqrt{\frac{logN}{N}}\right)$ \citep{Srinivas12,RussoRKOW18}. More recently, \citep{vakili20} have improved this to $\mathcal{O}\left(\frac{1}{\sqrt{N}}\right)$ under the noiseless setting. However, a limitation of BO is that performing such a sequence of auxiliary optimisation problems is expensive.

\citet{Freitas12} introduced a Gaussian process (GP) based scheme called $\delta$-cover sampling as an alternative to the acquisition function to trade-off exploration and exploitation. Their method samples the objective function using a finite lattice within a feasible region and doubles the density of points in the lattice at each iteration. However, even in moderate dimensions, their algorithm is impractical since the lattice quickly becomes too large to be sampled in a reasonable amount of time (as pointed out by \citep{WangSJF14,Kawaguchi16}).

An alternative practical approach for global optimisation is to consider tree-based optimistic optimisation as in \citep{munos2011,Floudas}. These algorithms partition the search space into finer regions by building a hierarchical tree. The key is to have efficient strategies to identify a set of nodes that may contain the global optimum and then to successively reduce and refine the search space to reach closer to the optimum.
As example, DIRECT algorithm \citep{Jones93} partitions the search space assuming a global Lipschitz constant. Simultaneous Optimistic Optimisation (SOO) algorithm \citep{munos2011} generalises this by using only local Lipschitz conditions without requiring the knowledge of the Lipschitz global-metric. Under certain assumptions, this algorithm shows the possibility of achieving an exponentially diminishing regret $\mathcal{O}(e^{-\sqrt{N}})$. An additional advantage of such algorithms is that we do not need to perform an auxiliary non-convex optimisation of the acquisition functions as in BO which may be difficult in cases that are high-dimensional \citep{Kandasamy15,Tran-The0RV20} or have unbounded search spaces \citep{hung20}. However, these optimistic algorithms are model-free, that is, they do not utilise the function observations efficiently.

A natural extension to improve the sample efficiency is to incorporate a model of the objective function into the optimistic strategy. Indeed, works that do this include BaMSOO \citep{WangSJF14} and IMGPO \citep{Kawaguchi16}. Using a Gaussian process \citep{Rasmussen05} as a model of the objective function, both algorithms avoid to evaluate the objective function for points known to be sub-optimal with high probability. While BaMSOO has a sub-linear regret bound, IMGPO can achieve an exponential regret bound. However, despite this, IMGPO has not still overcome the worst-case regret bound order $\mathcal{O}(e^{-\sqrt{N}})$ of SOO which does not use any model of the objective function. A natural question is that \emph{is there a practical algorithm for global optimisation that can break this regret bound order $\mathcal{O}(e^{-\sqrt{N}})$ under a mild assumption}?

In this paper, we propose a novel approach, which combines the strengths of the tree-based optimistic optimisation methods and Bayesian optimisation to achieve an improved regret bound $\mathcal{O}(N^{-\sqrt{N}})$ in the worst case. Our main contributions are summarised as follows:
\begin{itemize}
  \item A GP-based optimistic optimisation algorithm using novel partitioning procedure and function sampling;
  \item Our algorithm has a worst-case regret bound of $\mathcal{O}(N^{-\sqrt{N}})$ in the noiseless setting under the assumption that the objective function is sampled from a Gaussian process with a Mat\'ern kernel with smoothness parameter $\nu > 4 + \frac{D}{2}$, where $N$ is the number of evaluations and $D$ is the number of dimensions. Our algorithm avoids an auxiliary optimisation step at each iteration in BO, and avoids the $\delta$-cover sampling in the approach of \citet{Freitas12}.  To our best knowledge, without using an $\delta$-cover sampling procedure which is impractical, this is the tightest regret bound for BO algorithms;
  \item To validate our algorithm in practice, we perform experiments on optimisation of various synthetic functions and machine learning hyperparameter tuning tasks and show that our algorithm outperforms baselines.
\end{itemize}
\section{Related Works}
In this section, we briefly review some related work additional to the work mentioned in section 1.

In Bayesian optimisation literature, there exist also some works that use tree-structure for the search space. While \citet{wang18c} used a Mondrian tree to partition the search space, a recent work by \citep{wang2020} used a dynamic tree via $K$-means algorithm. However, these works focused on improving BO's performance empirically for large-scale data sets or high-dimensions rather than to improve the regret bound.

There are two viewpoints for BO, Bayesian and non-Bayesian as pointed out by \citet{scarlett17a}. In the non-Bayesian viewpoint, the function is treated as fixed and unknown, and assumed to lie in a reproducing kernel Hilbert space (RKHS). Under this viewpoint, \citet{chowdhury17a,janz20a}  provided upper regret bounds while \citet{scarlett17a} provided lower regret bounds for BO with Mat\'ern kernels. These bounds all are sub-linear. Otherwise, in the Bayesian viewpoint where we assume that the underlying function is random according to a GP, \citet{Kawaguchi16} showed that BO can obtain an exponential convergence rate. In this paper, we focus on the Bayesian viewpoint and break the regret bound order of IMGPO \citep{Kawaguchi16} under some mild assumptions.

The optimistic optimisation methods have also been extended to adapt to different problem settings e.g., noisy setting \citep{ValkoCM13,Grill15},  high dimensional spaces \citep{QianY16,Al-DujailiS17}, multi-objective optimisation \citep{Al-DujailiS18} or multi-fidelity black-box optimisation \citep{sen18a}. Our work can be complementary to these works and the integration of our solution with them may be promising to improve their regret bounds.
\section{Preliminaries}
\paragraph{Bayesian Optimisation} The standard BO routine consists of two key steps: estimating the black-box function from observations and maximizing an acquisition function to suggest next function evaluation point.

Gaussian process is a popular choice for the first step. Formally, we have $f(x)\sim\mathcal{GP}(m(x),k(x,x'))$ where $m(x)$ and $k(x,x')$ are the mean and the covariance (or kernel) functions. Given a set of observations $\mathcal{D}_{1:p}=\{x_{i},y_{i}\}_{i=1}^{p}$ under a noiseless observation model $y_{i}=f(x_{i})$, the predictive distribution can be derived as $P(f(x)|\mathcal{D}_{1:p},x)=\mathcal{N}(\mu_{p+1}(x),\sigma_{p+1}^{2}(x))$, where $\mu_{p+1}(x)=\textbf{k}^{T}K{}^{-1}\textbf{y}+m(x)$ and $\sigma_{p+1}^{2}(x)=k(x,x)-\text{\textbf{k}}^{T}K^{-1}\textbf{k}$. In the above expression we define $\textbf{k }=[k(x,x_{1}),...,k(x,x_{p})]^T$, $K=[k(x_{i},x_{j})]_{1\le i,j\le p}$ and $\textbf{y}=[y_{1},\ldots,y_{p}]$.

Some well-known popular acquisition functions for the second step include upper confidence bound (GP-UCB)\citep{Srinivas12}, expected improvement (EI) \citep{Bull11}, Thompson sampling (TS) \citep{RussoRKOW18} and predictive entropy search (PES) \citep{Lobato14}.
Among them, GP-UCB is given as $\mathcal{U}_{p}(x)=\mu_{p}(x)+\beta_{p}^{1/2}\sigma_{p}(x)$, where $\beta_{p}$ is the parameter balancing between the exploration and exploitation. We will use GP-UCB in our tree expansion scheme to determine the node to be expanded.

In this paper, we focus on the popular class of Mat\'ern kernels for Gaussian process which is defined as
$$k_{\nu}(x, x') = \frac{\sigma^2}{\Gamma(\nu)2^{\nu -1}}(\frac{||x -x'||_2}{\lambda})^{\nu}\mathcal B_{\nu}(\frac{||x- x'||_2}{\lambda}),$$
where $\Gamma$ denotes the Gamma function, $\mathcal{B}_{\nu}$ denotes the modified Bessel function of the second kind, $\nu$ is a parameter controlling the smoothness of the function and $\sigma^{2},\lambda$ are hyper-parameters of the kernel. We assume that the hyper-parameters are fixed and known in advance. However, our work can also be extended for the unknown hyper-parameters of the Mat\'ern kernel as in \citep{vakili20} (for Bayesian setting). Important special cases of $\nu$ include $\nu=\frac{1}{2}$ that corresponds to the exponential kernel and $\nu \rightarrow \infty$ that corresponds to the squared exponential kernel. The Mat\'ern kernel is of particular practical significance, since it offers a more suitable set of assumptions for the modeling and optimisation of physical quantities (\citep{stein1999}).
\paragraph{Hierarchical Partition} We use the hierarchical partition of the search space as in \citep{munos2011}. Given a branch factor $m$, for any depth $h$, the search space $\mathcal{X}$ is partitioned into a set of $m^{h}$ sets $A_{h,i}$ (called cells), where $0\le i\le m^{h}-1$. This partitioning is represented as a $m$-ary tree structure where each cell $A_{h,i}$ corresponds to a node $(h,i)$. A node $(h,i)$ has $m$ children nodes, indexed as $\{(h+1,i_{j})\}_{1\le j\le m}$. The children nodes $\{(h+1,i_{j}),1\le j\le m\}$ form a partition of the parent's node $(h,i)$. The root of the tree corresponds to the whole domain $\mathcal{X}$. The center of a cell $A_{h,i}$ is denoted by $c_{h,i}$ where $f$ and its upper confidence bound is evaluated.
\section{Proposed BOO algorithm}
\subsection{Motivation}
Most of tree-based optimistic optimisation algorithms like SOO, StoSOO \citep{ValkoCM13}, BaMSOO and IMGPO face a \emph{strict negative correlation} between the branch factor $m$ and the number of tree expansions given a fixed function evaluation budget $N$. On the one hand, using a larger $m$ makes a tree finer, which helps to reach closer to the optimum. On the other hand, having more expansions in the tree also allows to create finer partitions in multiple regions of the space. Thus both a larger branch factor and a larger number of tree expansions allow an algorithm to get closer to the optimum. However, each time a node is expanded, the algorithms such as SOO, StoSOO spend $m$ function evaluations - one for each of the $m$ children and thus the number of tree expansions is restricted to at most $\left\lfloor \frac{N}{m}\right\rfloor$. Thus when $m$ increases, the number of tree expansions decreases. We call this phenomenal the \emph{strict negative correlation} of tree-based optimistic optimisation algorithms.

Using the assumption that the objective function is sampled from a GP prior, BaMSOO and IMGPO reduce this negative correlation by evaluating the function only at the children where the UCB value is greater than the best function value observed thus far ($f^{+}$). However, the number of expansions is still tied to the branch factor lying between $\left\lfloor \frac{N}{m}\right\rfloor$  and $N$.

We present a new approach which permits to untie the branch factor $m$ from the number of tree expansions and hence, solves the strict negative correlation of tree-based optimistic optimisation algorithms. By doing so, we can exploit the use of a large $m$ to achieve finer
partitions and achieves a regret bound $\mathcal{O}(N^{-\sqrt{N}})$ improving upon current BO algorithms.
\subsection{BOO algorithm}
Our algorithm is described in Algorithm \ref{alg:alg} where we assume that the objective function is a sample from GP as in Bayesian optimisation, however our approach follows the principle of SOO which uses a hierarchical partitioning of the search space $\mathcal{X}$. The main difference of the proposed BOO and previous works lies in the partitioning procedure, the tree expansion mechanism and the function sampling strategy.
\begin{algorithm}[tb]
\caption{The BOO Algorithm} \label{alg:alg}
\textbf{Input}: An evaluation budget $N$ and parameters $m, a ,b \in \mathbb{N}$. \\
\textbf{Initialisation}: Set $\mathcal T_0 = \{(0,0)\}$ (root node). Set $p = 1$. Sample initial points to build $\mathcal D_{0}$.\\
\begin{algorithmic}[1]
\WHILE{True}
    \STATE Set $v_{max} = - \infty$
    \FOR{$h =0$ to \text{min}(\text{depth}($\mathcal T_p$), $h_{max}(p)$)}
        \STATE Among all leaves $(h,j)$ of depth $h$, select $(h,i) \in \text{argmax}_{(h,j) \in \mathcal L} \mathcal U_p(c_{h,j})$         \IF{$\mathcal U_p(c_{h,i}) \ge v_{max}$}
            \STATE \textcolor{blue}{Expand node} $(h,i)$ by adding $m$ children $(h+1, i_j)$ to tree $\mathcal T_p$, using partitioning procedure $P(m;a,b)$
            \STATE \textcolor{blue}{Evaluate} $f(c_{h,i})$
            \STATE Augment the data $\mathcal D_{p} = \{\mathcal D_{p-1}, ((c_{h,i}, f(c_{h,i}))\}$.
            \STATE Fit the Gaussian process using $\mathcal D_{p}$
            \STATE Update $v_{max} = \text{max}\{f(c_{h,i}), v_{max}\}$
            \STATE Update $p = p + 1$
            \IF{$p = N$} \STATE Return $x(N) = \text{argmax}_{\{c_{h,i})| (c_{h,i}, f(c_{h,i}))\in \mathcal D_N \} } f(c_{h,i})$
            \ENDIF
        \ENDIF
    \ENDFOR
\ENDWHILE
\end{algorithmic}
\end{algorithm}
\begin{figure}[t]
\centering
\subfigure{\includegraphics[scale=1.0,width=.22\textwidth]{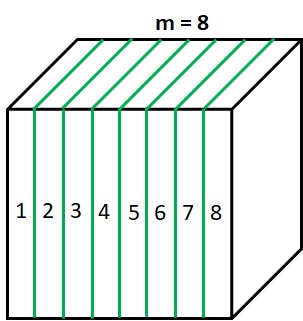}
}\quad
\subfigure{\includegraphics[scale=1.0,width=.22\textwidth]{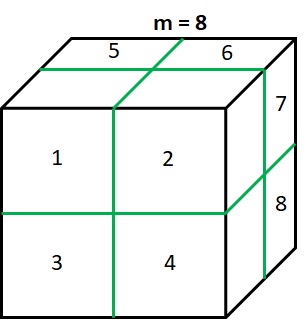}
}
\vspace*{-3mm}
\caption{An illustration of partitioning procedure for $m=8$: (a) SOO methods partition $a$ cell by dividing the longest side into $m$ equal parts; (b) our method sets $m=a^{b}$ and partitions $a$ cell by dividing $b=3$ longest sides into $a=2$ equal parts.}
\label{partition-procedure}
\vspace*{-3mm}
\end{figure}
\begin{figure}[tb]
\begin{center}
\centerline{\includegraphics[width=\columnwidth]{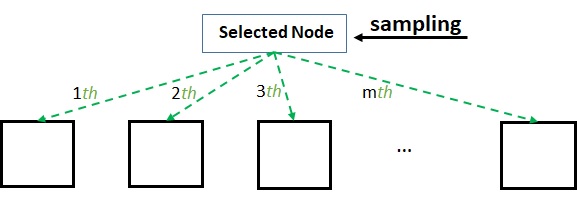}}
\vspace*{-3mm}
\caption{SOO samples the function at all $m$ children nodes while our sampling strategy samples the function only at the parent node (the node selected for expansion). As a result, our strategy requires only one function evaluation irrespective of the value of $m$.}.
\label{sampling}
\end{center}
\vspace*{-6mm}
\end{figure}
\paragraph{Partitioning Procedure} Unlike SOO based algorithms including BaMSOO and IMGPO which often divide a cell into $m$ children cells along the longest side of the cell, we use a novel partitioning procedure which exploits the particular decomposition of the branch factor $m$. Given any $a\ge 2$, $1\le b\le D$ and $m=a^{b}$, where $a,b,m \in \mathbb{N}$, our partitioning procedure, denoted by $P(m;a,b)$, divides the cell along its $b$ longest dimensions into $a$ new cells (see Figure \ref{partition-procedure}). When $b=1$ then our procedure becomes simply the traditional partitioning procedure as in SOO. When $b=D$, all dimensions of the cell are divided which benefits our algorithm. We will explain this further in our convergence analysis.
\paragraph{Tree Expansion Mechanism} The algorithm incrementally builds a tree $\mathcal{T}_{p}$ starting with a node $\mathcal{T}_{0}=\{(0,0)\}$ for $p=1...N$, where $N$ is the evaluation budget. At depth $h$, among all the leaf nodes, denoted by $\mathcal L$ of the current tree, the algorithm selects the node with the maximum GP-UCB value, defined as $\mathcal{U}_{p}(c)=\mu_{p}(c)+\beta_{p}^{1/2}\sigma_{p}(c)$, where $\beta_{p}^{1/2} = \sqrt{2log(\pi^{2}p^{3}/3\eta)}$ and $\eta\in(0,1)$. The tree is expanded by adding $m$ children nodes to the selected node. To force the depth of tree after $p$ expansions, we use a function $h_{max}(p)$ which is also a parameter of the algorithm.
We note that the algorithm uses GP-UCB acquisition function to determine the candidate node, but it only performs a maximisation on a finite, discrete set comprising the leaf nodes at the depth in consideration.
\paragraph{Function Sampling Strategy} Once the node is selected for expansion, unlike previous works that evaluate the objective function at children nodes, we propose to evaluate the objective function only at that node without evaluating the function at its children (see Figure \ref{sampling}).

By this sampling scheme, our algorithm allows to untie the branch factor $m$ from the number of tree expansions. As a result, it allows the use of a large $m$ to achieve finer partitions and to reach closer to the optimum. In fact, using a small $m$ as in previous optimistic optimisation methods also reaches finer partitions, however it needs a large number of tree expansions, and thus still needs a large number of evaluations. Consider an example where $m=2^D$ with $a=2$ and $b=D$. Using the partitioning procedure $P(m;2,D)$, our algorithm partitions a node into $m$ cells with the same granularity. To reach the same granularity as our method, SOO algorithm can use the partitioning with $m=2$ and repeat it $D$ times. However, by this way, SOO always spends $2^{D}$ function evaluations while our algorithm only uses one evaluation. BamSOO and IMGPO algorithms have the similar problem although they improve over SOO - they only evaluate the function at nodes $c$ that satisfy the condition $\mathcal{U}(c)>f^{+}$ (best function value observed thus far) is satisfied. In summary, to reach the same granularity as our method, these algorithms need to spend $m'$ evaluations, where $1\le m'\le2^{D}$ depending on the number of nodes c satisfying the condition $\mathcal{U}(c)>f^{+}$. In contrast, our algorithm only spends one evaluation for all cases regardless of the value of $m$. Together with our partitioning procedure, we leverage this benefit to improve the regret bound for optimisation.
\section{Convergence Analysis}
In this section, we theoretically analyse the convergence of our algorithm. We start with assumptions about function $f$.
\subsection{Assumptions}
To guarantee the correctness of our algorithm, we use the following assumptions.
\begin{assumption}
The function $f$ sampled from $\text{GP}(0,k_{\nu})$, that is a zero mean GP with a Mat\'ern kernel $k_{\nu}$ with $\nu> 4 + \frac{D}{2}$, where $D$ is the number of dimensions.
\end{assumption}
\begin{assumption}
The objective function $f$ has a unique global maximum $x^{*}$.
\end{assumption}
The assumption of a unique maximiser holds with probability one in most non-trivial cases \citep{Freitas12}. Under such assumptions, we obtain the following property.
\begin{property} Assume that the function $f$ is sampled from GP($0, k_{\nu}$) satisfying the Assumption 1 and 2. Then,
\begin{enumerate}
\item $f(x^*) - f(x) \le L_1||x^* - x||^2_2$ for every $x \in \mathcal X$, for some constant $L_1 > 0$,
\item $f(x) \le f(x^*) - L_2||x -x^*||^2$ for every $x \in \mathcal B(x^*, \theta)$ for some constants $L_2, \theta > 0$,
\item $f(x^*) - \text{max}_{x \in \mathcal X   \setminus \mathcal B(x^*, \theta)} f(x) > \epsilon_0$ for some $\epsilon_0 > 0$.
\end{enumerate}
\label{as:5}
\end{property}
We note that all constants $L_1, L_2$ and  $\epsilon_{0}$ are \emph{unknown}. The Property 1.2 can be considered as the quadratic behavior of the objective function in the neighborhood of the optimum. This property holds for every Mat\'ern kernel with $\nu > 2$ as argued by \citep{Freitas12} and \citep{WangSJF14}. The result closest to ours is that of IMGPO \citep{Kawaguchi16} that achieves an exponential regret bound $e^{-\sqrt{N}}$. However, their work requires a type of quadratic behavior in the whole search space (as represented in Assumption 2 in their paper) which is quite strong. Compared to it, our assumption is weaker which only requires the quadratic behavior of the function in a neighborhood of the optimum. Despite this, we will show that our algorithm can improve their regret bound.
\subsection{Convergence Analysis}
For the theoretical guarantee, we follow the principle of the optimism in the face of uncertainty as in \citep{munos2011}. The basic idea is to construct the set of expandable nodes at each depth $h$, called the expansion set. We do this in Section 5.2.1. Quantifying the size of the expansion set is a key step in this principle. We do this in Section 5.2.2. Finally, by using upper bounds on the size of these sets, we derive the regret bounds in Section 5.2.3. \textbf{All proofs
are provided in the Supplementary Material}.
\subsubsection{The expansion set}
\begin{definition}
Let the expansion set at depth $h$ be the set of all nodes that could be potentially expanded before the optimal node at depth $h$ is selected for expansion in Algorithm 1. Formally,
$$I_{h}=\{(h,i)| \exists h \le p \le N : \mathcal{U}_{p}(c_{h,i})\ge f(x^{*})-\delta(h;a,b) \}$$,
where $\delta(h;a,b)$ is defined as $$\delta(h;a,b)=L_1Da^{-2\floor{\frac{bh}{D}}},$$ and $\mathcal{U}_{p}(c_{h,i})$ is the upper confidence bound at the center $c_{h,i}$ of node $(h,i)$ after $p$ expansions.
\end{definition}
We note that even though this definition uses $\delta(h;a,b)$ that depends on the unknown metric $L_1$, our BOO algorithm does not need to know this information. The reason we use $\delta(h;a,b)=L_1Da^{-2\floor{\frac{bh}{D}}}$ lies in the following three observations of our partitioning procedure $P(m;a,b)$ in the search space $\mathcal X$. We assume here that $\mathcal X = [0,1]^D$ (this can always be achieved by scaling).
\begin{lemma} Given a cell $A_{h,i}$ at depth $h$, we have that
\begin{enumerate}
  \item the longest side of cell $A_{h,i}$ is at most $a^{-\floor{\frac{bh}{D}}}$, and
  \item the smallest side of cell $A_{h,i}$ is at least $a^{-\ceil{\frac{bh}{D}}}$.
\end{enumerate}
\end{lemma}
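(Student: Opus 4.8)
The plan is to keep a bookkeeping vector. For a cell $A_{h,i}$ at depth $h$, let $k_j$ denote the number of times dimension $j$ has been subdivided along the path from the root to $(h,i)$. Since each subdivision of a dimension replaces an interval by $a$ equal sub-intervals and $\mathcal X=[0,1]^D$, the side length of $A_{h,i}$ in dimension $j$ is exactly $a^{-k_j}$; hence the longest side equals $a^{-\min_j k_j}$ and the smallest side equals $a^{-\max_j k_j}$. Moreover, each expansion via $P(m;a,b)$ increments exactly $b$ of the coordinates by one, so $\sum_{j=1}^{D}k_j=bh$ for every cell at depth $h$. Thus both claims reduce to pinning down $\min_j k_j$ and $\max_j k_j$.

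The heart of the argument is the invariant that the greedy rule "subdivide the $b$ \emph{longest} dimensions of the cell'' keeps the split-count vector balanced, i.e. $\max_j k_j-\min_j k_j\le 1$ at every depth, for every cell. I would prove this by induction on $h$. The base case $h=0$ is trivial, as $(k_1,\dots,k_D)=(0,\dots,0)$. For the inductive step, assume the counts at depth $h$ take at most two consecutive values $t$ and $t+1$ (equivalent to the invariant), with $n_t\ge 1$ coordinates equal to $t$ and $n_{t+1}=D-n_t$ equal to $t+1$. Because $a\ge 2$, the longest sides are precisely the dimensions with the smallest $k_j$, so the rule draws its $b$ subdivisions first from the $n_t$ coordinates equal to $t$. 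If $b\le n_t$, it increments $b$ of them, leaving counts in $\{t,t+1\}$. If $b> n_t$, it increments all $n_t$ of them and then $b-n_t$ of the coordinates equal to $t+1$ --- which is possible since $b-n_t\le D-n_t=n_{t+1}$ by the standing assumption $b\le D$ --- leaving counts in $\{t+1,t+2\}$. Either way the invariant is preserved (and one checks no third tier is ever created). Note the resulting multiset of counts does not depend on how ties among equal-length sides are broken.

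To finish, I would invoke the elementary fact that $D$ integers summing to $bh$ and pairwise differing by at most $1$ must each lie in $\{\lfloor bh/D\rfloor,\lceil bh/D\rceil\}$; furthermore $\min_j k_j\le bh/D$ together with integrality forces $\min_j k_j=\lfloor bh/D\rfloor$, and $\max_j k_j\ge bh/D$ forces $\max_j k_j=\lceil bh/D\rceil$. Substituting into $a^{-\min_j k_j}$ and $a^{-\max_j k_j}$ yields that the longest side of $A_{h,i}$ is $a^{-\lfloor bh/D\rfloor}$ and the smallest side is $a^{-\lceil bh/D\rceil}$, which in particular gives the two stated bounds (with equality, in fact).

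The main obstacle is the inductive step of the balancing invariant, specifically the "spill-over'' case $b>n_t$ in which the rule must descend from the tier of longest sides to the next tier: here it is essential to use $b\le D$ to guarantee there are enough coordinates at level $t+1$ to absorb the remaining $b-n_t$ subdivisions, and to verify that the counts remain within a single pair of consecutive integers afterwards. Everything else --- the side-length bookkeeping and the final floor/ceiling arithmetic --- is routine.
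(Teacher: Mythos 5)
Your proof is correct, and in fact establishes more than the lemma asks for: the longest and smallest sides are \emph{exactly} $a^{-\floor{bh/D}}$ and $a^{-\ceil{bh/D}}$. Both you and the paper argue by induction on $h$, but the routes differ in a meaningful way. The paper inducts directly on the two side-length bounds, splitting into the cases $b=D$ and $b<D$, and in the latter case asserts that dividing a depth-$h$ cell shrinks its longest side by a factor of $a$; that intermediate claim is not literally true when more than $b$ sides are tied for longest (only $b$ of them get divided, so the longest side can stay the same), and the paper's argument only recovers because the floor $\floor{b(h+1)/D}$ happens not to increase in exactly that situation. Your proof sidesteps this by strengthening the inductive invariant: you track the per-dimension split counts $k_j$, show $\sum_j k_j = bh$, and prove that the greedy ``split the $b$ longest sides'' rule keeps $\max_j k_j - \min_j k_j \le 1$, with the spill-over case $b > n_t$ handled correctly via $b \le D$. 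The bounds then drop out of elementary floor/ceiling arithmetic. This is a cleaner and more rigorous decomposition of the same induction, at the modest cost of introducing the bookkeeping vector; nothing is missing.
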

\begin{lemma}
Given a cell $A_{h,i}$ at depth $h$, then we have that $\text{sup}_{x\in A_{h,i}}||x-c_{h,i}|| \le  D^{1/2}a^{-\floor{\frac{bh}{D}}},$ where $c_{h,i}$ is the center of cell $A_{h,i}$.
\end{lemma}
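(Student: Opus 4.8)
The plan is to reduce the statement to a purely geometric fact about axis‑aligned boxes together with the side‑length bound already established in Lemma~1. First I would observe that every cell $A_{h,i}$ produced by the partitioning procedure $P(m;a,b)$ is an axis‑aligned hyperrectangle: the root $\mathcal X=[0,1]^D$ is a box, and each application of $P(m;a,b)$ only ever splits a cell by inserting axis‑parallel hyperplanes that cut $b$ of its coordinate ranges into $a$ equal subintervals. Hence by induction on $h$ every cell is of the form $A_{h,i}=\prod_{j=1}^{D}[u_j,u_j+\ell_j]$ for some side lengths $\ell_1,\dots,\ell_D>0$, and its center $c_{h,i}$ is simply the coordinate‑wise midpoint, $(c_{h,i})_j=u_j+\ell_j/2$.

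Next I would bound the distance from the center to an arbitrary point of the cell. For any $x\in A_{h,i}$ and each coordinate $j$ we have $|x_j-(c_{h,i})_j|\le \ell_j/2$, so
$$\|x-c_{h,i}\|_2^2=\sum_{j=1}^{D}\bigl(x_j-(c_{h,i})_j\bigr)^2\le \frac{1}{4}\sum_{j=1}^{D}\ell_j^2 .$$
Now I invoke Lemma~1.1, which says the longest side of $A_{h,i}$ is at most $a^{-\floor{bh/D}}$; in particular $\ell_j\le a^{-\floor{bh/D}}$ for every $j$. Substituting gives
$$\|x-c_{h,i}\|_2^2\le \frac{1}{4}\,D\,a^{-2\floor{bh/D}},$$
and taking square roots yields $\|x-c_{h,i}\|_2\le \tfrac12 D^{1/2}a^{-\floor{bh/D}}\le D^{1/2}a^{-\floor{bh/D}}$. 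Since this holds for all $x\in A_{h,i}$, taking the supremum over $x$ completes the proof.

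There is essentially no hard step here; the only point requiring mild care is the first one — confirming that the cells remain axis‑aligned boxes under $P(m;a,b)$ so that the ``midpoint = center'' description and the coordinate‑wise bound $|x_j-(c_{h,i})_j|\le\ell_j/2$ are legitimate, and that Lemma~1.1 bounds \emph{every} side (not just one) by $a^{-\floor{bh/D}}$. Once that structural observation is in place, the bound is a one‑line consequence of Pythagoras and Lemma~1, and the crude final inflation of $\tfrac12 D^{1/2}$ to $D^{1/2}$ is only to keep the statement clean.
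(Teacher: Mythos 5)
Your proposal is correct and follows essentially the same route as the paper: bound every side of the axis-aligned cell by $a^{-\floor{bh/D}}$ via Lemma~1 and conclude $\sup_{x\in A_{h,i}}\|x-c_{h,i}\|\le\sqrt{D\,a^{-2\floor{bh/D}}}=D^{1/2}a^{-\floor{bh/D}}$. Your version is slightly more careful (you keep the factor $1/2$ from measuring to the midpoint before discarding it, and you make the box structure explicit), but the argument is the same.
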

\begin{proof}
By Lemma 1, the longest side of a cell at depth $h$ is at most $a^{-\floor{\frac{bh}{D}}}$. Therefore, $\text{sup}_{x\in A_{h,i}}||x-c_{h,i}||\le\sqrt{Da^{-2\floor{\frac{bh}{D}}}}=D^{1/2}a^{-\floor{\frac{bh}{D}}}$.
\end{proof}
We denote a node $(h,i^{*})$ as the \emph{optimal node} at depth $h$ if $x^{*}$ belongs to the cell $A_{h,i^{*}}$.
\begin{lemma}
At a depth $h$, we have that $f(c_{h,i^{*}}) \ge f(x^*) - \delta(h;a,b).$
\end{lemma}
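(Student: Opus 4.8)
The plan is to combine the first item of Property~\ref{as:5} with Lemma~2, applied to the point $x = c_{h,i^*}$. Property~1.1 gives a global upper bound on how fast $f$ can drop away from its maximiser, namely $f(x^*) - f(x) \le L_1 \|x^* - x\|_2^2$ for every $x \in \mathcal X$; this holds in particular for $x = c_{h,i^*}$, the center of the optimal cell. So the task reduces to controlling $\|x^* - c_{h,i^*}\|_2$.

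First I would observe that, by the definition of the optimal node, $x^*$ belongs to the cell $A_{h,i^*}$, and of course its center $c_{h,i^*}$ also belongs to $A_{h,i^*}$. Hence Lemma~2 applies and yields
$$\|x^* - c_{h,i^*}\|_2 \le \sup_{x \in A_{h,i^*}} \|x - c_{h,i^*}\|_2 \le D^{1/2} a^{-\floor{\frac{bh}{D}}}.$$
Squaring this inequality gives $\|x^* - c_{h,i^*}\|_2^2 \le D\, a^{-2\floor{\frac{bh}{D}}}$.

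Substituting this bound into Property~1.1 then gives
$$f(x^*) - f(c_{h,i^*}) \le L_1 \|x^* - c_{h,i^*}\|_2^2 \le L_1 D\, a^{-2\floor{\frac{bh}{D}}} = \delta(h;a,b),$$
and rearranging yields $f(c_{h,i^*}) \ge f(x^*) - \delta(h;a,b)$, as claimed. There is essentially no hard step here: the only point requiring a moment of care is that Lemma~2 must be invoked for the specific cell $A_{h,i^*}$ containing $x^*$, so that both $x^*$ and $c_{h,i^*}$ lie in the same cell and the diameter-type bound is legitimate; everything else is a direct chaining of the two cited results. This lemma is exactly what motivates the definition of $\delta(h;a,b)$ in Definition~2, since it guarantees the optimal node always has true function value within $\delta(h;a,b)$ of the global optimum.
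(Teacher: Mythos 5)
Your proof is correct and follows exactly the paper's argument: apply Property~1.1 at $x = c_{h,i^*}$ and bound $\|x^* - c_{h,i^*}\|$ via Lemma~2, then substitute to get $L_1 D a^{-2\floor{bh/D}} = \delta(h;a,b)$. (Your squaring step is in fact written more cleanly than the paper's, which contains a minor typo writing $\sqrt{\delta(h;a,b)}/L_1$ where $\sqrt{\delta(h;a,b)/L_1}$ is meant.)
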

\begin{proof} By Property 1, $f(x^{*})-f(c_{h,i^{*}})\le L_1||x^{*}-c_{h,i^{*}}||^{2}$. By Lemma 2, $||x^{*}-c_{h,i^{*}}||\le D^{1/2}a^{-\floor{\frac{bh}{D}}}=\sqrt{\delta(h;a,b)}/L_1$. Thus, $f(x^{*})-f(c_{h,i^{*}})\le L_1||x^{*}-c_{h,i^{*}}||^{2}\le\delta(h;a,b)$.
\end{proof}

By Lemma 3 and the fact that $\mathcal{U}_{p}(c_{h,i})\ge f(c_{h,i})$ with high probability, we have that $\mathcal{U}_{p}(c_{h,i})\ge f(x^*) - \delta(h;a,b)$ with high probability. It deduces that the global maximum $x^*$ belongs to the expansion set at every depth with high probability.

The expansion set at a depth $h$ in our approach differs from the ones in works of \citep{munos2011,WangSJF14,Kawaguchi16} which is defined as $I_{h}=\{(h,i) | f(c_{h,i})\ge f(x^{*})-\delta(h;a,b)\}$. More precisely, set $I_{h}$ of their works is a smaller set than the set $I_{h}$ in our work defined above because we have $\mathcal{U}_{p}(c_{h,i})\ge f(c_{h,i})$ with high probability. This bigger $I_{h}$ directly might involve unnecessary explorations and therefore, the algorithm may incur higher regret than that of BamSOO and IMGPO. However, we solve this challenge by leveraging our new partitioning procedure $P(m;a,b)$ with $b=D$, and some results from \citep{kanagawa2018,vakili20} which are presented in the following section.
\subsubsection{An upper bound on the size of the expansion set}
To quantify $|I_h|$, we use a concept called the near-optimality dimension as in \citep{munos2011}. In our context, we define the near-optimality dimension as follows:
\begin{definition}
The near-optimality dimension is defined as the smallest $d \ge 0$ such that there exists $C > 0$ such that for any $\delta(h;a,b)$, the maximal number of disjoint balls the with largest size in a cell at depth $h$ with center in $\mathcal{X}_{\delta(h;a,b)}$ is less than $C(\delta(h;a,b))^{d}$, where $\mathcal{X}_{\delta(h;a,b)}=\{x\in\mathcal{X}|\text{ } \mathcal{U}_{p}(x)\ge f(x^{*})-\delta(h;a,b)\}$.
\end{definition}
With partitioning procedure $P(m;a,b)$ where $a=\mathcal{O}(N^{1/D})$ with $b=D$, we will show $d=0$ (which is equivalent to prove $|I_{h}|\le C$) through the following Theorem 1.
\begin{theorem}[Bound on Expansion Set Size]
Consider a partitioning procedure $P(m;a,b)$ where $a=\mathcal{O}(N^{1/D})$ and $b=D$. Then there exist constants $N_{1}>0$ and $C>0$ such that for every for $N\ge N_{1}$ and $h\ge2$, we have, $|I_{h}|\le C$.
\end{theorem}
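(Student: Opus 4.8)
The plan is to bound $|I_h|$ by counting how many cells $A_{h,i}$ at depth $h$ can have their center $c_{h,i}$ inside the enlarged optimal region $\mathcal{X}_{\delta(h;a,b)}$, and then invoking the near-optimality dimension framework of \citet{munos2011}: since all cells at depth $h$ share (up to the floor/ceil gap in Lemma 1) the same granularity, a bound on the number of such cells is essentially a bound on the number of disjoint balls of the common cell-size that fit in $\mathcal{X}_{\delta(h;a,b)}$, i.e. it shows the near-optimality dimension is $d=0$. So the real work is to show that $\mathcal{X}_{\delta(h;a,b)}$ is contained in a Euclidean ball around $x^*$ whose radius is of the same order as the cell diameter at depth $h$, from which $|I_h| \le C$ for an absolute constant $C$ follows by a volume/packing argument.

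First I would translate the defining inequality $\mathcal{U}_p(x) \ge f(x^*) - \delta(h;a,b)$ into a statement about $f$ itself. This is where the GP/Matérn assumption and the cited results of \citet{kanagawa2018,vakili20} enter: with $\nu > 4 + D/2$, sample paths of $f$ are (almost surely) smooth enough that the posterior standard deviation $\sigma_p(\cdot)$ contracts at a controlled rate in terms of the fill distance of the observed points, and $\beta_p^{1/2} = \sqrt{2\log(\pi^2 p^3/(3\eta))}$ grows only polylogarithmically. Because our sampling strategy evaluates $f$ at the center of every expanded node and the tree is expanded greedily by UCB down to depth $h_{\max}(p)$, after $N \ge N_1$ expansions the centers form a fine enough net that $\beta_p^{1/2}\sigma_p(x)$ is bounded by (a constant times) $\delta(h;a,b)$ uniformly on $\mathcal{X}$ — this is exactly the place the choice $a = \mathcal{O}(N^{1/D})$ is used, since it forces $m = a^D = \mathcal{O}(N)$ cells of comparable size at every shallow depth, so the relevant resolution is reached within budget. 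Consequently $x \in \mathcal{X}_{\delta(h;a,b)}$ implies $f(x) \ge f(x^*) - c\,\delta(h;a,b)$ for some constant $c$.

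Next I would feed this into Property~\ref{as:5}. For $h \ge 2$ and $N$ large, $c\,\delta(h;a,b) = c L_1 D\,a^{-2\lfloor bh/D\rfloor}$ can be made smaller than $\epsilon_0$, so Property~1.3 forces $x \in \mathcal{B}(x^*,\theta)$; then Property~1.2 gives $L_2\|x - x^*\|^2 \le f(x^*) - f(x) \le c\,\delta(h;a,b)$, hence $\|x - x^*\| \le \sqrt{c L_1 D/L_2}\; a^{-\lfloor bh/D\rfloor}$. Thus $\mathcal{X}_{\delta(h;a,b)}$ sits inside a ball of radius $R_h = \kappa\, a^{-\lfloor bh/D\rfloor}$ around $x^*$. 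By Lemma~1 (with $b=D$, so $\lfloor bh/D\rfloor = h$ and $\lceil bh/D\rceil = h$) every side of a cell at depth $h$ equals $a^{-h}$ exactly, so the cells are cubes of side $a^{-h}$ and diameter $\sqrt{D}\,a^{-h}$. A standard packing/volume argument then shows the number of such disjoint cubes meeting a ball of radius $R_h = \kappa\,a^{-h}$ is at most $(2\kappa/ \ ... )^D + \text{const}$, an absolute constant $C$ independent of $h$ and $N$; equivalently the maximal number of disjoint balls of the largest size with center in $\mathcal{X}_{\delta(h;a,b)}$ is at most $C\,(\delta(h;a,b))^0$, so $d=0$ and $|I_h|\le C$.

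The main obstacle is the second step: making precise, with the Matérn-$\nu$ RKHS machinery of \citet{kanagawa2018,vakili20}, that after $N \ge N_1$ greedy UCB expansions the confidence width $\beta_p^{1/2}\sigma_p$ is controlled by $\delta(h;a,b)$ uniformly — i.e. that the greedily-built tree actually realises a fill distance fine enough at every depth $h\ge 2$ simultaneously, and that the smoothness threshold $\nu > 4 + D/2$ is exactly what is needed for the posterior-contraction rate to beat $a^{-2h}$ given $a = \mathcal{O}(N^{1/D})$. Everything after that (Properties 1.2–1.3, Lemma 1 with $b=D$, and the cube-packing count) is routine; the crossover at $h\ge 2$ and $N\ge N_1$ is just to absorb the regime where $\delta$ is not yet below $\epsilon_0$ or $\theta^2$.
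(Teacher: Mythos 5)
Your overall architecture matches the paper's: convert the UCB condition into a bound on $f(x^*)-f(c_{h,i})$, use Property 1.3 to force the centers into $\mathcal{B}(x^*,\theta)$ once that bound drops below $\epsilon_0$, use Property 1.2 to trap the centers in a small Euclidean ball around $x^*$, and finish with a disjoint-ball packing count at the cell granularity $a^{-h}$ (with $b=D$). That part is fine. The genuine gap is in the step you yourself flag as "the main obstacle": how $\beta_p^{1/2}\sigma_p$ is controlled. You propose that after $N$ expansions "the centers form a fine enough net that $\beta_p^{1/2}\sigma_p(x)$ is bounded by a constant times $\delta(h;a,b)$ \emph{uniformly on} $\mathcal{X}$". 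This would fail: the greedy UCB expansion deliberately does \emph{not} cover $\mathcal{X}$ densely (that is the whole point of optimistic partitioning), so no useful global fill-distance bound holds, and none is needed. The paper's mechanism (its Lemma 4) is purely local: any node $(h,i)$ present in the tree has an evaluated parent center $c_{h-1,j}\in\mathcal{D}_{p-1}$ at distance at most the depth-$(h-1)$ cell diameter, and the Wu--Schaback/Kanagawa fill-distance bound applied to the small ball around that single nearby observation gives $\sigma_p(c_{h,i})\le C_1(\delta(h-1;a,b))^{\nu/2-D/4}$.

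A second, related inaccuracy: even with the correct local bound, what one obtains is $\beta_N^{1/2}(\delta(h-1;a,b))^{\nu/2-D/4}\sim \beta_N^{1/2}a^{-(\nu-D/2)(h-1)}$, which is \emph{not} of the form "constant times $\delta(h;a,b)$" a priori. The radius of the trapping ball is therefore $\sqrt{\bigl(2\beta_N^{1/2}C_1(\delta(h-1))^{\nu/2-D/4}+\delta(h)\bigr)/L_2}$, and the packing ratio against cells of side $a^{-h}$ contains the factor $\beta_N^{1/2}a^{2h-(\nu-D/2)(h-1)}$. Showing this is bounded by a constant requires the explicit computation $2h-(\nu-D/2)(h-1)\le 4+D/2-\nu<0$ for $h\ge 2$, followed by $a=\mathcal{O}(N^{1/D})$ to make $\sqrt{\log N}\cdot N^{(4+D/2-\nu)/D}$ vanish for $N\ge N_1$. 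You correctly identify that $\nu>4+D/2$ is the operative condition, but the proposal asserts the conclusion ($R_h=\kappa a^{-h}$) rather than deriving it, and the route you sketch for the variance bound would not yield it. You should also make explicit the high-probability event $\mathcal{L}_p(c)\le f(c)\le\mathcal{U}_p(c)$ (the paper's Lemma 6), which is an assumption threaded through the whole argument.
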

To prove Theorem 1, we estimate a bound on variance function $\sigma_{p}$ in terms of the function $\delta(h-1;a,b)$ through the following lemma.
\begin{lemma}
Assuming that node $(h,i)$ at the depth $h \ge 1$ is evaluated at the $p$-th evaluation, where $p\ge h$. Thus,
$$\sigma_{p}(c_{h,i})\le C_1(\delta(h-1;a,b))^{\nu/2-D/4},$$
where $C_1$ is a constant.
\end{lemma}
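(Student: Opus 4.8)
The plan is to reinterpret $\sigma_p(c_{h,i})$ as a scattered-data power function in the RKHS of $k_\nu$ and then exploit the Sobolev smoothness of the Mat\'ern kernel. Let $X$ denote the set of points already evaluated at the moment $(h,i)$ is selected, so that $|X| = p-1$; note that $c_{h,i}$ itself is \emph{not} in $X$, since $f(c_{h,i})$ is queried only after the GP used in the UCB has been fitted. In the noiseless setting the posterior variance has the variational form
\[
\sigma_p^2(c_{h,i}) \;=\; \min_{\alpha\in\mathbb R^{|X|}}\Bigl\|\,k_\nu(c_{h,i},\cdot)-\sum_{x_j\in X}\alpha_j\,k_\nu(x_j,\cdot)\,\Bigr\|_{\mathcal H_{k_\nu}}^{2},
\]
so $\sigma_p(c_{h,i})$ equals the power function $P_X(c_{h,i})$ of $k_\nu$. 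I would then invoke the norm equivalence between $\mathcal H_{k_\nu}(\mathbb R^D)$ and the Sobolev space $H^{\nu+D/2}(\mathbb R^D)$ \citep{kanagawa2018}, together with the classical Sobolev power-function estimate in the form used by \citet{vakili20}: there are constants $c_0,h_0>0$ such that on any region $\Omega$ satisfying an interior cone condition and any point set whose fill distance in $\Omega$ is at most $h_0$, one has $P_X(x)\le c_0\,(h_{X,\Omega})^{\nu-D/2}$ for all $x\in\Omega$.

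Given that estimate, it suffices to exhibit a region $\Omega\ni c_{h,i}$, of diameter at least a fixed multiple of the parent-scale length $a^{-\floor{b(h-1)/D}}$, inside which $X$ has fill distance $\mathcal O\!\left(a^{-\floor{b(h-1)/D}}\right)$. Plugging this into the estimate and using $a^{-\floor{b(h-1)/D}}=\bigl(\delta(h-1;a,b)/(L_1D)\bigr)^{1/2}$ (from Lemma~1 and the definition of $\delta$) yields $\sigma_p(c_{h,i})\le C_1\,(\delta(h-1;a,b))^{(\nu-D/2)/2}=C_1\,(\delta(h-1;a,b))^{\nu/2-D/4}$, with $C_1$ depending only on $\nu$, $D$, $L_1$, the kernel hyper-parameters and the covering constant, as claimed. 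For $h=1$ the statement is trivial, since $\delta(0;a,b)=L_1D$ is a constant while $\sigma_p\le\sqrt{k_\nu(0)}$ always.

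The crux, and the step I expect to be the main obstacle, is the geometric/combinatorial claim that by the time $(h,i)$ is selected the evaluated centres $X$ already fill such a region down to the parent-scale resolution. Here the hypotheses enter: since $p\ge h$, the entire ancestor chain of $(h,i)$ is already in $X$; and with $b=D$, a single expansion of the grandparent $(h-2,\cdot)$ partitions its cell --- whose side is $a^{-\floor{b(h-2)/D}}=a\cdot a^{-\floor{b(h-1)/D}}$, coarser than the target resolution by the factor $a\ge 2$ --- into $a^D$ congruent depth-$(h-1)$ subcells of side $a^{-\floor{b(h-1)/D}}$. I would then argue, from the SOO-style level-by-level expansion rule together with the depth cap $h_{\max}(p)$ (which forbids descending to depth $h$ before enough shallower expansions have been performed), that sufficiently many of these depth-$(h-1)$ subcells have been expanded for their centres --- all lying in $X$ --- to form an $\mathcal O\!\left(a^{-\floor{b(h-1)/D}}\right)$-cover of a depth-$(h-2)$ cell containing $c_{h,i}$; that cell then serves as $\Omega$, supplying both the cone condition and the required fill distance. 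Carrying out this counting, and extracting from it the threshold $N_1$ that reappears in Theorem~1, is the technical heart; the choice $a=\mathcal O(N^{1/D})$ is precisely what makes $a^D$ comparable to the evaluation budget so that such a cover is affordable, and once it is in hand the kernel-analytic part above is routine.
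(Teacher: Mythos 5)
You have the right kernel-analytic machinery (posterior variance as a power function, norm equivalence of the Mat\'ern RKHS with a Sobolev space, the Wu--Schaback/Kanagawa fill-distance bound $\sigma_p(x)\le C'\,\mathrm{FD}(\cdot)^{\nu-D/2}$), but the step you flag as ``the technical heart'' --- showing that by the time $(h,i)$ is evaluated the past queries form an $\mathcal O(a^{-\floor{b(h-1)/D}})$-cover of some region containing $c_{h,i}$ --- is a genuine gap, and I do not think it can be closed. The algorithm is adaptive: in each sweep it expands at most one node per depth, so it can descend a single branch without ever densely populating a depth-$(h-2)$ cell at the depth-$(h-1)$ resolution; and with $a=\mathcal O(N^{1/D})$ a single such cover already costs $a^D=\mathcal O(N)$ evaluations, so demanding it at every depth is incompatible with the budget. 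No such covering property is available, and the paper never uses one.

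The missing idea is the \emph{localized nearest-point} form of the fill-distance bound (the paper's intermediate lemma, taken from Lemma~4 of \citet{vakili20}): for any $x$, let $c_j$ be the nearest already-evaluated point, restrict attention to the ball $\mathcal B(c_j,\lVert x-c_j\rVert)$ and to the evaluated points inside it; the fill distance of that subset in that ball is trivially at most $\lVert x-c_j\rVert$ because $c_j$ itself covers the whole ball. Applying the Sobolev estimate there and using monotonicity of the posterior variance under discarding observations gives $\sigma_p(x)\le C'\min_{c_i\in\mathcal D_{p-1}}\lVert x-c_i\rVert^{\nu-D/2}$. With this in hand, no cover is needed: a node is only created after its parent is expanded, and expansion entails evaluating the parent's center, so $c_{h-1,j}\in\mathcal D_{p-1}$ and $\lVert c_{h,i}-c_{h-1,j}\rVert\le D^{1/2}a^{-\floor{b(h-1)/D}}=(\delta(h-1;a,b)/L_1)^{1/2}$ by Lemma~2. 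One nearby point --- the parent --- suffices, and the claimed bound follows immediately with $C_1=C'(L_1D)^{D/4-\nu/2}$. Your reduction of the exponent $\nu-D/2$ on the distance to $\nu/2-D/4$ on $\delta$ is correct; it is only the geometric premise feeding into it that needs to be replaced.
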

This lemma holds by applying some results about the closeness between the samples from a GP (in Bayesian setting) and the elements of an RKHS (in non-Bayesian setting) as in \citep{kanagawa2018,vakili20}, to the structured search space as in our approach. This technique is novel compared to BaMSOO and IMGPO's ones. We refer to our Supplementary Material in Section 3.

Using this result and the condition $\nu > 4 + D/2$, we achieve a constant bound on the size of set $I_h$.
\subsubsection{Bounding the simple regret}
Next we use the upper bound on $|I_{h}|$ at every depth $h$ to derive a bound on the simple regret $r_{N}$.

Let us use $h_{p}^{*}$ to denote the depth of the deepest expanded node in the branch containing $x^{*}$ after $p$ expansions. Similar to the lemma 2 in \citep{munos2011}, we can bound the sum of $|I_h|$ as follows.
\begin{lemma}
Assume that $f(c)\le\mathcal{U}(c)$ for all centers $c$ of optimal nodes at all depths $0\le h\le h_{max}(p)$ after $p$ expansions. Then for any depth $0\le h\le h_{max}(p)$, whenever $p\ge h_{max}(p)\sum_{i=0}^{h}|I_{i}|$, we have $h_{p}^{*}\ge h$.
\end{lemma}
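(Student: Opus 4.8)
The plan is to mirror the proof of Lemma~2 of \citep{munos2011}, with function values replaced by the GP-UCB values $\mathcal U_p$ that Algorithm~1 uses for node selection, and exploiting that here each expansion costs exactly one evaluation, so $p$ counts both evaluations and expansions. I would argue by contradiction: suppose $h^*_p<h$ while $p\ge h_{max}(p)\sum_{i=0}^h|I_i|$. Two monotonicities are used throughout: $h^*_p$ is non-decreasing in $p$ (an expanded node stays expanded), so $h^*_{p'}<h\le h_{max}(p)$ for all $p'\le p$; and $\delta(\cdot;a,b)$ is non-increasing in its first argument, since $\lfloor bh/D\rfloor$ is non-decreasing in $h$.

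The core of the argument will be a per-round charging claim: in each full pass of the FOR loop (a \emph{round}), some node of $\bigcup_{i=0}^{h}I_i$ that was still a leaf at the start of the round gets expanded. Fix a round, let $h^*=h^*_{p'}<h$ at its start, and consider the instant the pass reaches depth $h^*+1\le h\le h_{max}(p)$ (it does reach it, since the already-expanded optimal node at depth $h^*$ has children there). At that instant the optimal node $(h^*+1,i^*)$ is still a leaf, and by Lemma~3 with the assumed confidence bound $f\le\mathcal U_{p'}$ we get $\mathcal U_{p'}(c_{h^*+1,i^*})\ge f(c_{h^*+1,i^*})\ge f(x^*)-\delta(h^*+1;a,b)$; hence the depth-$(h^*+1)$ leaf $(h^*+1,i)$ of maximal $\mathcal U_{p'}$ satisfies the same bound and lies in $I_{h^*+1}$. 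If $(h^*+1,i)$ is expanded, we are done. Otherwise its UCB is below the current $v_{max}$, so a node $(\ell,j)$ with $\ell\le h^*$ was expanded earlier in the same round with $f(c_{\ell,j})=v_{max}>f(x^*)-\delta(h^*+1;a,b)\ge f(x^*)-\delta(\ell;a,b)$ by monotonicity of $\delta$; by the confidence bound $\mathcal U(c_{\ell,j})\ge f(c_{\ell,j})>f(x^*)-\delta(\ell;a,b)$ at the step it was expanded, so $(\ell,j)\in I_\ell\subseteq\bigcup_{i=0}^h I_i$. In both cases the charged node was a leaf when the round began, and since a node is expanded at most once, distinct rounds are charged to distinct nodes.

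To finish, the charging bounds the number of rounds by $|\bigcup_{i=0}^h I_i|\le\sum_{i=0}^h|I_i|$; each round expands at most one node per depth $1\le h'\le h_{max}(\cdot)\le h_{max}(p)$ (the root is no longer a leaf after the first expansion), i.e.\ at most $h_{max}(p)$ expansions per round. Hence $p\le h_{max}(p)\sum_{i=0}^h|I_i|$ whenever $h^*_p<h$, contradicting the hypothesis, so $h^*_p\ge h$.

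I expect the main obstacle to be the charging step, specifically the sub-case in which depth $h^*+1$ is reached but nothing is expanded there because $v_{max}$ is already large: one must trace back to the node that set $v_{max}$ and certify its membership in some $I_\ell$. In \citep{munos2011} this is immediate because $I_\ell$ is defined by the $f$-value itself, whereas our $I_\ell$ is defined through $\mathcal U_p$, so the step genuinely relies on the bound $f(c)\le\mathcal U_p(c)$ at the relevant centers together with the monotonicity of $\delta$; keeping the charged node fresh --- a leaf at the start of its round, so that distinct rounds consume distinct nodes --- is the accompanying bookkeeping, as is the mild dependence of $h_{max}$ on the current expansion count, handled exactly as in \citep{munos2011}.
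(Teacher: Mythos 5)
Your proof is correct and follows the same skeleton as the paper's (which itself mirrors Lemma~2 of \citep{munos2011}): the shared key step is that any leaf selected at depth $h^*_p+1$ before the optimal node there is expanded has $\mathcal U_{p'}(c_{h^*_p+1,i})\ge \mathcal U_{p'}(c_{h^*_p+1,i^*})\ge f(c_{h^*_p+1,i^*})\ge f(x^*)-\delta(h^*_p+1;a,b)$ and hence lies in $I_{h^*_p+1}$, which the paper isolates as an auxiliary lemma. The organization differs: the paper runs an induction on depth and then asserts that after $h_{max}(p)\,|I_{h_0+1}|$ further expansions the optimal node at depth $h_0+1$ must be expanded, implicitly assuming each sweep expands a node at depth $h_0+1$; you instead do a single global charging argument and explicitly handle the sweep in which nothing at depth $h^*_p+1$ clears the $v_{max}$ gate, charging that sweep to the shallower node $(\ell,j)$ that set $v_{max}$ and showing $(\ell,j)\in I_\ell$ via the monotonicity of $\delta$. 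That extra case is a real gap in the paper's (and Munos's) terse accounting, so your version is the more complete one. The one caveat: certifying $(\ell,j)\in I_\ell$ requires $\mathcal U(c_{\ell,j})\ge f(c_{\ell,j})$ at an \emph{evaluated, generally non-optimal} center, whereas the lemma's stated hypothesis only assumes the bound at centers of optimal nodes; this is harmless where the lemma is invoked, since Lemma~6 supplies $f(c)\le\mathcal U_p(c)$ for all $c\in A_p$, i.e.\ for all evaluated points as well, but you should either strengthen the hypothesis you quote or cite that event explicitly.
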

We use $A_{p}$ to denote the set of all points evaluated by the algorithm and all centers of optimal nodes of the tree $\mathcal{T}_{p}$ after $p$ evaluations.
\begin{lemma}
Pick a $\eta\in(0,1)$. Set $\beta_{p}=2log(\pi^{2}p^{3}/3\eta)$ and $\mathcal L_p(c) = \mu_p(c) - \beta^{1/2}_p\sigma_p(c)$. With probability $1-\eta$, we have
$$\mathcal L_p(c) \le f(c) \le \mathcal U_p(c),$$
for every $p \ge 1$ and for every $c\in A_{p}$.
\end{lemma}
We now use Lemmas 3-6 to derive a simple regret for the proposed algorithm. Here, the simple regret $r_p$ after $p$ expansions is defined as $r_p = f(x^*) - \text{max}_{1 \le i \le p}f(x_i)$, where $x_i$ is the $i$-th sample.
\begin{theorem}[Regret Bound]
Assume that there is a partitioning procedure $P(m;a,b)$ where $a = \mathcal{O}(N^{1/D})$, $b=D$ and $2 \le m  < \sqrt{N}-1$. Let the depth function $h_{max}(p) = \sqrt{p}$. We consider $m^2 < p \le N$, and define $h(p)$ as the smallest integer $h$ such that $ h \ge \frac{\sqrt{p}-m-1}{C} + 2,$
where $C$ is the constant defined by Theorem 1. Pick a $\eta \in(0,1)$. Then for every $N \ge N_1$, the loss is bounded as
\begin{eqnarray*}
r_{p} & \le & \delta(\text{min}\{h(p), \sqrt{p}+1\};a,b) + \\
& + & 4C_1\beta^{1/2}_p(\delta(\text{min}\{h(p)-1, \sqrt{p}\};a,b))^{\nu/2-D/4},
\end{eqnarray*}
with probability $1-\eta$, where $N_1$ is the constant defined in Theorem 1, $C_1$ is the constant defined in lemma 4 and $\beta_N = \sqrt{2log(\pi^{2}N^{3}/3\eta)}$.
\end{theorem}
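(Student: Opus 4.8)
The plan is to work on the high-probability event supplied by Lemma~6 (which is what yields the ``with probability $1-\eta$'' in the conclusion) and then run the usual optimism-in-the-face-of-uncertainty argument in two stages: first lower-bound $h_p^*$, the depth of the deepest expanded node on the branch containing $x^*$, using that every expansion set has at most constant size; then convert that depth into a regret bound using that the center of the optimal node at depth $h_p^*$ (and, if processed, of its children at depth $h_p^*+1$) has been probed.

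For the first stage I would fix once and for all the event of Lemma~6, on which $\mathcal L_p(c)\le f(c)\le\mathcal U_p(c)$ for every $p\ge 1$ and every $c\in A_p$; in particular the hypothesis of Lemma~5 is satisfied. Using $h_{max}(p)=\sqrt p$, the trivial bounds $|I_0|=1$, $|I_1|\le m$, and Theorem~1 (so $|I_i|\le C$ for $2\le i$, valid because $N\ge N_1$ and $h(p)\ge 2$, the latter guaranteed by $m^2<p$ and $2\le m<\sqrt N-1$), I would bound $\sum_{i=0}^{h}|I_i|\le m+1+(h-1)C$. Plugged into Lemma~5, the requirement $p\ge h_{max}(p)\sum_{i=0}^{h}|I_i|$ is then implied by $\sqrt p\ge m+1+(h-1)C$, i.e.\ by $h\le\frac{\sqrt p-m-1}{C}+1$. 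Taking the largest integer $h$ satisfying this together with $h\le h_{max}(p)=\sqrt p$, and comparing it with the definition of $h(p)$ as the smallest integer with $h\ge\frac{\sqrt p-m-1}{C}+2$, Lemma~5 gives
$$h_p^*\ \ge\ \min\{\,h(p)-1,\ \sqrt p\,\}.$$
The ``$+2$'' and the implicit ceiling in the definition of $h(p)$ are precisely what make this clean; this step is routine but must be done carefully to avoid an off-by-one.

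For the second stage I would bound $r_p$ in two complementary ways. Because the optimal node at depth $h_p^*$ was expanded, $f(c_{h_p^*,i^*})$ was evaluated, so Lemma~3 gives the crude bound $r_p\le f(x^*)-f(c_{h_p^*,i^*})\le\delta(h_p^*;a,b)$. To obtain the sharper stated form I would examine the moment the algorithm processes depth $h_p^*+1$, whose optimal child is a leaf. The leaf selected there has $\mathcal U_p$ value at least $\mathcal U_p(c_{h_p^*+1,i^*})\ge f(c_{h_p^*+1,i^*})\ge f(x^*)-\delta(h_p^*+1;a,b)$ by Lemma~3. If that leaf is then expanded, its probed value is at least its lower confidence bound $\mathcal U_p(c)-2\beta_p^{1/2}\sigma_p(c)\ge f(x^*)-\delta(h_p^*+1;a,b)-2\beta_p^{1/2}\sigma_p(c)$, and Lemma~4 applied to a node at depth $h_p^*+1$ gives $\sigma_p(c)\le C_1(\delta(h_p^*;a,b))^{\nu/2-D/4}$; if instead no leaf at that depth is expanded, the running $v_{max}$ already exceeds $f(x^*)-\delta(h_p^*+1;a,b)$, which is stronger; and if $h_p^*=h_{max}(p)$, so depth $h_p^*+1$ is never processed, the crude bound applies. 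In every case one gets $r_p\le\delta(h_p^*+1;a,b)+4C_1\beta_p^{1/2}(\delta(h_p^*;a,b))^{\nu/2-D/4}$ (the constant $4$ absorbing the slack in the case analysis). Substituting $h_p^*\ge\min\{h(p)-1,\sqrt p\}$ and using that $\delta(\cdot;a,b)$ is non-increasing in depth turns $\delta(h_p^*+1;a,b)$ into $\delta(\min\{h(p),\sqrt p+1\};a,b)$ and $(\delta(h_p^*;a,b))^{\nu/2-D/4}$ into $(\delta(\min\{h(p)-1,\sqrt p\};a,b))^{\nu/2-D/4}$, which is exactly the claimed bound.

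The main obstacle I anticipate is not any single estimate but the bookkeeping in the second stage: making the case analysis around depth $h_p^*+1$ airtight — in particular the expansion test $\mathcal U_p(c_{h,i})\ge v_{max}$ in Algorithm~1, whether a sweep reaching depth $h_p^*+1$ actually occurs after the optimal node at depth $h_p^*$ is created, and the cap $h_{max}(p)=\sqrt p$ — and making sure the constants $C$ (Theorem~1), $C_1$ (Lemma~4), the $m+1$ coming from $|I_0|$ and $|I_1|$, and the floor/ceiling in the definition of $h(p)$ compose to yield precisely the two $\min\{\cdot\}$ expressions rather than something off by one. A secondary point to keep consistent throughout is that $N\ge N_1$ (so Theorem~1 and Lemma~4 are in force for every depth $h\ge 2$ used) together with the ranges $m^2<p\le N$ and $2\le m<\sqrt N-1$ are invoked wherever needed.
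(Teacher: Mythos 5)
Your proposal is correct and follows essentially the same two-stage argument as the paper: the same bound $\sum_{l=0}^{h(p)-1}|I_l|\le 1+m+C(h(p)-2)\le\sqrt p$ fed into Lemma 5 to obtain $h_p^*\ge\min\{h(p)-1,\sqrt p\}$, followed by the same optimism chain at depth $h_p^*+1$ combining Lemmas 3, 4 and 6 and the monotonicity of $\delta(\cdot;a,b)$. The only real difference is cosmetic: you argue directly on the leaf selected at depth $h_p^*+1$ (which in the main case even yields the sharper constant $2C_1\beta_p^{1/2}$), whereas the paper routes the inequality through the deepest expanded node $(h,j)$, accumulating two $\sigma$ terms and hence the factor $4$; the edge cases you flag (the cap $h_{max}(p)=\sqrt p$ and whether a sweep actually reaches depth $h_p^*+1$ before the $p$-th expansion) are handled no more carefully in the paper's own proof.
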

\begin{proof}
By Theorem 1, the definition of $h(p)$ and the facts that $|I_0| = 1$ and $|I_1| \le m$, we have
\begin{eqnarray*}
\sum_{l=0}^{h(p)-1} |I_l| & = & |I_0| + |I_1| + (|I_2| + ... + |I|_{h(p)-1}) \\
& \le & 1 + m + C(h(p)-2) \le \sqrt{p}
\end{eqnarray*}
Therefore, $\sum_{l=0}^{h(p)-1}|I_{l}|\le \sqrt{p}$. By Lemma 5 when $h(p)-1\le h_{max}(p)= \sqrt{p}$, we have $h_{p}^{*}\ge h(p)-1$. If  $h(p)-1 > \sqrt{p}$ then $h_{p}^{*}= h_{max}(p) = \sqrt{p}$ since the BOO algorithm does not expand nodes beyond depth $h_{max}(p)$. Thus, in all cases, $h_{p}^{*}\ge\text{min}\{h(p)-1, \sqrt{p}\}$.

Let $(h,j)$ be the deepest node in $\mathcal{T}_{p}$ that has been expanded by the algorithm up to $p$ expansions. Thus $h\ge h_{p}^{*}$. By Algorithm 1, we only expand a node when its GP-UCB value is larger than $v_{max}$ which is updated at Line 10 of Algorithm 1. Thus, since the node $(h,j)$ has been expanded, its GP-UCB value is at least as high as that of the some node $(h_{p}^{*}+1, j)$ at depth $h_{p}^{*}+1$, such that (1) node $(h_{p}^{*}+1, o)$ has been evaluated at some $p'$-th expansion before node $(h,j)$  and (2)  $(h_{p}^{*}+1, o) \in  \text{argmax}_{(h_p^{*} +1,i) \in \mathcal L} \mathcal U_{p'}(c_{h_p^{*} +1,i})$ (see Line 4 of Algorithm 1).

Thus, by using Lemma 3 and Lemma 6, we can achieve with probability $1 -\eta$ that $f(x^{*})-U_{p}(c_{h,j})\le \delta(h_p^{*}+1;a,b) + 2\beta^{1/2}_{p'}\sigma_{p'}(c_{h_p^{*} +1,o})$.

Further by Lemma 6, we have $U_p(c_{h,j}) =  \mu_p(c_{h,j}) + \beta^{1/2}_p\sigma_{p}(c_{h,j}) =  \mathcal L_p(c_{h,j}) + 2\beta_p^{1/2}\sigma_{p}(c_{h,j}) \le f(c_{h,j}) + 2\beta_p^{1/2}\sigma_{p}(c_{h,j})$, with a probability $1 -\eta$.

Combining these two results, we have $f(x^*) - f(c_{h,j}) \le \delta(h_p^{*}+1;a,b) + 2\beta^{1/2}_{p'}\sigma_{p'}(c_{h_p^{*} +1,o}) + 2\beta_p^{1/2}\sigma_{p}(c_{h,j})$ with a probability $1 -\eta$.

\begin{figure*}[ht]
\centering
\subfigure{\includegraphics[scale=1.0,width=.32\textwidth, height= .13\textheight]{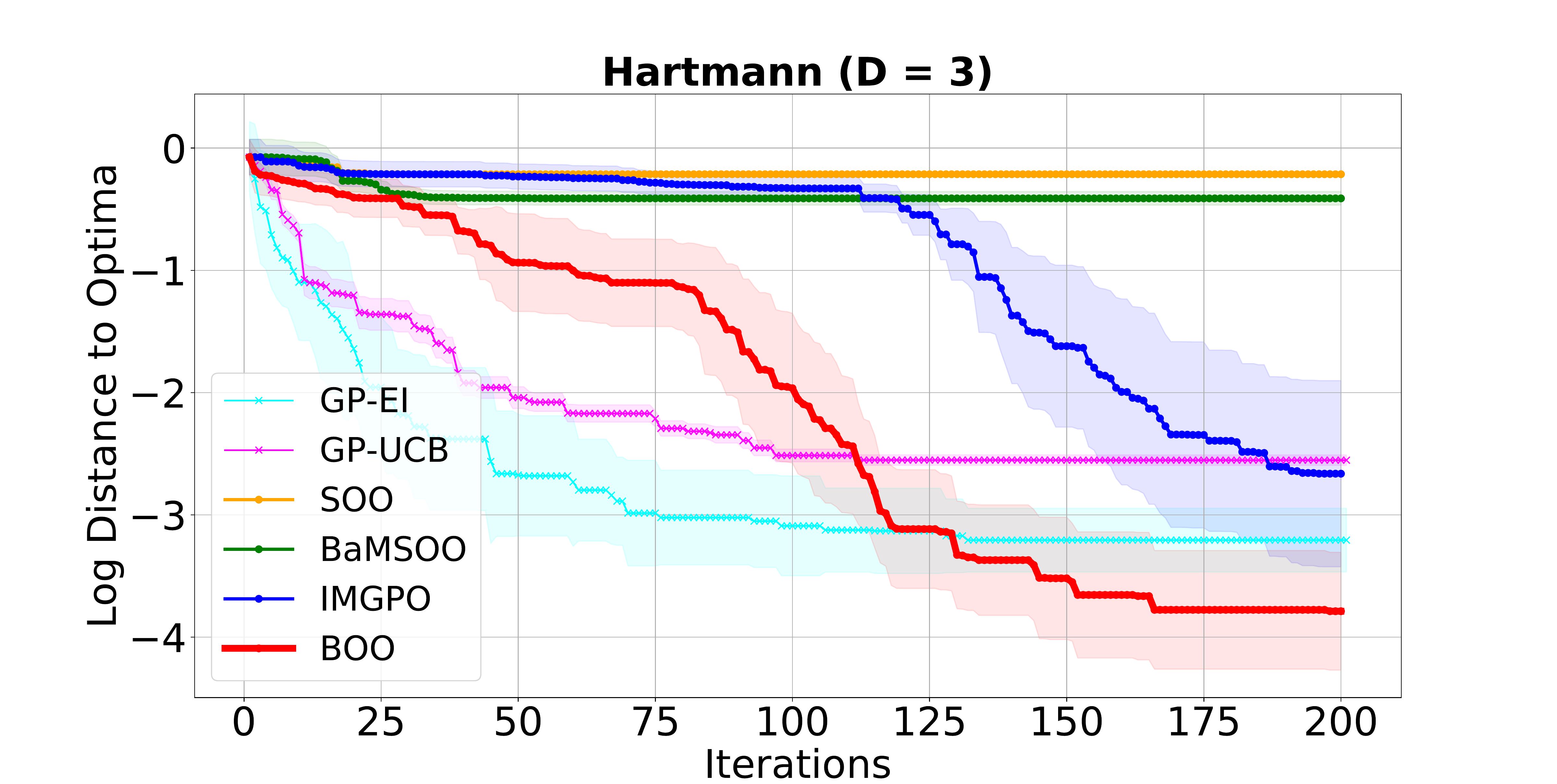}
}\hfill
\subfigure{\includegraphics[scale=1.0,width=.32\textwidth, height= .13\textheight]{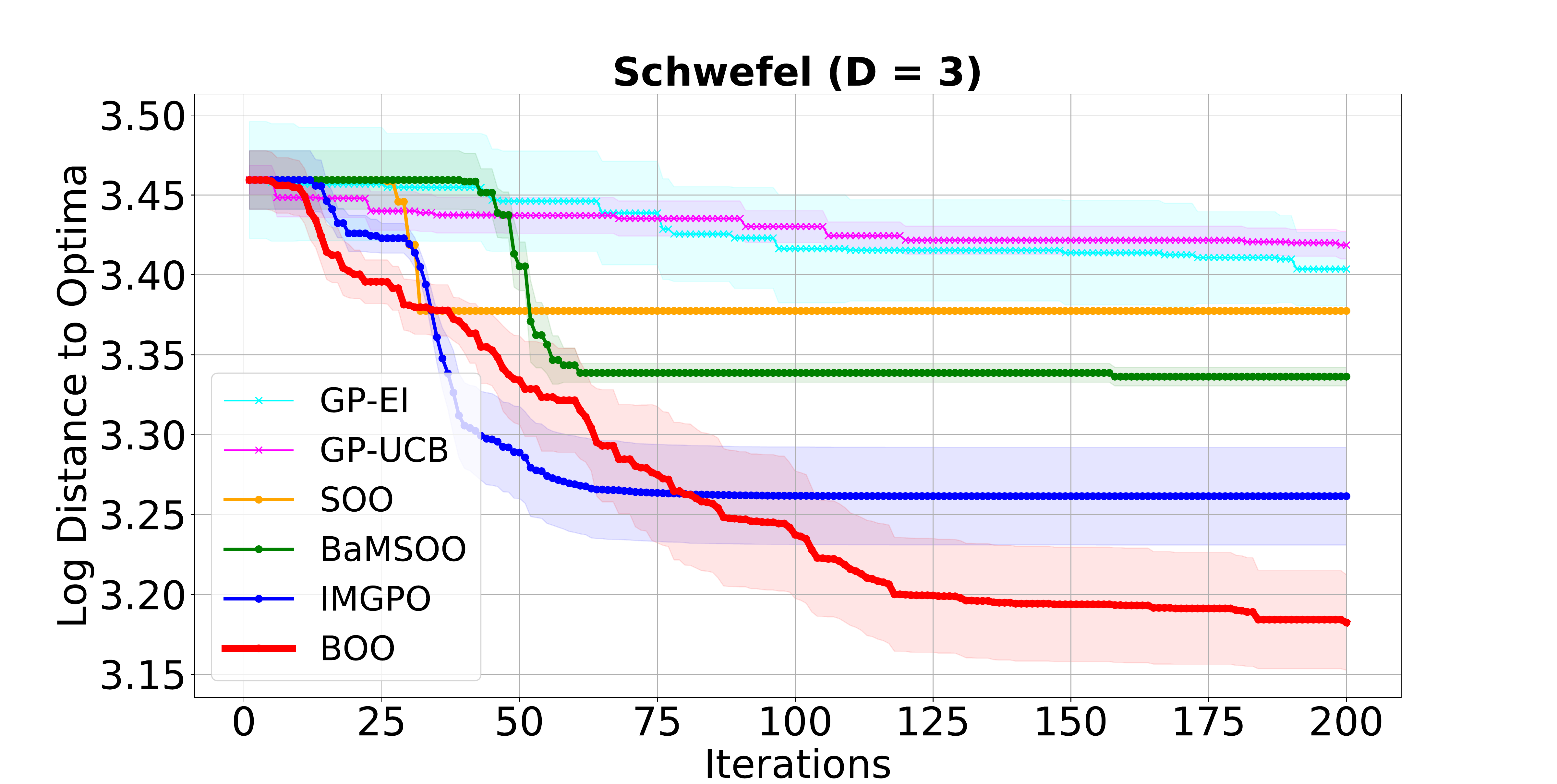}
}\hfill
\subfigure{\includegraphics[scale=1.0,width=.32\textwidth, height= .13\textheight]{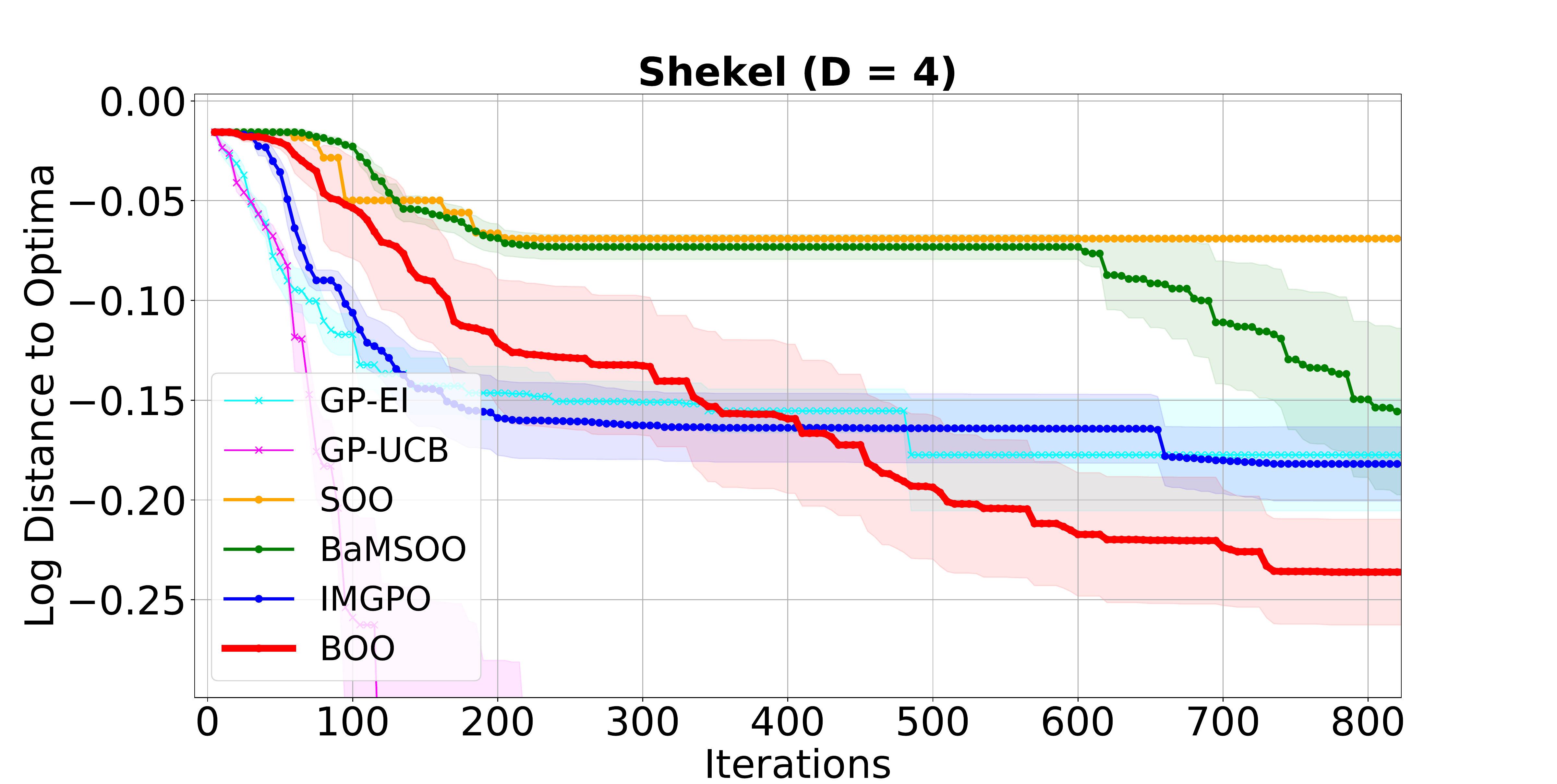}
}
\vspace*{-3mm}
\caption{Comparison of methods for Hartmann3 ($D=3$), Schwefel ($D=3$), and Shekel ($D=4$) functions.}
\label{comparison-of-baseline}
\vspace*{-3mm}
\end{figure*}
Finally, by using Lemma 4 to bound $\sigma_{p'}(c_{h_p^{*} +1,o})$ and $\sigma_{p}(c_{h,j})$ and using the fact that the function $\delta(*;a,b)$ decreases with their depths, we achieve
\begin{eqnarray*}
r_p & \le & f(x^*) - f(c_{h,j}) \\
& \le  & \delta(\text{min}\{h(p), \sqrt{p}+1\};a,b) + \\
& +  & 4C_1\beta^{1/2}_p(\delta(\text{min}\{h(p)-1, \sqrt{p}\};a,b))^{\nu/2-D/4}
\end{eqnarray*}
with a probability $1 -\eta$. \textbf{We provide the complete proof in the Supplementary Material}.
\end{proof}
Finally, we present an improved and simpler expression for the regret bound through the following corollary from Theorem 2.
\begin{corollary}
Pick a $\eta \in(0,1)$. Then, there exists a constant $N_2 > 0$ such that for every $N \ge N_2$, the simple regret of the proposed BOO algorithm with the partitioning procedure $P(m;a,b)$ where $a=\floor{(\frac{\sqrt{N}}{2})^{\frac{1}{D}}}$, $b=D$, is bounded as
$$r_N \le \mathcal O(N^{-\sqrt{N}}),$$ with probability $1-\eta$, where $N$ is the number of sampled points.
\end{corollary}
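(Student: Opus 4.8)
The plan is to specialise Theorem 2 to $p = N$ with the prescribed parameters $a = \lfloor(\sqrt N/2)^{1/D}\rfloor$, $b = D$, $m = a^{D}$, and then bound the two terms in the resulting inequality. First I would verify the hypotheses of Theorem 2 for $N$ large: since $m = a^{D} \le \sqrt N/2$ we have $2 \le m < \sqrt N - 1$ and $m^{2} \le N/4 < N$, while $a = \Theta(N^{1/(2D)}) = \mathcal{O}(N^{1/D})$, and $N \ge N_{1}$ is assumed. Invoking Theorem 2 at $p = N$ then gives
\begin{align*}
r_{N} \le{}& \delta(\min\{h(N),\sqrt N + 1\};a,b) \\
 &{}+ 4C_{1}\beta_{N}^{1/2}\bigl(\delta(\min\{h(N)-1,\sqrt N\};a,b)\bigr)^{\nu/2 - D/4}.
\end{align*}

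The second step is to locate the depth $h(N)$. Since $b = D$, $\lfloor bh/D\rfloor = h$ for integer $h$, so $\delta(h;a,b) = L_{1}D\,a^{-2h}$, which is non-increasing in $h$; here $L_{1}$ is the (unknown but fixed) constant from Property 1. From the definition of $h(N)$ as the smallest integer with $h \ge (\sqrt N - m - 1)/C + 2$ and the bound $m \le \sqrt N/2$, I obtain $h(N) \ge (\sqrt N/2 - 1)/C + 2$, hence $h(N) \ge \Omega(\sqrt N)$. Consequently there is a constant $c_{0} > 0$ (depending only on $C$) with $\min\{h(N),\sqrt N + 1\} \ge c_{0}\sqrt N$ and $\min\{h(N)-1,\sqrt N\} \ge c_{0}\sqrt N$ once $N$ exceeds a threshold $N_{2}$. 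Monotonicity of $\delta$ then yields $\delta(\min\{h(N),\sqrt N + 1\};a,b) \le L_{1}D\,a^{2}\,a^{-2c_{0}\sqrt N}$ and an analogous bound for the $\delta$ inside the second term.

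It then remains to show $a^{-2c_{0}\sqrt N} = N^{-\Theta(\sqrt N)}$. Writing $a \ge \tfrac12(\sqrt N/2)^{1/D}$, I would expand
\[
a^{-2c_{0}\sqrt N} \le \bigl(\tfrac12(\sqrt N/2)^{1/D}\bigr)^{-2c_{0}\sqrt N} = N^{-c_{0}\sqrt N/D}\cdot 2^{\Theta(\sqrt N)},
\]
and the crucial point is that the exponential factor $2^{\Theta(\sqrt N)}$ is subpolynomial, $2^{\Theta(\sqrt N)} = N^{\Theta(\sqrt N/\log N)} = N^{o(\sqrt N)}$, so it is absorbed into the rate; the polynomial prefactors $L_{1}D$, $a^{2} = \Theta(N^{1/D})$, and $\beta_{N}^{1/2} = O(\sqrt{\log N})$ are similarly negligible. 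Hence the first term is $N^{-\Theta(\sqrt N)}$. For the second term the same estimate gives $\delta(\min\{h(N)-1,\sqrt N\};a,b) = N^{-\Theta(\sqrt N)}$, and raising a quantity that is eventually below $1$ to the power $\nu/2 - D/4 > 2$ only decreases it further; so the first term dominates and $r_{N} \le N^{-c\sqrt N}$ for some constant $c > 0$, which is the asserted $\mathcal{O}(N^{-\sqrt N})$ bound.

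The main obstacle I expect is accounting rather than anything deep: one must check that each $2^{\Theta(\sqrt N)}$-type factor — arising from the $1/2$ and the floor in the definition of $a$, and from the constant $C$ of Theorem 1 entering $h(N)$ — is genuinely subpolynomial and therefore does not spoil the $N^{-\Theta(\sqrt N)}$ rate. One should also note, to make the $\mathcal{O}(\cdot)$ in the statement precise, that it means $r_{N} \le N^{-c\sqrt N}$ for a constant $c = c(D,C) > 0$; the exponent cannot in general be taken to be exactly $\sqrt N$.
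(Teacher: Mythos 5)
Your proposal is correct and follows essentially the same route as the paper: invoke Theorem 2 at $p=N$, lower-bound $h(N)$ by $\Omega(\sqrt N)$ via $m\le\sqrt N/2$, and substitute $a=\lfloor(\sqrt N/2)^{1/D}\rfloor$ into $\delta(h;a,b)=L_1Da^{-2h}$; the paper just organises this as an explicit two-case analysis of each $\min$ rather than a unified $c_0\sqrt N$ lower bound. Your closing caveat that the bound is really $N^{-c\sqrt N}$ with $c=c(C,D)$ rather than exponent exactly $\sqrt N$ is also consistent with the paper's Remark 1, which records the rate as $N^{-\sqrt N/(CD)+\cdots}$.
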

\paragraph{Remark 1.} A detailed expression for the regret bound of Corollary 1 is that $r_N \leq \mathcal O(C_1L_1D/2^D N^{-\frac{\sqrt{N}}{CD} + \frac{2}{CD} - \frac{2}{D}})$, where $C_1$ is a constant (defined in Lemma 4) given $L_1$ and $D$, and $C$ is a constant (defined in Theorem 1) given $L1, L_2,\eta$ and $\nu$.  This complete formula is extracted from the proof of Corollary 1 in Supplementary Material.
\paragraph{Remark 2.} The closest result to ours is the regret bound of IMGPO which has the worst case order $\mathcal{O}(e^{-\sqrt{N}})$. As can be seen, we have improved the regret bound. Our result improves over previous works because we leverage a large value of the branch factor $m$ and our new partitioning procedure with $b = D$ where all dimensions of a cell are divided.
\section{Experiments}
To evaluate the performance of our BOO algorithm, we performed a set of experiments involving optimisation of three benchmark functions and three real applications. We compared our method against five baselines which have theoretical guarantees: (1) GP-EI \citep{Bull11}, (2) GP-UCB \citep{Srinivas12}, (3) SOO \citep{munos2011}, (4) BaMSOO \citep{WangSJF14}, (5) IMGPO \citep{Kawaguchi16}.
\paragraph{Experimental settings} All implementations are in Python. For each test function, we repeat the experiments 15 times. We plot the mean and a confidence bound of one standard
deviation across all the runs. We used Mat\'ern kernel with $\nu = 4 + (D+1)/2$ which satisfies our assumptions, and estimated the kernel hyper-parameters automatically from data using Maximum Likelihood Estimation. All methods using GP (including GP-EI, GP-UCB, BaMSOO, IMGPO and our method) were started from randomly initialised points to train GP. For GP-EI and GP-UCB which follow the standard BO, we used the DIRECT algorithm to maximise the acquisition functions and computed $\beta_t$ for GP-UCB as suggested in \citep{Srinivas12}. For tree-based space partitioning methods, we follow their implementations to set the branch factor $m$. Note that these methods use a small $m$ due to the negative correlation. SOO and BaMSOO use $m=2$ while IMGPO uses $m=3$. The depth of search tree $h_{max}(p)$ in SOO and BaMSOO was set to $\sqrt{p}$ as suggested in \citep{munos2011,WangSJF14}. The parameter $\Xi_{n}$ in IMGPO was set to 1.
\vspace*{-2mm}
\begin{table}[ht]
\centering
\caption{Average CPU time (in seconds) for the experiment with each test function.}
\resizebox{7cm}{!}{%
\begin{tabular}[t]{lccc}
\hline Algorithm  & Hartmann  & Shekel & Schwefel\\ \hline
GP-EI            & 200.39   & 740.20  & 250.79 \\
GP-UCB           & 880.43   & 1640.87 & 180.96 \\
SOO              & 0.51     & 0.40    & 0.11  \\
BaMSOO           & 39.02    & 87.62  & 28.67 \\
IMGPO            & 23.23    & 80.53   & 34.65 \\
BOO              & 27.21    & 91.22   & 41.01 \\ \hline
\end{tabular}}
\label{Table_Time}
\vspace*{-4mm}
\end{table}%
\subsection{Optimisation of Benchmark Functions}
\begin{figure*}[ht]
\centering
\subfigure{\includegraphics[scale=1.0,width=.32\textwidth, height= .13\textheight]{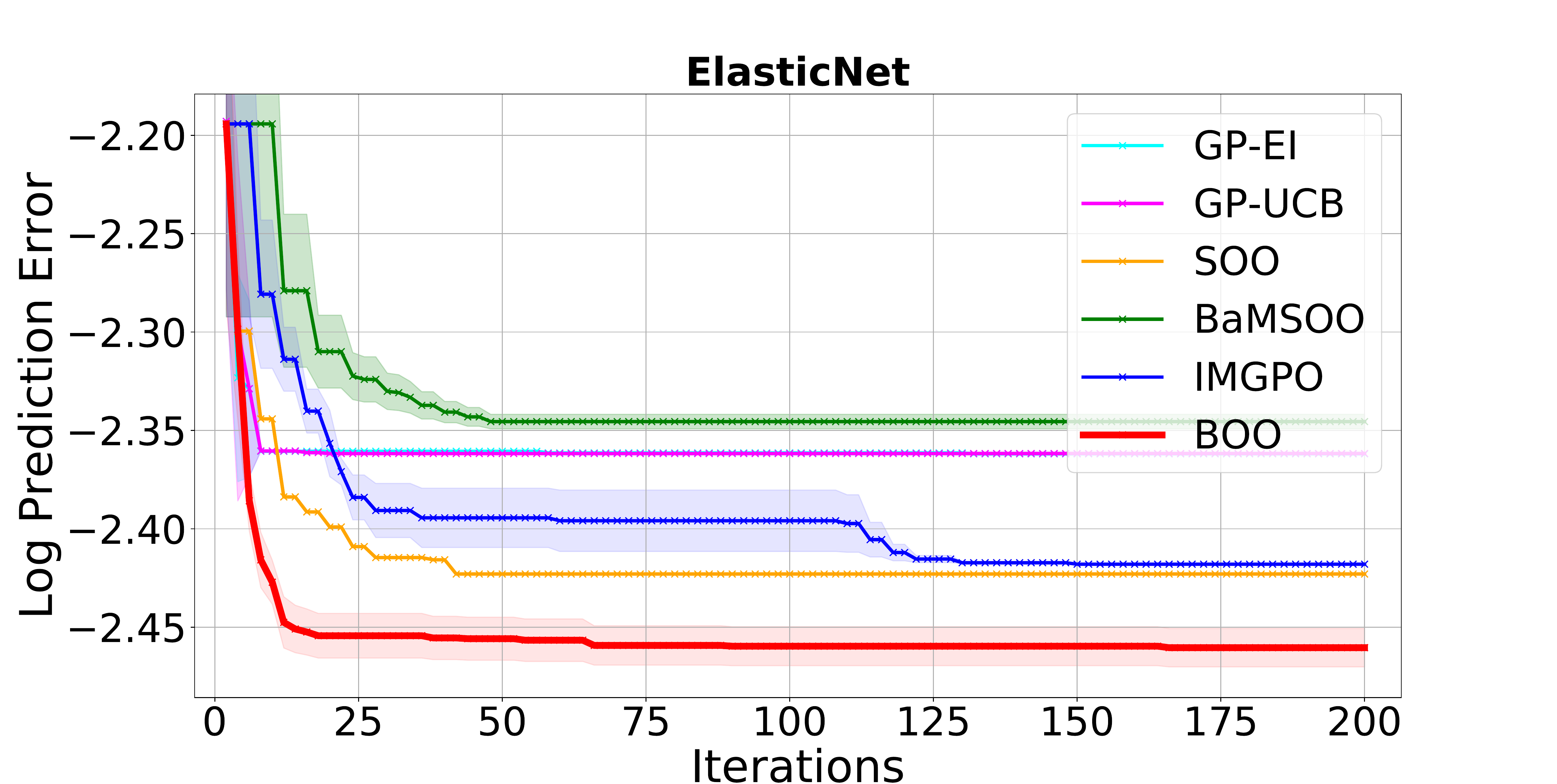}
}\hfill
\subfigure{\includegraphics[scale=1.0,width=.32\textwidth, height= .13\textheight]{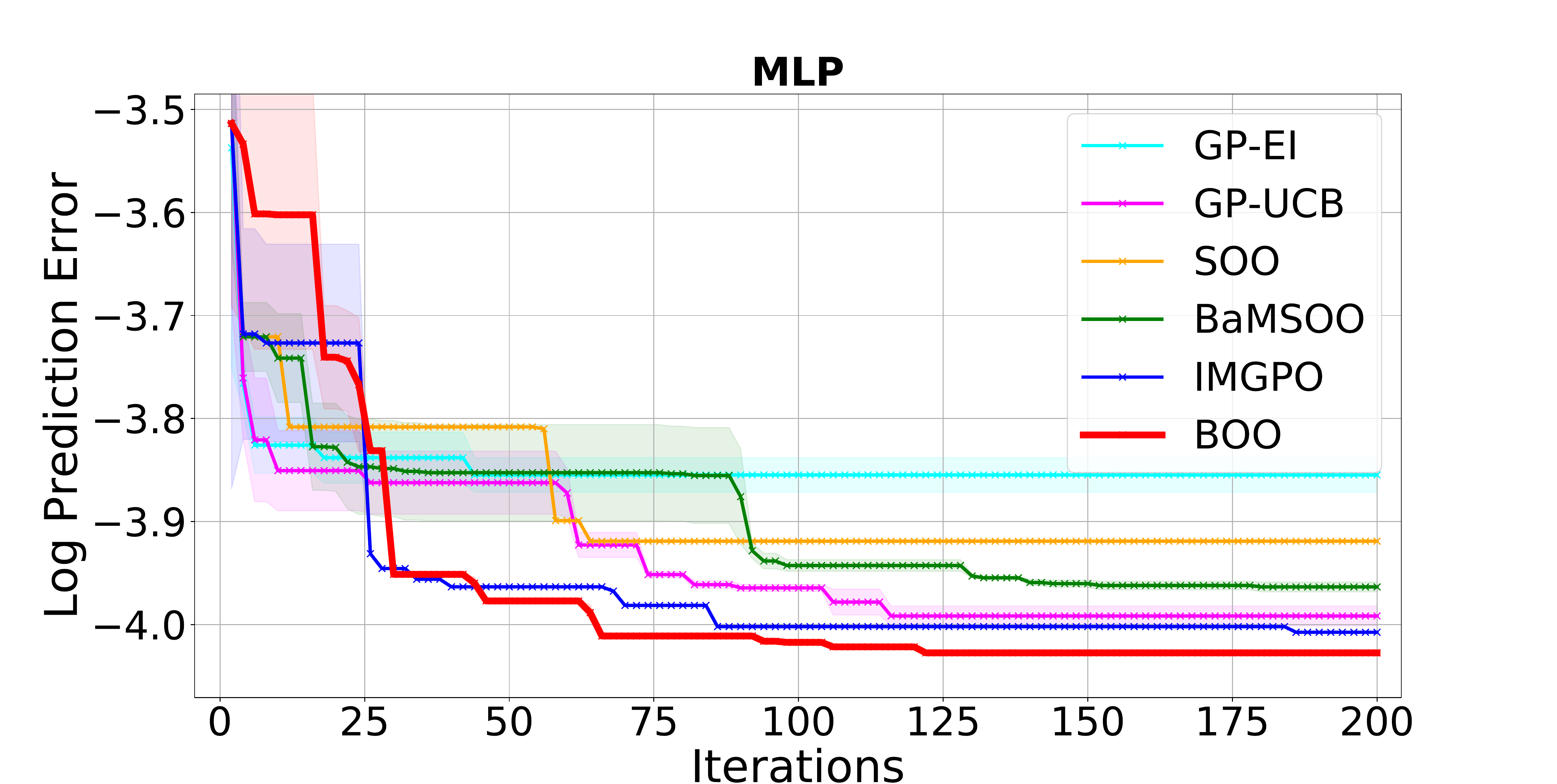}
}\hfill
\subfigure{\includegraphics[scale=1.0,width=.32\textwidth, height= .13\textheight]{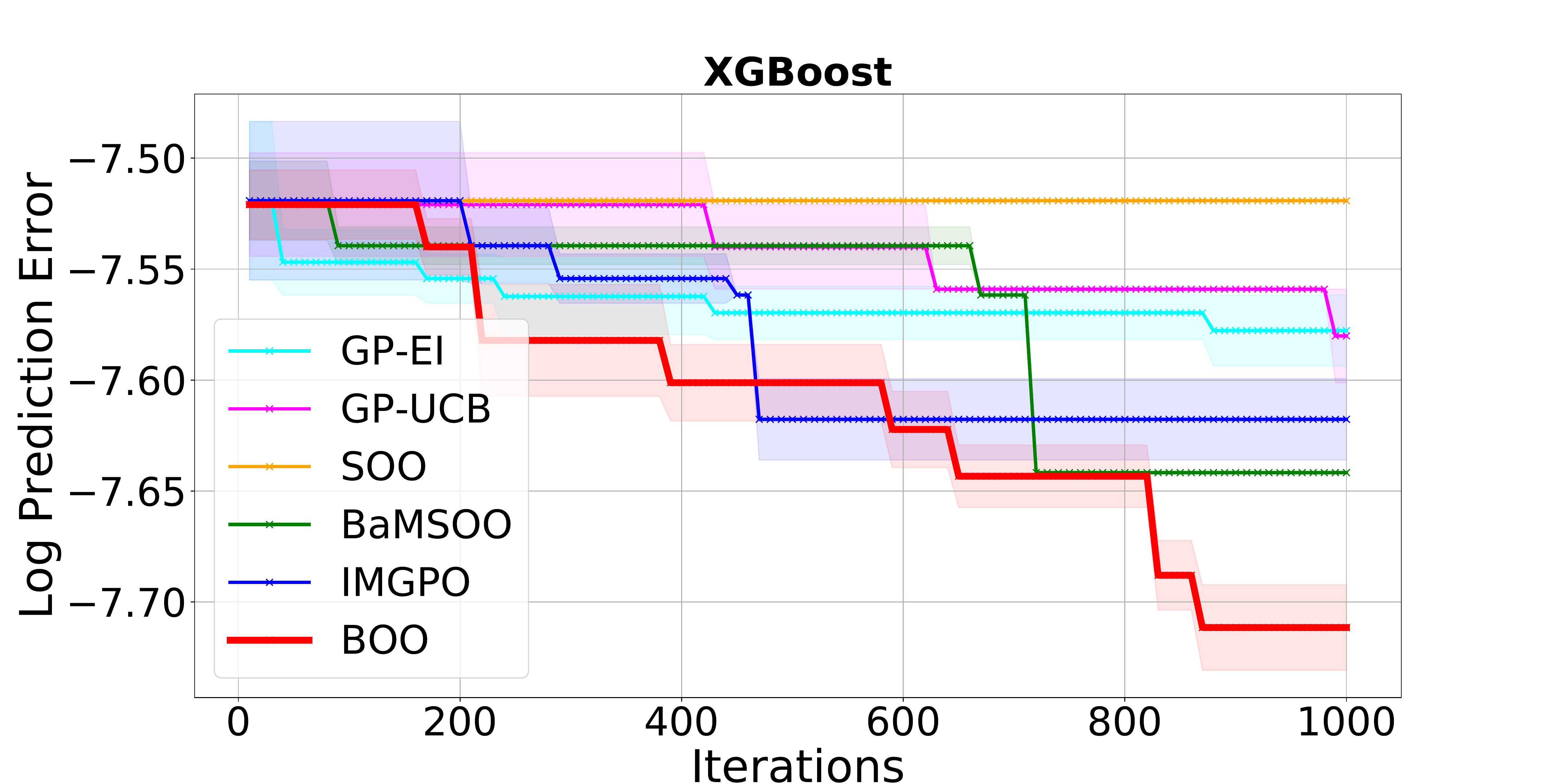}
}
\vspace*{-3mm}
\caption{Log prediction error on MNIST dataset for different algorithms ElasticNet, MLP and XGBoost.}
\label{Log-prediction-error}
\vspace*{-3mm}
\end{figure*}
We first demonstrate the efficiency of our algorithm on standard benchmark functions: Hartmann3 ($D =3$), Schwefel ($D =3$) and Shekel ($D =4$). The evaluation metric is the log distance to the true optimum: $log_{10}(f(x^{*})-f^{+})$, where $f^{+}$ is the best function value sampled so far.

For our BOO algorithm, we choose parameters $m, a, b$ and $N$ as per Corollary 1 which suggests using $N$ so that $a = \mathcal O((\frac{\sqrt{N}}{2})^{1/D}) \ge 2$. For Hartmann3 ($D =3)$ and Schwefel ($D =3$) we use partitioning procedure $P(8;2,3)$ with $N = 200$. For Shekel function ($D=4$), we use $P(16;2;4)$ with $N = 800$ so that $(\frac{\sqrt{N}}{2})^{1/D} \approx 2$. We follow Lemma 6 in our theoretical analysis to set $\beta_{p}=2log(\pi^{2}p^{3}/3\eta)$, with $\eta = 0.05$.

Figure \ref{comparison-of-baseline} shows the performance of our algorithm compared to the baselines.  Our method outperforms all baselines for all considered synthetic functions in general with only one exceptional case of Shekel function where GP-UCB performs better our method. Compared to BaMSOO and IMGPO which are tree-based optimisation algorithms, the efficiency of BOO is gained by using a large $m$ and sampling strategies similar to BO (as shown in Section 4.2). Compared to GP-EI and GP-UCB, our algorithm takes advantage of searching a point to be evaluated at each iteration. BOO searches it only in a promising region (as done in Line 4 and 5 in Algorithm 1) rather in a whole search space. Moreover, unlike GP-EI and GP-UCB, BOO avoids the searching by optimisation at each iteration which cannot be obtained sufficiently and accurately given a limited computation budget.
\paragraph{On Computational Effectiveness} Our method performs competitively against BaMSOO and IMGPO in terms of computational effectiveness (as shown in Table \ref{Table_Time}). Our method uses a large value of $m$ and hence it takes slightly more time to compute UCBs of all nodes. It performs slower than IMGPO but much faster than GP-EI, GP-UCB which require the maximisation of the acquisition function in a continuous space.
\subsection{Hyperparameter Tuning for Machine Learning Models}
To further validate the performance of our algorithm, we tune hyperparameter tuning of three machine learning models on the MNIST dataset and Skin Segmentation dataset, then plot the log prediction error.

\paragraph{Elastic Net} A regression method has the $L_{1}$ and $L_{2}$ regularisation parameters. We tune $w_{1}$ and $w_{2}$ where $w_{1}>0$ expresses the magnitude of the regularisation penalty while $w_{2}\in [0,1]$ expresses the ratio between the two penalties. We tune $w_{1}$ in the normal space while $w_{2}$ is tuned in an exponent space (base 10). The search space is the domain $[0,1]\times[-3,-1]$. We implement the Elastic net model by using the function SGDClassifier in the scikit-learn package.
\paragraph{Multilayer Perceptron (MLP)} We consider a 2-layer MLP with 512 neurons/layer and optimize three hyperparameters: the learning rate $l$ and the $L_{2}$ norm regularisation parameters $l_{r1}$ and $l_{r2}$ of the two layers (all tuned in the exponent space (base 10)). The search space is $[-6,-1]^{3}$. The model is trained with the Adam optimizer in 20 epochs with batch size 128.

Using MNIST dataset, we train the models with this hyperparameter setting using the 55000 patterns and then test the model on the 10000 patterns. The algorithms suggests a new hyperparameter setting based on the prediction accuracy on the test dataset. We set $N =200$. We use $P(4;2,2)$ for ElasticNet and $P(8;2,3)$ for MLP as per Corollary 1.

As seen in Figure \ref{Log-prediction-error}, for Elastic Net, our algorithm outperforms the all baselines. For MLP, our algorithm achieves slightly lower prediction errors compared to the baselines because there is a little room to improve where the prediction error of our method for MLP attains $1.8\%$.
\begin{table}[ht]
\centering
\caption{Hyperparameters for XGBoost.}
\begin{tabular}[t]{lcc}
\hline Variables  & Min & Max \\ \hline
learning rate            & 0.1 & 1  \\
max depth           & 5 & 15  \\
subsample              & 0.5   & 1    \\
colsample           & 0.1  & 1   \\
gamma            & 0  & 10
\end{tabular}
\label{Table_2}
\vspace*{-4mm}
\end{table}%
\paragraph{XGBoost classification} We demonstrate a classification task using XGBoost \citep{Chen16} on a Skin Segmentation dataset \footnote{https://archive.ics.uci.edu/ml/datasets/skin+segmentation}. The Skin Segmentation dataset is plit into $15\%$ for training and $85\%$ for testing for a classification problem. There are 5 hyperarameters for XGBoost which is summarized in Table 2. Our proposed BOO is the best solution, outperforming all the baselines by a wide margin.
\vspace*{-2mm}
\section{Conclusion}
We have presented a first practical algorithm which can achieve an exponential regret bound with tightest order $N^{-\sqrt{N}}$ for Baysian optimisation under the assumption that the objective function is sampled from a Gaussian process with a Mat\'ern kernel with $\nu > 4 +\frac{D}{2}$. Our partitioning procedure and the sampling strategy differ from the existing ones. We have demonstrated the benefits of our algorithm on both synthetic and real world experiments. In the future we plan to extend our work to high dimensions and noisy setting.

\nocite{langley00}
\bibliography{Hung-research}
\bibliographystyle{icml2021}

\appendix
\renewcommand\thesection{\arabic{section}}
\onecolumn

\centerline{ \textbf{\huge Supplementary Material} }

\section{Review of SOO, BaMSOO, IMGPO algorithms}
In the first section of the Supplementary Material, we provide the details of SOO \citep{munos2011} and BamSOO \citep{WangSJF14}. The main difference between our proposed BOO algorithm and these algorithms are in the following blue color lines.
\begin{algorithm}[ht]
\caption{The SOO Algorithm \citep{munos2011}} \label{alg:alg}
\textbf{Input}: Parameter $m$\\
\textbf{Initialisation}: Set $\mathcal T_0 = \{(0,0)\}$ (root node). Set $p = 1$. Sample initial points to build $\mathcal D_{0}$.\\
\begin{algorithmic}[1]
\WHILE{True}
    \STATE Set $v_{max} = - \infty$
    \FOR{$h =0$ to \text{min}(\text{depth}($\mathcal T_p$), $h_{max}(p)$)}
        \STATE \textcolor{blue}{Among all leaves $(h,j)$ of depth $h$, select $(h,i) \in \text{argmax}_{(h,j) \in \mathcal L} f(c_{h,j})$}         \IF{$f(c_{h,i}) \ge v_{max}$}
            \STATE Expand node $(h,i)$ by adding $m$ children $(h+1, i_j)$ to tree $\mathcal T_p$
            \STATE \textcolor{blue}{Evaluate all $m$ functional values $f(c_{h+1,i_j})$, where $(h+1, i_j)$ are children of $(h,i)$}
            \STATE Update $v_{max} = f(c_{h,i})$
            \STATE Update $p = p + 1$
        \ENDIF
    \ENDFOR
\ENDWHILE
\end{algorithmic}
\end{algorithm}
\begin{algorithm}[ht]
\caption{The BaMSOO Algorithm \citep{WangSJF14}} \label{alg:alg}
\textbf{Input}: Parameter $m$\\
\textbf{Initialisation}: Set $g_{0,0} = f(c_{0,0})$, $f^+ = g_{0,0}$, $t=1$, $p = 1$, $\mathcal T_0 = \{(0,0)\}$ (root node). Sample initial points to build $\mathcal D_{0}$.\\
\begin{algorithmic}[1]
\WHILE{True}
    \STATE Set $v_{max} = - \infty$
    \FOR{$h =0$ to \text{min}(\text{depth}($\mathcal T_p$), $h_{max}(p)$)}
        \STATE \textcolor{blue}{Among all leaves $(h,j')$ of depth $h$, select $(h,j) \in \text{argmax}_{(h,j') \in \mathcal L} g(c_{h,j'})$}
        \IF{$g(c_{h,j}) \ge v_{max}$}
            \FOR{$i = 0$ to $k-1$}
                \STATE Update $p = p + 1$
                \IF{$\mathcal U_p(c_{h+1, mj+i}) \ge f^+$}
                    \STATE \textcolor{blue}{Set $g(c_{h+1,mj+i}) = f(c_{h+1, mj+i})$}
                    \STATE Set $t = t+1$
                    \STATE $\mathcal D_t = \{\mathcal D_{t-1}, (c_{h+1, mj+i}, g(c_{h+1, mj+i}))\}$
                \ELSE
                    \STATE Set $g(c_{h+1,mj+i}) = \mathcal L_p(c_{h+1, mj+i})$
                \ENDIF
                \IF{$g(c_{h+1,mj+i}) > f^+$}
                    \STATE Set $f^+ = g(c_{h+1,mj+i})$
                \ENDIF
            \ENDFOR
            \STATE Add the children of $(h,j)$ to $\mathcal T_p$
            \STATE Set $v_{max} = g(c_{h,j})$
        \ENDIF
    \ENDFOR
\ENDWHILE
\end{algorithmic}
\end{algorithm}
As we can see, SOO and BaMSOO select a node to be expanded at line 4 in each algorithm.  At depth $h$, among the leaf nodes, SOO selects the node with the maximum functional value, BaMSOO selects the node with the maximum value of function $g$. The function $g$ is defined at line 9 and line 13 in Algorithm 3. Otherwise, the proposed BOO selects the node with the maximum GP-UCB value.

Once a node is selected to be expanded, SOO needs to sample the function at all $m$ children nodes (at line 7 in Algorithm 2), BamSOO needs to sample the function at $m'$ children nodes (at line 9 in Algorithm 3), where $0 \le m' \le m$ depending on the condition at line 9 in Algorithm 3. In the worst case, $m' =m$, BamSOO spends $m$ evaluations like SOO. Otherwise, our sampling strategy samples the function only at the parent node. As a result, our strategy requires only one function evaluation irrespective of the value of $m$.  IMGPO \citep{Kawaguchi16} is quite similar to BaMSOO except two differences. Frist, IMGPO do not force the tree to a maximum depth of $h_{max}(p)$ like SOO, BamSOO. Second, IMGPO add a strategy to reduce the computation when searching in the tree is inefficient. Please see their paper \citep{Kawaguchi16} for details.
\subsection{Strict Negative Correlation}
As we discussed in section 4.1 of the main paper. Most of tree-based optimistic optimisation algorithms like SOO, StoSOO \citep{ValkoCM13}, BaMSOO and IMGPO face a \emph{strict negative correlation} between the branch factor $m$ and the number of tree expansions given a fixed function evaluation budget $N$. In this part, we provide a summary table showing the simple regret (in the worst case) of these algorithms given a fixed function evaluation budget $N$.
\begin{table}[ht]
\centering
\begin{tabular}[t]{lc}
\hline Algorithm & Simple Regret \\ \hline
SOO              & $\mathcal O(e^{\sqrt{\frac{N}{m}}})$ \\
BaMSOO           & $\mathcal O((\frac{N}{m})^{-\frac{2\alpha}{D(4- \alpha)}})$   \\
IMGPO            & $\mathcal O(e^{\sqrt{\frac{N}{m}}})$  \\ \hline
\end{tabular}
\caption{The simple regret of SOO depends on the near-optimality dimension $d$. If $d > 0$ then the simple regret is sublinear, if $d =0$ then the simple regret is exponential as we show in this table. BaMSOO has a sublinear rate because it uses $d = D/\alpha - D/4$ where $\alpha =1 $ or $2$. IMGPO uses a fixed $m =3$. We here generalize their proof to any $m$.}
\end{table}%

The Table 3 shows the strict negative correlation of tree-based optimistic optimisation algorithms like SOO, BamSOO, IMGPO. The larger $m$ is, the higher the simple regret is. This explains why most of tree-based optimistic optimisation algorithms often use a small value of $m$ like $m =2$, $m =3$. In contrast, our algorithm leverages the large value of $m$ to improve the regret bound.
\section{Proof of Lemma 1}
\begin{lemma}[Lemma 1 in the main paper]
Given any $(a,b)\in M(m)$ and a partitioning procedure $P(m;a,b)$, then
\begin{enumerate}
  \item the longest side of a cell at depth $h$ is at most $a^{-\floor{\frac{bh}{D}}}$, and
  \item the smallest side of a cell at depth $h$ is at least $a^{-\ceil{\frac{bh}{D}}}$.
\end{enumerate}
\label{diameter}
\end{lemma}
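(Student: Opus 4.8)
The plan is to track, for each cell $A_{h,i}$ and each coordinate $j\in\{1,\dots,D\}$, the number $k_j$ of times that coordinate has been split along the unique path from the root down to $A_{h,i}$. Since $P(m;a,b)$ cuts each chosen dimension into $a$ equal parts, the side length of $A_{h,i}$ in coordinate $j$ is exactly $a^{-k_j}$; hence the longest side equals $a^{-\min_j k_j}$ and the smallest side equals $a^{-\max_j k_j}$, and it suffices to pin down $\min_j k_j$ and $\max_j k_j$. Two facts accomplish this. First, each expansion along the path increments exactly $b$ of the counts by one, and the path from the root to depth $h$ consists of $h$ expansions, so $\sum_{j=1}^{D} k_j = bh$. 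Second — the substantive claim — the counts stay balanced: $\max_j k_j - \min_j k_j \le 1$ for every cell produced by the procedure.

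I would prove the balance claim by induction on the depth $h$. At $h=0$ all counts are $0$, so the claim holds. For the inductive step, consider a cell whose counts $(k_1,\dots,k_D)$ take, by hypothesis, only the two consecutive values $k:=\min_j k_j$ and $k+1$ (the case $\max=\min$ is the degenerate instance with a single value $k$); let $t\ge 1$ be the number of coordinates equal to $k$. The procedure splits the $b$ dimensions of largest current side length, i.e. the $b$ coordinates with smallest $k_j$, with ties broken arbitrarily. If $b\le t$, the step moves $b$ of the coordinates from level $k$ to level $k+1$, leaving the counts supported on $\{k,k+1\}$ (or on $\{k+1\}$ when $b=t$). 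If $b>t$, the step is forced to move all $t$ coordinates at level $k$ up to $k+1$ and, in addition, $b-t$ coordinates from level $k+1$ up to $k+2$; since $b\le D$ and $t\ge1$ one checks that at least one coordinate remains at level $k+1$, so the new counts are supported on $\{k+1,k+2\}$. In every case the spread stays at most $1$, completing the induction. (When $b=D$ the step adds $1$ to every count, which is the special case $t=b$ or $b>t$ with all coordinates moved, and it trivially preserves the spread.)

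Finally I would combine the two facts: $D$ integers lying within $1$ of one another and summing to $bh$ must each equal $\lfloor bh/D\rfloor$ or $\lceil bh/D\rceil$, with at least one attaining each value unless $D\mid bh$. Hence $\min_j k_j=\lfloor bh/D\rfloor$ and $\max_j k_j=\lceil bh/D\rceil$, so the longest side is $a^{-\lfloor bh/D\rfloor}$ and the smallest side is $a^{-\lceil bh/D\rceil}$. This yields both bounds (in fact with equality; the inequality form in the statement is a harmless weakening that also covers any tie-breaking convention).

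I expect the main obstacle to be the bookkeeping in the inductive step, specifically the spill-over case $b>t$ in which the greedy rule is forced to subdivide dimensions that are not at the current minimum level: one must verify that this can reach level $k+2$ but never higher and that level $k+1$ is never emptied, and this is exactly where the inductive hypothesis (only two active levels) is used, together with $1\le t$ and $b\le D$. Everything else — the count-to-side-length correspondence, the identity $\sum_j k_j=bh$, and the arithmetic turning the balance condition into floor/ceiling expressions — is routine.
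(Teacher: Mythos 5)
Your proof is correct, and it takes a genuinely different route from the paper's. The paper argues by induction directly on the two extremal side lengths, splitting into the cases $b=D$ and $b<D$ and claiming that one application of $P(m;a,b)$ shrinks the longest side by a factor of $a$ while shrinking the smallest side by at most a factor of $a$; the floor/ceiling bounds are then recovered via the inequalities $\floor{\tfrac{b(h+1)}{D}}\le 1+\floor{\tfrac{bh}{D}}$ and $\ceil{\tfrac{bh}{D}}+1\ge\ceil{\tfrac{b(h+1)}{D}}$. You instead track the full vector of per-coordinate split counts $(k_1,\dots,k_D)$, prove the invariant $\max_j k_j-\min_j k_j\le 1$ together with $\sum_j k_j=bh$, and read off the \emph{exact} values $\min_j k_j=\floor{\tfrac{bh}{D}}$ and $\max_j k_j=\ceil{\tfrac{bh}{D}}$. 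Your invariant buys two things. First, it yields equalities rather than one-sided bounds, which in particular makes Lemma 2 and the covering argument in Theorem 1 tight. Second, it is more robust than the paper's inductive step: the paper's assertion that the longest side of a child is the parent's longest side divided by $a$ is not literally true when more than $b$ coordinates are tied at the maximal length (only $b$ of them get cut, so the maximum can persist for one step), whereas your balance invariant shows this tie situation can only arise in a way that still respects the floor bound at the next depth. The case analysis on $b\le t$ versus $b>t$, including the check that level $k+1$ is never emptied (it retains $t+D-b\ge 1$ coordinates) and that no coordinate exceeds level $k+2$, is exactly the right bookkeeping and is carried out correctly.
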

\begin{proof}
We prove the statement by induction. At depth $h =1$, we partition the search space $\mathcal X$ into $m = a^b$ cells using the partitioning procedure $P(m;a,b)$. There are two cases on $b$.
\begin{itemize}
  \item $b = D$. Then the longest side of a cell at depth $h =1$ is $1/a = a^{-\floor{\frac{b}{D}}}$. Also, the smallest side of a cell at depth $h=1$ is $1/a = a^{-\ceil{\frac{b}{D}}}$.
  \item $b < D$. Then by the partitioning procedure, the longest side of a cell at depth $h =1$ is still 1. $a^{-\floor{\frac{b}{D}}} = a^0 =1$. Hence, the longest side of a cell at depth 1 is $a^{-\floor{\frac{b}{D}}}$. Also, the smallest side of a cell at depth $h=1$ is $1/a = a^{-\ceil{\frac{b}{D}}}$.
\end{itemize}
For both cases, the statement is true for $h = 1$. We assume that the statement is true for $h \ge 1$. We consider any cell at depth $h + 1$. By our algorithm, this cell is divided from a cell at depth $h$. Similar to the case $h = 1$, we also consider two cases on $b$.
\begin{itemize}
  \item $b =D$. By the inductive hypothesis, the longest side of a cell at depth $h$ is at most $a^{-h}$. Then the longest side of a child cell of this cell is $a^{-(h+1)} = a^{-\floor{\frac{b(h+1)}{D}}}$. Also, the smallest side of a child cell of this cell is $a^{-(h+1)}= a^{-\ceil{\frac{b(h+1)}{D}}}$.
  \item $b < D$. By the inductive hypothesis, the longest side of a cell at depth $h$ is at most $a^{-\floor{\frac{bh}{D}}}$. If we divide a cell at depth $h$ by the partitioning procedure, then the longest side of the sub-cell is at most $a^{-\floor{\frac{bh}{D}}}/a = a^{-1 - \floor{\frac{bh}{D}}}$. However, since $b < D$, $\floor{\frac{b(h + 1)}{D}} \le 1 + \floor{\frac{bh}{D}}$. It follows that $a^{-1 - \floor{\frac{bh}{D}}} \le a^{-\floor{\frac{b(h + 1)}{D}}}$. Thus, the longest side of a cell at depth $h+1$ is at most $a^{-\floor{\frac{b(h+1)}{D}}}$.

      Also, by the inductive hypothesis, the smallest side of a cell at depth $h$ is at least $a^{-\ceil{\frac{bh}{D}}}$. If we divide a cell at depth $h$ then the smallest side of the sub-cell is at least $a^{-\ceil{\frac{bh}{D}}}/a = a^{-\ceil{\frac{bh}{D}}-1}$. However, since $b < D$, $\ceil{\frac{bh}{D}}+1 \ge \ceil{\frac{b(h+1)}{D}}$. As a result, $a^{-\ceil{\frac{bh}{D}}-1} \ge a^{-\ceil{\frac{b(h +1)}{D}}}$. Thus, the smallest side of a cell at depth $h+1$ is at least $a^{-\floor{\frac{b(h+1)}{D}}}$.
\end{itemize}
Thus, the statement holds for every $h \ge 1$.
\end{proof}
\section{Proof of Lemma 4}
To derive an upper bound on variance function $\sigma_p$ as in Lemma 4, we use a concept, called the \emph{fill distance}. Given a set of points $\mathcal{D}_{p-1}$ , we define the fill distance $\text{FD}(\mathcal{D}_{p-1},\mathcal{X})$ as the largest distance from any point in $\mathcal{X}$ to the points in $\mathcal{D}_{p-1}$, as $$\text{FD}(\mathcal{D}_{p-1},\mathcal{X})=\text{sup}_{x\in\mathcal{X}}\text{inf}_{c_{i}\in D_{p-1}}||x-c_{i}||.$$
The following result, which is proven by \citet{Wu92} [Theorem 5.14], after is reviewed by \citet{kanagawa2018} [Theorem 5.4], provides an upper bound for the posterior variance in terms of the fill distance. It applies the cases where the kernel whose RKHS is norm-equivalent to the Sobolev space.
\begin{lemma}[\citep{Wu92,kanagawa2018}]
Let $k$ be a kernel on $\mathbb{R}^d$ whose RKHS is norm equivalent to the Sobolev space. There exist constants $h_0 > 0$ and $C' > 0$ satisfying the following: for any $x \in \mathcal X$ and any set of observations $\mathcal D_{p-1} = \{c_1, c_2,...,c_{p-1}\} \in \mathcal X$ satisfying $\text{FD}(\mathcal{D}_{p-1},\mathcal{X}) \le h_0$, we have
$$\sigma_{p}(x)\le  C'\text{FD}(\mathcal{D}_{p-1},\mathcal{X}))^{\nu - D/2}.$$
\label{le_sobolev}
\end{lemma}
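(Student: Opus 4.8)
The plan is to prove the bound by the classical route of scattered-data approximation: identify $\sigma_p(x)$ with the \emph{power function} of kernel interpolation, and then control that power function by the fill distance through a local polynomial reproduction argument. Write $\mathcal{H}$ for the RKHS of $k$ and $h = \text{FD}(\mathcal{D}_{p-1},\mathcal{X})$.

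First I would recall the variational characterisation of the posterior standard deviation. With $\mathbf{k}(x) = (k(x,c_1),\dots,k(x,c_{p-1}))$ and $K = (k(c_i,c_j))_{i,j}$, one has $\sigma_p^2(x) = k(x,x) - \mathbf{k}(x)^T K^{-1}\mathbf{k}(x)$, which equals $\min_{u}\big\|k(x,\cdot) - \sum_j u_j k(c_j,\cdot)\big\|_{\mathcal{H}}^2$, the minimum being attained at $u = K^{-1}\mathbf{k}(x)$. By the reproducing property, $\langle g,\, k(x,\cdot) - \sum_j u_j k(c_j,\cdot)\rangle_{\mathcal{H}} = g(x) - \sum_j u_j g(c_j)$, so for \emph{any} choice of weights $u = u(x)$,
$$\sigma_p(x) \;\le\; \big\|k(x,\cdot) - \sum_j u_j k(c_j,\cdot)\big\|_{\mathcal{H}} \;=\; \sup_{\|g\|_{\mathcal{H}}\le 1}\big| g(x) - \sum_j u_j g(c_j)\big|.$$
This reduces the task to exhibiting one good family of weights for which the right-hand side is of order $h^{\nu - D/2}$.

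Second I would invoke the existence of a \emph{local polynomial reproduction} (Wu--Schaback; see also Wendland): there are constants $h_0, C_1, C_2 > 0$ such that whenever $h \le h_0$ one can choose weights $u_j(x)$ supported only on data points $c_j$ with $\|x - c_j\| \le C_2 h$, reproducing every polynomial of degree at most $\lceil \nu\rceil - 1$ exactly and satisfying $\sum_j |u_j(x)| \le C_1$. For any $g \in \mathcal{H}$ and any such polynomial $p$, exact reproduction gives $g(x) - \sum_j u_j g(c_j) = (g-p)(x) - \sum_j u_j (g-p)(c_j)$, whence $\big|g(x) - \sum_j u_j g(c_j)\big| \le (1 + C_1)\,\|g - p\|_{L^\infty(B(x, C_2 h))}$. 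Finally I would close with a Bramble--Hilbert / Sobolev estimate: since $\mathcal{H}$ is norm-equivalent to $H^\nu(\mathcal{X})$ and $\nu > D/2$ (so point evaluation is bounded), choosing $p$ to be the averaged Taylor polynomial of $g$ over $B(x, C_2 h)$ yields $\|g - p\|_{L^\infty(B)} \le C_3\, h^{\nu - D/2}\, |g|_{H^\nu(B)} \le C_3'\, h^{\nu - D/2}\,\|g\|_{\mathcal{H}}$. Taking the supremum over $\|g\|_{\mathcal{H}}\le 1$ and combining the displays gives $\sigma_p(x) \le C'\, h^{\nu - D/2}$.

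I expect the main obstacle to be the second and third steps, not the power-function identity. Producing local reproduction weights with an $\ell^1$ bound uniform in $x$ requires enough data points in each ball of radius $\sim h$, which is exactly what the smallness hypothesis $h \le h_0$ buys; and the Bramble--Hilbert estimate that converts the smoothness order $\nu$ and the local radius $\sim h$ into the rate $h^{\nu - D/2}$ is the genuinely technical part, since it must be carried out for (possibly fractional) Sobolev order $\nu$ and then transferred to $\mathcal{H}$ through the norm-equivalence. These two ingredients are where the conditions $h \le h_0$ and $\nu > D/2$ are consumed.
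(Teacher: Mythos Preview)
The paper does not prove this lemma at all: it is quoted verbatim as an external result (Theorem~5.14 of Wu--Schaback and Theorem~5.4 of Kanagawa et al.), and the paper only \emph{uses} it to derive the subsequent Lemma~4. So there is no ``paper's own proof'' to compare against; you are supplying a proof where the authors chose to cite one.

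Your sketch is the standard route from the scattered-data approximation literature (essentially Wendland's treatment) and is correct in outline. The identification of $\sigma_p$ with the power function, the bound $\sigma_p(x)\le \sup_{\|g\|_{\mathcal H}\le 1}|g(x)-\sum_j u_j(x)g(c_j)|$ for \emph{any} weights, the use of local polynomial reproduction to get bounded-$\ell^1$ weights supported in a ball of radius $\sim h$, and the Bramble--Hilbert/Deny--Lions step converting Sobolev regularity of order $\nu$ into the rate $h^{\nu-D/2}$ are exactly the ingredients behind the cited theorems. Two small points worth tightening if you flesh this out: the local polynomial reproduction argument typically also needs a mild interior-cone (or Lipschitz-boundary) condition on $\mathcal X$ so that every $x$ has enough neighbours within $C_2 h$, and the fractional-order Bramble--Hilbert estimate is cleanest via interpolation or via the intrinsic Sobolev seminorm on balls; both are standard but should be stated if you want a self-contained proof rather than a pointer to Wendland.
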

It was shown in \citep{Bull11} [Lemma 3] and in \citep{kanagawa2018} that the Mat\'erm kernels's RKHS is norm-equivalent to the Sobolev space. Therefore, Lemma \ref{le_sobolev} is correct all functions satisfying our Assumption 1 and 2 (in Baysian setting).

Based on Lemma \ref{le_sobolev}, we obtain the following result which is similar to Lemma 4 of \citet{vakili20} but for the Bayesian setting.
\begin{lemma}[Based on Lemma 4 of \citet{vakili20}]
There exist constants $h_0 > 0$ and $C' > 0$ satisfying the following: for any $x \in \mathcal X$ and any set of observations $\mathcal D_{p-1} = \{x_1, x_2,...,x_{p-1}\} \in \mathcal X$ satisfying $\text{FD}(\mathcal{D}_{p-1},\mathcal{X}) \le h_0$, we have
$$ \sigma_p(x) \le \text{min}_{c_i \in \mathcal D_{p-1} }C'||x - c_i||^{\nu - D/2}$$
\label{le_min}
\end{lemma}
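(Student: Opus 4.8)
The plan is to deduce Lemma~\ref{le_min} from Lemma~\ref{le_sobolev} by reducing to the single-observation case and then applying the fill-distance estimate on a small ball. First I would invoke the monotonicity of the noiseless Gaussian posterior variance: since $\sigma_p^2(x)=k(x,x)-\textbf{k}^{T}K^{-1}\textbf{k}$ equals the squared RKHS distance from $k(\cdot,x)$ to $\text{span}\{k(\cdot,c_i):c_i\in\mathcal D_{p-1}\}$, discarding observations can only enlarge this distance (projection onto a smaller subspace leaves a larger residual). Hence, for every index $j$ we have $\sigma_p(x)\le\widetilde\sigma_j(x)$, where $\widetilde\sigma_j(x)$ denotes the posterior standard deviation at $x$ produced by the \emph{single} observation located at $c_j$. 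It therefore suffices to bound $\widetilde\sigma_j(x)$ by $C'\|x-c_j\|^{\nu-D/2}$ whenever $\|x-c_j\|$ is small.

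Next, fix $j$ with $r:=\|x-c_j\|\le h_0$ and apply Lemma~\ref{le_sobolev} to the one-point design $\{c_j\}$, but on the restricted domain $\mathcal Y=\overline{\mathcal B(c_j,r)}$ instead of $\mathcal X$. Then $x\in\mathcal Y$ and $\text{FD}(\{c_j\},\mathcal Y)=r$, so Lemma~\ref{le_sobolev} yields $\widetilde\sigma_j(x)\le C'r^{\nu-D/2}=C'\|x-c_j\|^{\nu-D/2}$, and combining with the first step gives $\sigma_p(x)\le C'\|x-c_j\|^{\nu-D/2}$ for every $c_j\in\mathcal D_{p-1}$ with $\|x-c_j\|\le h_0$. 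Since $t\mapsto t^{\nu-D/2}$ is increasing, any point of $\mathcal D_{p-1}$ lying farther than $h_0$ from $x$ has a strictly larger value of $C'\|x-c_i\|^{\nu-D/2}$ than the nearest point, which the hypothesis $\text{FD}(\mathcal D_{p-1},\mathcal X)\le h_0$ places within distance $h_0$; hence the minimum over all of $\mathcal D_{p-1}$ is attained inside the ball, and we conclude $\sigma_p(x)\le\min_{c_i\in\mathcal D_{p-1}}C'\|x-c_i\|^{\nu-D/2}$, after relabelling $C'$ and shrinking $h_0$ to absorb the constants coming from the geometry of $\mathcal B(c_j,r)$.

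The delicate point is the second step: one must ensure the constant in Lemma~\ref{le_sobolev} does not degenerate when the domain is taken to be an arbitrarily small ball and the design is a single point. This is legitimate because the Wu--Schaback / Narcowich--Ward-type bounds underpinning Lemma~\ref{le_sobolev} are genuinely local — they control $\sigma_p(x)$ through the fill distance of the design in a fixed-size neighbourhood of $x$, are translation invariant, and are (up to a harmless scaling factor) insensitive to the ball radius once it is below a threshold — so no global density of $\mathcal D_{p-1}$ is needed. This is exactly the mechanism behind Lemma~4 of \citet{vakili20}; here it is transported to the Bayesian (GP-sample) regime via the same Mat\'ern-kernel RKHS norm-equivalence already used to justify Lemma~\ref{le_sobolev}, so the argument goes through under Assumptions~1 and~2.
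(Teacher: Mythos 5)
Your proof is correct and follows essentially the same route as the paper's: restrict attention to the ball of radius $\|x-c_j\|$ around the nearest observation, bound the fill distance of the design inside that ball by $\|x-c_j\|$, apply the Sobolev-norm-equivalence variance bound of Lemma~\ref{le_sobolev}, and finish with monotonicity of the posterior variance under discarding observations. The only cosmetic difference is that you condition on the single point $c_j$ whereas the paper conditions on $\mathcal D_{p-1}\cap\mathcal B(c_j,\|x-c_j\|)$; both yield the same fill-distance bound, and you are in fact more explicit than the paper about why the constant in Lemma~\ref{le_sobolev} does not degenerate as the ball shrinks.
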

\begin{proof}
The proof is very similar to their proof. We include it for the purpose of being self-contained.
For $x \in \mathcal X$, let $c_i \in \mathcal D_{p-1}$ be the closet point to $x$: $||x -c_j|| = \text{min}_{x_i \in \mathcal D_{p-1}}||x -c_i||$. Define $\mathcal X' = \mathcal B_D(c_j, ||x- c_j||)$, the $D$-dimensional hyper-ball centered at $c_j$ with radius $||x -c_j||$. Let $X" = \mathcal D_{p-1}\cap X'$. The fill distance of the points $X"$ in $X'$ satisfies:
$$\text{FD}(X",X') = \text{sup}_{x' \in X'} \text{inf}_{c_i \in X"} ||x' - c_i|| \le \text{sup}_{x' \in X'}||x'-c_j|| = ||x - c_j||.$$
Define $\mu'(x) = \mathbb{E}[f(x)|X"]$ and $k'(x,x') = \mathbb{E}[(f(x)-\mu'(x))(f(x') - \mu'(x'))|X"]$. Let $\sigma'(x) = \sqrt{k'(x, x')}$ be the predictive standard deviation conditioned on observations $X"$. Applying Lemma \ref{le_sobolev} to $\sigma'(x)$, we have
$$\sigma'(x) \le C'\text{FD}(X", X')^{\nu - D/2} \le  C'||x - c_j||^{\nu - D/2}.$$

The lemma holds because $\sigma_p (x) \le \sigma'(x)$. This is the decreasing monotonicity of the variance function. $\sigma'(x)$ is constructed from set $X'' \subset \mathcal D_{p-1}$.  A more formal proof that $\sigma_p (x) \le \sigma'(x)$ can be found in \citep{Chevalier13}.
\end{proof}
Next, we apply this result to our context in which the set of the sampling points $c_i$, $\mathcal D_{p-1} = \{c_1, ..., c_{p-1}\}$ , contains the centers of cells $A_{h,i}$ of a tree structured search space.

Now we prove Lemma 4 in the main paper.
\begin{lemma}
Assuming that node $c_{h,i}$ at the depth $h \ge 1$ was sampled at the $p$-th expansion, where $p \ge h$, then we have that
$$\sigma_{p}(c_{h,i})\le  C_1(\delta(h-1;a,b))^{\nu/2-D/4},$$
where $C_1$ is a constant.
\label{sigma}
\end{lemma}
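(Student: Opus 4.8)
The plan is to exploit the hierarchical structure of the partition together with the Sobolev-type variance bound of Lemma~\ref{le_min}. The key observation is that $(h,i)$ is a child of a node $(h-1,i')$ whose center was \emph{already evaluated} before $c_{h,i}$, and that $c_{h,i}$ sits inside the small parent cell $A_{h-1,i'}$; hence $\mathcal D_{p-1}$ contains a point within distance $\mathcal O(a^{-\floor{\frac{b(h-1)}{D}}})$ of $c_{h,i}$, and Lemma~\ref{le_min} converts this proximity into a small posterior variance.

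First I would record that the parent node $(h-1,i')$ of $(h,i)$ must have been expanded at some evaluation $p' \le p-1$: in Algorithm~1 the $m$ children of a node are created only at the step that node is expanded, and the same step evaluates the parent's center and augments the data (Lines~6--8). Since $(h,i)$ already exists as a node when it is selected and evaluated at the $p$-th expansion, its parent was expanded strictly earlier, so $p' < p$ and therefore $c_{h-1,i'} \in \mathcal D_{p'} \subseteq \mathcal D_{p-1}$; the hypothesis $p \ge h$ is the coarse form of this nesting. Because the children cells partition the parent cell, $c_{h,i} \in A_{h,i} \subseteq A_{h-1,i'}$, so Lemma~2 of the main paper gives
\[
\|c_{h,i} - c_{h-1,i'}\| \;\le\; \sup_{x \in A_{h-1,i'}} \|x - c_{h-1,i'}\| \;\le\; D^{1/2} a^{-\floor{\frac{b(h-1)}{D}}} \;=\; \sqrt{\delta(h-1;a,b)/L_1},
\]
the last equality being just the definition $\delta(h-1;a,b) = L_1 D a^{-2\floor{\frac{b(h-1)}{D}}}$.

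Feeding $x = c_{h,i}$ and the particular observation $c_{h-1,i'} \in \mathcal D_{p-1}$ into Lemma~\ref{le_min} then yields
\[
\sigma_p(c_{h,i}) \;\le\; C' \,\|c_{h,i} - c_{h-1,i'}\|^{\,\nu - D/2} \;\le\; C' \,(\delta(h-1;a,b)/L_1)^{(\nu - D/2)/2} \;=\; C_1 \,(\delta(h-1;a,b))^{\nu/2 - D/4},
\]
with $C_1 := C' L_1^{\,D/4 - \nu/2}$, a positive constant since $\nu > 4 + D/2$ forces $\nu/2 > D/4$. This is precisely the asserted bound.

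The only genuinely delicate point is the precondition on the fill distance in Lemma~\ref{le_min}. Its proof invokes the Sobolev estimate (Lemma~\ref{le_sobolev}) only on the local sub-configuration of data falling in the ball of radius $\|c_{h,i}-c_{h-1,i'}\|$ around $c_{h-1,i'}$, so it is enough that $D^{1/2} a^{-\floor{\frac{b(h-1)}{D}}} \le h_0$. With $b=D$ and $a = \mathcal O(N^{1/D})$ this quantity equals $D^{1/2} a^{-(h-1)} \le D^{1/2}/a$ whenever $h \ge 2$, which tends to $0$, so the precondition is met once $N$ exceeds some threshold --- this is exactly the origin of the constant $N_1$ in Theorem~1 (and the reason that theorem restricts to $h \ge 2$). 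I expect this fill-distance bookkeeping, together with the transfer of the RKHS / fixed-function estimates of \citep{Wu92,kanagawa2018} to the Bayesian GP-sample setting via the norm-equivalence of the Mat\'ern RKHS and the Sobolev space (so that the posterior variance, the same object in both viewpoints, obeys the same bound), to be the main obstacle; once Lemmas~\ref{le_sobolev}--\ref{le_min} are in place, the rest is the elementary exponent arithmetic above.
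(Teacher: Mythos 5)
Your proof is correct and follows essentially the same route as the paper's: identify the already-evaluated parent center $c_{h-1,i'}\in\mathcal D_{p-1}$, bound $\|c_{h,i}-c_{h-1,i'}\|$ by the parent cell's radius $D^{1/2}a^{-\floor{b(h-1)/D}}=\sqrt{\delta(h-1;a,b)/L_1}$ via Lemma~2, and plug this into the nearest-point variance bound of Lemma~\ref{le_min}. Your added remark on verifying the fill-distance precondition (and your constant $C_1=C'L_1^{D/4-\nu/2}$, which is in fact the cleaner bookkeeping) is a point the paper's own proof passes over silently, but it does not change the argument.
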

\begin{proof}
By Lemma \ref{le_min}, for every $x \in \mathcal X$, we have that
$$ \sigma_p(x) \le \text{min}_{c_i \in \mathcal D_{p-1} }C'||x - c_i||^{\nu-D/2}$$
where $C'$ is a constant.

By assumption, node $c_{h,i}$ at depth $h$ is sampled at the $p$-th expansion, where $p \ge h$. By hierarchical structure of the sampled points, node $c_{h,i}$ is sampled only if its parent node was sampled. We denote this node by $c_{h-1,j}$ which is at depth $h-1$ with some index $j$.  It follows that
\begin{eqnarray*}
\sigma_p(c_{h,i}) & \le & \text{min}_{c_i \in \mathcal D_{p-1} }C'||c_{h,i} - c_i||^{\nu-D/2} \\
& \le &  C'||c_{h,i} - c_{h-1, j}||^{\nu-D/2}\\
& \le &  C' (L_1D)^{D/4 - \nu/2}(\delta(h-1;a,b))^{\nu/2-D/4},
\end{eqnarray*}
where in the first inequality, we apply Lemma \ref{le_min}. In the second inequality, we use the property of $c_{h-1,j} \in \mathcal D_{p-1}$, hence $\text{min}_{c_i \in \mathcal D_{p-1}}||c_{h,i} - c_i||^{v- D/2} \le ||c_{h,i} - c_{h-1,j}||^{v- D/2}$. In the last inequality, we have that $c_{h,i}$ belongs to the cell $A_{h-1,j}$ with center $c_{h-1,j}$. Hence, distance $||c_{h,i} - c_{h-1,j}||$ must be shorter than the diameter of that cell. By Lemma \label{diameter} and the definition of $\delta(h-1;a,b) = L_1Da^{-2\floor{\frac{b(h-1)}{D}}}$, the last inequality is proven.

Finally, by setting $C_1 = C'(L_1D)^{D/4 - \nu/2}$, the lemma holds.
\end{proof}
\section{Proof of Theorem 1}
To prove Theorem 1, we will involve two stages:
\begin{itemize}
  \item \textbf{Stage 1}: we first prove that if $N$ is large enough, then under some assumptions, all the centers of nodes of expandable nodes will fall into the ball $\mathcal B(x^*, \theta)$ which is centered at  $x^*$ with radius $\theta$ as defined in Property 1. We prove this in the following Lemma \ref{h0}.
  \item \textbf{Stage 2}: when a set of expandable nodes fallen into the ball $\mathcal B(x^*, \theta)$, the quadratic behaviours of the objective function surrounding the global optimum $x^*$ will occur. We exploit this property to prove that $|I_h| \le C$, where $C$ is some constant.
\end{itemize}

\begin{lemma}
Assume Algorithm 1 uses partitioning procedure $P(m;a,b)$ where $a = \mathcal O(N^{1/D})$ and $b = D$. Thus there exists a constant $N_0$ such that for every $N \ge N_0$, if (1) node $(h,i) \in I_h$, where $h \ge 2$ and (2) $\mathcal L_p(c_{h,i}) \le f(c_{h,i})$ for every $ h \le p \le N$, then $$c_{h,i} \in \mathcal B(x^*, \theta),$$
where $\mathcal B(x^*, \theta)$ is the ball centered at $x^*$ with radius $\theta$, which is defined in Property 1.
\label{h0}
\end{lemma}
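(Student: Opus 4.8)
The plan is to argue by contradiction: assume $c_{h,i}\notin\mathcal B(x^*,\theta)$ and show that the value gap $f(x^*)-f(c_{h,i})$, which Property 1.3 forces to exceed $\epsilon_0$, is in fact bounded above by a quantity that vanishes as $N\to\infty$. Fixing a node $(h,i)\in I_h$ with $h\ge 2$, the definition of the expansion set gives some $p$ with $h\le p\le N$ and $\mathcal U_p(c_{h,i})\ge f(x^*)-\delta(h;a,b)$. Since $(h,i)$ is a node of $\mathcal T_p$ at that step, its parent $(h-1,j)$ has already been expanded and hence evaluated, so $c_{h-1,j}\in\mathcal D_{p-1}$; by Lemma 2 the distance $\|c_{h,i}-c_{h-1,j}\|$ is at most the radius $D^{1/2}a^{-\lfloor b(h-1)/D\rfloor}$ of the parent cell, and the argument of Lemma 4 (fill-distance bound plus monotonicity of the posterior variance) then yields $\sigma_p(c_{h,i})\le C_1(\delta(h-1;a,b))^{\nu/2-D/4}$.

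Next I would use hypothesis (2): from $\mathcal L_p(c_{h,i})\le f(c_{h,i})$ and $\mathcal U_p(c)=\mathcal L_p(c)+2\beta_p^{1/2}\sigma_p(c)$ we get $\mathcal U_p(c_{h,i})\le f(c_{h,i})+2\beta_p^{1/2}\sigma_p(c_{h,i})$. Combining with the expansion-set inequality and $p\le N$ (so $\beta_p\le\beta_N$),
$$f(x^*)-f(c_{h,i})\le\delta(h;a,b)+2\beta_N^{1/2}C_1\bigl(\delta(h-1;a,b)\bigr)^{\nu/2-D/4}.$$
Now substitute $b=D$ and $a=\Theta(N^{1/D})$: then $\delta(h;a,D)=L_1Da^{-2h}$ is decreasing in $h$, so for $h\ge 2$ it is at most $\delta(2;a,D)=\Theta(N^{-4/D})$, and $(\delta(h-1;a,D))^{\nu/2-D/4}\le(\delta(1;a,D))^{\nu/2-D/4}=\Theta(N^{-(\nu-D/2)/D})$, while $\beta_N^{1/2}=\Theta(\sqrt{\log N})$. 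Hence the right-hand side is $\mathcal O(N^{-4/D})+\mathcal O(\sqrt{\log N}\,N^{-(\nu-D/2)/D})$, which tends to $0$ since $\nu-D/2>4>0$. Therefore there exists $N_0$ (depending on $L_1,L_2,\theta,\epsilon_0,\nu,D,\eta$, but not on the run) so that for all $N\ge N_0$ this bound is strictly below $\epsilon_0$; together with Property 1.3 this contradicts $c_{h,i}\notin\mathcal B(x^*,\theta)$, proving the claim.

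The main obstacle I expect is the first step: making rigorous that at the witnessing time $p$ the hypotheses behind the variance bound genuinely hold — i.e.\ that the parent of $(h,i)$ has been evaluated by time $p$ and that the chain of ancestor cells back to the root has actually been explored, so that Lemma 4 (and the underlying fill-distance argument of Lemma \ref{le_min}) applies with the diameter of the parent cell. This is a bookkeeping link between abstract membership in $I_h$ and the concrete state of $\mathcal T_p$; once it is settled the remainder is the elementary asymptotic estimate above. A secondary subtlety is that $L_1,\epsilon_0,\theta$ are all unknown, so $N_0$ must be presented purely as an existential constant rather than anything the algorithm computes.
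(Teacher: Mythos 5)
Your proposal is correct and follows essentially the same route as the paper's proof: extract the witnessing time $p$ from the definition of $I_h$, rewrite $\mathcal U_p = \mathcal L_p + 2\beta_p^{1/2}\sigma_p \le f + 2\beta_p^{1/2}\sigma_p$ using hypothesis (2), bound $\sigma_p(c_{h,i})$ via Lemma 4 through the already-evaluated parent, and show the resulting bound $\delta(h;a,b)+2\beta_N^{1/2}C_1(\delta(h-1;a,b))^{\nu/2-D/4} = \mathcal O(\sqrt{\log N}\,N^{-4/D})$ falls below $\epsilon_0$ for $N$ large, so Property 1.3 forces $c_{h,i}\in\mathcal B(x^*,\theta)$. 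Your contradiction framing is just the contrapositive of the paper's direct argument, and the "bookkeeping" point you flag (that the parent of any node in $\mathcal T_p$ has been evaluated, so the fill-distance bound applies) is exactly the step the paper also relies on, via the hierarchical-sampling remark in its proof of Lemma 4.
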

\begin{proof}
By definition, the expansion set $I_h = \{(h,i)| \exists h \le p \le N : \mathcal{U}_{p}(c_{h,i})\ge f(x^{*})-\delta(h;a,b) \}$. Therefore, if node $(h,i) \in I_h$ then there must exist some $h \le p \le N$ such that
\begin{eqnarray}
\mathcal U_p(c_{h,i}) \ge f(x^*) - \delta(h;a,b) \label{eq1}
\end{eqnarray}
On the other hand, for the same upper confidence bound $\mathcal U_p(c_{h,i})$ of $f(c_{h,i})$ as above, we have that
\begin{eqnarray}
\mathcal U_{p}(c_{h,i}) & = & \mu_p(c_{h,i}) + \beta^{1/2}_p\sigma_{p}(c_{h,i}) \\
& = & \mu_p(c_{h,i}) - \beta^{1/2}_p\sigma_{p}(c_{h,i}) + 2\beta^{1/2}_p\sigma_{p}(c_{h,i}) \\
& = & \mathcal L_p(c_{h,i}) + 2\beta^{1/2}_p\sigma_{p}(c_{h,i}) \\
& \le & f(c_{h,i}) + 2\beta^{1/2}_p\sigma_{p}(c_{h,i}) \\
& \le & f(c_{h,i}) + 2\beta^{1/2}_pC_1(\delta(p-1;a,b))^{\nu/2 - D/4},
\end{eqnarray}
where in Eq (5), we use the assumption that $\mathcal L_p(c_{h,i}) \le f(c_{h,i})$. In Eq (6), we use Lemma \ref{sigma}.

Combining Eq (1) and Eq (7), we obtain
\begin{eqnarray}
f(x^*) - f(c_{h,i}) & \le & f(c_{h,i}) + 2\beta^{1/2}_pC_1(\delta(p-1;a,b))^{\nu/2 - D/4} \\
& = & L_1Da^{-2\floor{\frac{bh}{D}}} + 2\beta^{1/2}_pC_1(L_1Da^{-2\floor{\frac{b(h-1)}{D}}})^{\nu/2 - D/4} \\
& = & L_1Da^{-2h} + 2\beta^{1/2}_p C_1 (L_1D)^{\nu/2 - D/4} a^{-(\nu - D/2)(h-1)},
\end{eqnarray}
where Eq (8) uses the definition of $\delta(h;a,b)$ and Eq (9) uses the assumption that $b = D$.

We continue to go further with Eq (9) by using the assumptions that $a = \mathcal O(N^{1/D})$, $h \ge 2$ (from assumptions of Lemma \ref{h0}), and $\nu -D/2> 4$ (from Assumption 1):
\begin{eqnarray}
f(x^*) - f(c_{h,i}) &\le & L_1Da^{-2h} + 2\beta^{1/2}_p C_1 (L_1D)^{\nu/2 - D/4} a^{-(\nu - D/2)(h-1)} \\
& \le & L_1Da^{-4} + 2\beta^{1/2}_N C_1 (L_1D)^{\nu/2 - D/4}a^{-4}\\
& = & \frac{L_1D + 2C_1 (L_1D)^{\nu/2 - D/4}\sqrt{2log(\pi^2N^3/3\eta)}}{a^4}  \\
& = & \mathcal O (\frac{\sqrt{log(N/3\eta)}}{N^{4/D}}),
\end{eqnarray}
where, in Eq (11), we use $h \ge 2$ and the increasing monotonicity of function $\beta_p$. We recall that $\beta_p$ is the trade-off parameter used on our BOO proposed. Formally, $\beta_p = 2log(\frac{\pi^2p^3}{3\eta})$, where $\eta \in (0, 1)$. In Eq (13), we use $a = \mathcal O(N^{1/d})$.

We have that $\frac{\sqrt{log(N/3\eta)}}{N^{4/D}} \rightarrow 0$ as $N \rightarrow \infty$. Therefore, for any $\epsilon_0 > 0$, there exists a constant $N_0 > 0$ such that for every $ N \ge N_0$, $f(x^*) - f(c_{h,i}) \le \epsilon_0$.  Thus, by definition of $\mathcal B(x^*, \theta)$ in Property 1, $c_{h,i} \in \mathcal B(x^*, \theta)$.
\end{proof}
We now start to prove Theorem 1.
\begin{theorem}
Assume that the proposed BOO algorithm uses partitioning procedure $P(m;a,b)$ where $a = \mathcal O(N^{1/D})$ and $b =D$. We consider set $I_h$, where $h \ge 2$ and assume that $\mathcal L_p(c_{h,i}) \le f(c_{h,i}) \le \mathcal U_p(c_{h,i})$ for all node $(h,i) \in I_h $ and for all $h \le p \le N$. Then there exist constants $N_1 > 0$ and $C > 0$ such that for every for $N \ge N_1$,
$$|I_h| \le C.$$
\label{bound_size}
\end{theorem}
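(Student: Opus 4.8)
The plan is to bound $|I_h|$ by a volume–packing argument, after first showing that once $N$ is large every node of $I_h$ (for $h\ge 2$) sits in the neighbourhood $\mathcal B(x^*,\theta)$ of the maximiser where the local quadratic bound of Property~1.2 is available. Throughout I work on the event $\mathcal L_p(c)\le f(c)\le \mathcal U_p(c)$ which, by Lemma~6, holds with probability $1-\eta$.

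\emph{Localisation.} Fix $h\ge 2$ and $(h,i)\in I_h$, so $\mathcal U_p(c_{h,i})\ge f(x^*)-\delta(h;a,b)$ for some $p\in[h,N]$. Writing $\mathcal U_p(c_{h,i})=\mathcal L_p(c_{h,i})+2\beta_p^{1/2}\sigma_p(c_{h,i})\le f(c_{h,i})+2\beta_p^{1/2}\sigma_p(c_{h,i})$ and bounding $\sigma_p(c_{h,i})$ by Lemma~4 (which only needs the already-evaluated parent of $(h,i)$) gives
\[
f(x^*)-f(c_{h,i})\le \delta(h;a,b)+2\beta_N^{1/2}C_1\bigl(\delta(h-1;a,b)\bigr)^{\nu/2-D/4}.
\]
With $b=D$ one has $\delta(h;a,b)=L_1Da^{-2h}$ and $\delta(h-1;a,b)=L_1Da^{-2(h-1)}$, and since $\nu-D/2>4$ the exponent $(\nu-D/2)(h-1)$ of the second term is at least $4$ for $h\ge2$; hence the right-hand side is at most $\bigl(L_1D+2C_1(L_1D)^{\nu/2-D/4}\beta_N^{1/2}\bigr)a^{-4}=\mathcal O(\sqrt{\log N}\,N^{-4/D})\to 0$ because $a=\Theta(N^{1/D})$ and $\beta_N^{1/2}=\Theta(\sqrt{\log N})$. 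Thus for $N$ past some $N_0$ this quantity is below the $\epsilon_0$ of Property~1.3, whose contrapositive forces $c_{h,i}\in\mathcal B(x^*,\theta)$.

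\emph{Shrinking the radius.} Now Property~1.2 applies, $f(c_{h,i})\le f(x^*)-L_2\|c_{h,i}-x^*\|^2$, and combined with the displayed inequality this yields
\[
L_2\|c_{h,i}-x^*\|^2\le L_1Da^{-2h}+2\beta_N^{1/2}C_1(L_1D)^{\nu/2-D/4}a^{-(\nu-D/2)(h-1)}.
\]
The ratio of the second term to the first is at most $2\beta_N^{1/2}C_1(L_1D)^{\nu/2-D/4-1}a^{-(\nu-D/2-4)}$ for $h\ge2$ (the exponent gap $(\nu-D/2-2)h-(\nu-D/2)$ is $\ge\nu-D/2-4>0$), and this tends to $0$ since the polynomial $a^{\nu-D/2-4}$ outgrows $\beta_N^{1/2}$; so for all $N$ past some $N_1\ge N_0$ the bound becomes $\|c_{h,i}-x^*\|\le\rho_h:=\sqrt{2L_1D/L_2}\,a^{-h}$.

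\emph{Packing.} With $b=D$, Lemma~1 shows every cell at depth $h$ has all sides equal to $a^{-h}$, the cells indexed by $I_h$ are pairwise disjoint, and by Lemma~2 each of them lies inside $\mathcal B\bigl(x^*,\rho_h+D^{1/2}a^{-h}\bigr)=\mathcal B(x^*,\kappa a^{-h})$ with $\kappa:=\sqrt{2L_1D/L_2}+D^{1/2}$. Comparing total volumes, $|I_h|\,a^{-Dh}\le V_D(\kappa a^{-h})^D$ where $V_D$ is the volume of the unit ball, so $|I_h|\le V_D\kappa^D=:C$, a constant independent of $h$ and $N$ (depending only on $L_1,L_2,D$). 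I expect the two middle steps to be the only real work: the comparison of the posterior-variance term with $\delta(h;a,b)$ at the worst depth $h=2$ is precisely where the hypothesis $\nu>4+\tfrac{D}{2}$ and the large, $N$-dependent branch factor $a=\Theta(N^{1/D})$ enter, while Lemma~1, Lemma~2 and the algebra with $\delta$ and $\beta_p$ are routine.
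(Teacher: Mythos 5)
Your proof is correct and follows essentially the same three-stage route as the paper: localise the centers of $I_h$ into $\mathcal B(x^*,\theta)$ via Property 1.3 and the variance bound of Lemma 4, apply the local quadratic lower bound of Property 1.2 to confine them to a ball of radius $\mathcal O(a^{-h})$, and finish with a packing/volume argument. The only (cosmetic) differences are that you absorb the posterior-variance term into $\delta(h;a,b)$ before packing rather than carrying it through and letting it vanish afterwards, and you compare total cell volumes directly instead of inscribing disjoint balls of radius $a^{-\lceil bh/D\rceil}/2$ in each cell; both yield the same constant-order bound.
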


\begin{proof}
The proof involves three steps.

\textbf{Step 1}: \textbf{for each node $(h,i) \in I_h$, we seek to bound gap $||x^* - c_{h,i}||$}.

By Lemma \ref{h0} and the assumptions of Lemma \ref{bound_size}, there exists a constant $N_0$ such that for every $N \ge N_0$, for any $(h,i) \in I_h$ then $c_{h,i} \in \mathcal B(x^*, \theta)$. Hence following Property 1, for any $(h,i) \in I_{h}$, the following result is guaranteed:
\begin{eqnarray}
L_2||x^* - c_{h,i}||^2 \le f(x^*) - f(c_{h,i}).
\end{eqnarray}
On the other hand, by definition of $I_h$, there exists $h \le p \le N$ such that
\begin{eqnarray}
\mathcal U_p(c_{h,i}) \ge f(x^*) - \delta(h;a,b).
\end{eqnarray}
Combining Eq (14) and Eq (15), we have that
\begin{eqnarray}
L_2||x^* - c_{h,i}||^2 \le \mathcal U_p(c_{h,i}) + \delta(h;a,b) - f(c_{h,i}).
\end{eqnarray}
Similar to Lemma \ref{h0}, we continue to analyze the right hand side of Eq (16) as follows:
\begin{eqnarray}
L_2||x^* - c_{h,i}||^2 & \le & \mathcal U_p(c_{h,i}) + \delta(h;a,b) - f(c_{h,i}) \\
& =  & \mu_p(c_{h,i}) + \beta^{1/2}_p\sigma_{p}(c_{h,i}) + \delta(h;a,b) - f(c_{h,i}) \\
& =  & \mu_p(c_{h,i}) - \beta^{1/2}_p\sigma_{p}(c_{h,i}) + 2\beta_p^{1/2}\sigma_{p}(c_{h,i}) +  \delta(h;a,b) - f(c_{h,i})\\
& \le & \mathcal L_p(c_{h,i}) + 2\beta^{1/2}_p\sigma_{p}(c_{h,i}) +  \delta(h;a,b) - f(c_{h,i}) \\
& \le & 2\beta^{1/2}_p\sigma_{p}(c_{h,i}) +  \delta(h;a,b) \\
& \le & 2\beta^{1/2}_p C_1(\delta(p-1;a,b))^{\nu/2 -D/4} +  \delta(h;a,b) \\
& \le & 2\beta^{1/2}_N C_1(\delta(h-1;a,b))^{\nu/2 -D/4} +  \delta(h;a,b),
\end{eqnarray}
where in Eq (18), we use the definition of $\mathcal U_p(c_{h,i})$, in Eq (20), we use the definition of $\mathcal L_p(c_{h,i})$. In Eq (21), we use the assumption that $\mathcal L_p(c_{h,i}) \le f(c_{h,i})$. In Eq (22), we use Lemma \ref{sigma}. Finally, in the last inequality at Eq (23), we use the decreasing monotonicity of function $\delta(h;a,b)$ and the increasing monotonicity of function $\beta_p$. By assumption that $h \le p \le N$, hence $\delta(p-1;a,b) \le \delta(h-1;a,b)$ and $\beta^{1/2}_p \le \beta^{1/2}_N$. We recall that $\delta(h;a,b) = L_1Da^{-2\floor{\frac{bh}{D}}}$ as in Definition 1.

Thus, for any $(h,i) \in I_h$, where $h \ge 2$, we have that
$$L_2||x^* - c_{h,i}||^2 \le 2\beta^{1/2}_N C_1(\delta(h-1;a,b))^{\nu/2 -D/4} +  \delta(h;a,b).$$

\textbf{Step 2}: \textbf{Bounding $|I_h|$ using covering balls}.

We let $\Omega_h$ be the set of nodes $(h,i)$ at depth $h$ generated by partitioning procedure $P(m;a,b)$. From $\Omega_h$ we define set $\overline{I}_h$ as $$\overline{I}_h = \{(h,i) \in \Omega_h \text{ such that } L_2||x^* - c_{h,i}||^2 \le 2\beta^{1/2}_N C_1(\delta(h-1;a,b))^{\nu/2 -D/4} +  \delta(h;a,b) \}.$$

By this definition, $I_h \subseteq \overline{I}_h$ which implies directly that $|I|_h \le |\overline{I}_h|$. Now we consider the set of points $c_{h,i}$ of these nodes. This set is defined as $$\overline{P}_h = \{c_{h,i} \in \mathcal X | (h,i)\in  \overline{I}_h \}.$$

We can see that all the points of $\overline{P}_h$ are covered by a hypersphere centered at $x^*$  with radius $ \sqrt{\frac{2\beta^{1/2}_N C_1(\delta(h-1;a,b))^{\nu/2 -D/4} +  \delta(h;a,b)}{L_2}}$. We call this hypersphere $\mathcal S_h$.

On the other hand, by Lemma \ref{diameter}, the smallest side of a cell $A_{h,i}$ at depth $h$ is at least $a^{\ceil{-\frac{bh}{D}}}$. Therefore, if we bound a point $c_{h,i} \in \overline{I}_h$ by a $D$-ball centered $c_{h,i}$ with radius $a^{\ceil{-\frac{bh}{D}}}/2$ then all these balls are disjoint. Further, even if there are several centers $c_{h,i}$ of these balls lying on the boundary of $\mathcal S_h$ then all these balls must be within the hypersphere centered at $x^*$  with radius $$ \sqrt{\frac{2\beta^{1/2}_N C_1(\delta(h-1;a,b))^{\nu/2 -D/4} +  \delta(h;a,b)}{L_2}} +  a^{\ceil{-\frac{bh}{D}}}/2.$$

Thus, $|\overline{P}_h|$ cannot exceed the number of disjoint balls which fit in the hypersphere centered at $x^*$  with radius $ \sqrt{\frac{2\beta^{1/2}_N C_1(\delta(h-1;a,b))^{\nu/2 -D/4} +  \delta(h;a,b)}{L_2}} +  a^{\ceil{-\frac{bh}{D}}}/2$.

The number of these disjoint balls cannot exceed the proportion of the volume of the hypersphere of radius $\sqrt{\frac{\delta(h) + 2\beta^{1/2}_NC_2L^{-\nu + D/2}(\delta(h-1))^{\nu - D/2}}{C_1}}$ and the volume of small balls of radius $a^{\ceil{-\frac{bh}{D}}}/2$.
This proportion is measured by
$$(\frac{\sqrt{\frac{2\beta^{1/2}_N C_1(\delta(h-1;a,b))^{\nu/2 -D/4} +  \delta(h;a,b)}{L_2}} +  a^{\ceil{-\frac{bh}{D}}}/2}{a^{\ceil{-\frac{bh}{D}}}/2})^D.$$

Thus, we have that
\begin{eqnarray}
|\overline{P}_h| & \le & (\frac{\sqrt{\frac{2\beta^{1/2}_N C_1(\delta(h-1;a,b))^{\nu/2 -D/4} +  \delta(h;a,b)}{L_2}} +  a^{\ceil{-\frac{bh}{D}}}/2}{a^{\ceil{-\frac{bh}{D}}}/2})^D \\
& = & (\sqrt{\frac{4\beta^{1/2}_N C_1(\delta(h-1;a,b))^{\nu/2 -D/4} +  2\delta(h;a,b)}{L_2 a^{2\ceil{-\frac{bh}{D}}}}} + 1)^D
\end{eqnarray}
However, by definition of $\overline{I}_h$ and $\overline{P}_h$, $|I_h| \le |\overline{I}_h| = |\overline{P}_h|$. Therefore, we have
$$|I_h| \le (\sqrt{\frac{4\beta^{1/2}_N C_1(\delta(h-1;a,b))^{\nu/2 -D/4} +  2\delta(h;a,b)}{L_2 a^{2\ceil{-\frac{bh}{D}}}}} + 1)^D.$$

\textbf{Step 3:} \textbf{proving that there exists a constant $C$ such that $|I_h| \le C$}.

Using the assumption that $b =D$, we have $a^{\ceil{-2\frac{bh}{D}}} = a^{-2h}$, $\delta(h-1;a,b) = L_1Da^{-2(h-1)}$, and $\delta(h;a,b) = L_1Da^{-2h}$. Replacing these results to Eq (25), we get
\begin{eqnarray}
|I_h| & \le & (\sqrt{\frac{4\beta^{1/2}_N C_1(\delta(h-1;a,b))^{\nu/2 -D/4} +  2\delta(h;a,b)}{L_2 a^{2\ceil{-\frac{bh}{D}}}}} + 1)^D \\
& = & (\sqrt{\frac{4\beta^{1/2}_N C_1(L_1D)^{\nu/2 -D/4} a^{-(\nu -D/2)(h-1)} +  2L_1Da^{-2h}}{L_2 a^{-2h}}} + 1)^D \\
& = & (\sqrt{ \frac{4C_1(L_1D)^{\nu/2 -D/4}}{L_2} \times \beta^{1/2}_N  \times a^{2h-(\nu -D/2)(h-1)} + \frac{2L_1}{L_2} } + 1)^D \\
& \le & (\sqrt{ \frac{4C_1(L_1D)^{\nu/2 -D/4}}{L_2} \times \beta^{1/2}_N  \times a^{4 + D/2 -\nu} + \frac{2L_1}{L_2} } + 1)^D \\
& = & C'(\sqrt{log(N/3\eta)}  \times N^{(4 + D/2 -\nu)/D})^{D/2}
\end{eqnarray}
where, Eq (29) holds because $a^{2h-(\nu -D/2)(h-1)} \le a^{4 + D/2 -\nu}$. Indeed, by using the assumption that $\nu > 4 + D/2$ and $h \ge 2$, we have that
\begin{eqnarray*}
a^{2h-(\nu -D/2)(h-1)} & = & a^{h(2 + D/2 -\nu) + (\nu -D/2)} \\
& \le &  a^{2(2 + D/2 -\nu) + (\nu -D/2)} \\
& = & a^{4 + D/2 -\nu}.
\end{eqnarray*}
For the last inequality at Eq (30), we use the assumption $a = \mathcal O(N^{1/D})$, $\beta_N = 2log(\pi^{2}N^{3}/3\eta)$, where $\eta \in (0,1)$, and the fact that $\frac{4C_1(L_1D)^{\nu/2 -D/4}}{L_2}$ and $\frac{2L_1}{L_2}$ are constants independent of $N$. Thus, such a constant $C'$ at Eq (30) exists.

Since $\nu > 4 + D/2$, we have that $\frac{\sqrt{log(N/3\eta)}}{N^{\frac{\nu -D/2 -4}{D}}} \rightarrow 0$ as $N \rightarrow \infty$.
Therefore $\sqrt{log(N/3\eta)} \times N^{(4 + D/2 -\nu)/D} \rightarrow 0$ as $N \rightarrow \infty$. Thus, there exists constant $N_1 > 0$ and $C > 0$ such that for every $N \ge N_1$, $|I_h| \le C$ for every $h \ge 2$.
\end{proof}
\section{Proof of Lemma 5}
Let $(h_{p}^{*}+1,i^{*})$ be an optimal node of depth $h_{p}^{*}+1$ (i.e., $x^{*}\in A_{h_{p}^{*}+1,i^{*}}$). We define a node $(h,i)$ at depth $h$ as $\delta(h;a,b)$-optimal if $\mathcal U(c_{h,i}) \ge f(c_{h,i}) - \delta(h;a,b)$. We obtains the following result.
\begin{lemma}
Assume that $f(c_{h_{p}^{*}+1,i^{*}})\le\mathcal{U}(c_{h_{p}^{*}+1,i^{*}})$. Then any node $(h_{p}^{*}+1,i)$ of depth $h_{p}^{*}+1$ before $(h_{p}^{*}+1,i^{*})$ is expanded, is $\delta(h_{p}^{*}+1;a,b)$-optimal.
\end{lemma}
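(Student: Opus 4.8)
The plan is to follow the first half of the proof of Lemma~2 in \citep{munos2011}, adjusted to the fact that Line~4 of Algorithm~1 ranks the leaves of a given depth by their GP-UCB value $\mathcal U_p$ rather than by $f$. I read ``$(h,i)$ is $\delta(h;a,b)$-optimal'' in the sense that is compatible with the expansion set, namely $\mathcal U_p(c_{h,i}) \ge f(x^*) - \delta(h;a,b)$: the content of the lemma is then that any depth-$(h_p^*+1)$ node the algorithm expands before the optimal one must have its UCB above this threshold, i.e.\ must lie in $I_{h_p^*+1}$, which is exactly what the size bound of Theorem~1 is applied to downstream (in the proof of Lemma~5).

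The core step is a one-line domination argument. Fix any round $p'$ at which, during its depth sweep, Algorithm~1 reaches depth $h_p^*+1$ and expands a node $(h_p^*+1,i)$ strictly before the optimal node $(h_p^*+1,i^*)$ is ever expanded. Since $(h_p^*+1,i^*)$ is then still a leaf at depth $h_p^*+1$, the selection rule on Line~4 gives $\mathcal U_{p'}(c_{h_p^*+1,i}) \ge \mathcal U_{p'}(c_{h_p^*+1,i^*})$. I would then lower-bound the right-hand side: by Lemma~3, $f(c_{h_p^*+1,i^*}) \ge f(x^*) - \delta(h_p^*+1;a,b)$, and by the standing hypothesis of this lemma (which holds with probability $1-\eta$ by Lemma~6), $f(c_{h_p^*+1,i^*}) \le \mathcal U_{p'}(c_{h_p^*+1,i^*})$. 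Chaining, $\mathcal U_{p'}(c_{h_p^*+1,i}) \ge f(x^*) - \delta(h_p^*+1;a,b)$, so $(h_p^*+1,i)$ is $\delta(h_p^*+1;a,b)$-optimal; since $i$ was an arbitrary such node, the claim follows.

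The part that needs genuine care — and which I expect to be the main obstacle — is justifying, for \emph{every} relevant round $p'$, that the optimal node $(h_p^*+1,i^*)$ already exists in $\mathcal T_{p'}$ and is still a leaf, so that it actually competes in the selection on Line~4. This uses the definition of $h_p^*$ as the depth of the deepest \emph{expanded} node on the branch containing $x^*$: its optimal depth-$h_p^*$ ancestor has been expanded, hence all its depth-$(h_p^*+1)$ children, including $(h_p^*+1,i^*)$, have been created; and ``before $(h_p^*+1,i^*)$ is expanded'' forces $(h_p^*+1,i^*)$ to still be a leaf at round $p'$. Making this airtight requires tracking how the tree and $h_p^*$ evolve across rounds — in particular, that no depth-$(h_p^*+1)$ node can be expanded until the optimal depth-$h_p^*$ node has been — exactly as in the SOO analysis. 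Once that ordering is pinned down, the remainder is the substitution of Lemmas~3 and~6 above and needs no further estimates.
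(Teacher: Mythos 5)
Your proposal is correct and matches the paper's own argument: the paper proves this via exactly the same chain $\mathcal{U}(c_{h_{p}^{*}+1,i})\ge \mathcal{U}(c_{h_{p}^{*}+1,i^{*}})\ge f(c_{h_{p}^{*}+1,i^{*}})\ge f(x^{*})-\delta(h_{p}^{*}+1;a,b)$, using Line~4 of Algorithm~1, the standing hypothesis, and Lemma~3, and it reads $\delta$-optimality with the $f(x^{*})$ threshold just as you do. Your extra care about the optimal node already existing as a leaf (via the definition of $h_{p}^{*}$) is a point the paper leaves implicit, but it does not change the route.
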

\begin{proof}
If the node $(h_{p}^{*}+1,i^{*})$ has not been expanded yet, then by Algorithm 1 (line 4) we have that $\mathcal{U}(c_{h_{p}^{*}+1,i})\ge \mathcal{U}(c_{h_{p}^{*}+1,i^{*}})$. Combining with the assumptions, we get
\begin{eqnarray}
\mathcal{U}(c_{h_{p}^{*}+1,i}) & \ge & \mathcal{U}(c_{h_{p}^{*}+1,i^{*}}) \\
& \ge & f(c_{h_{p}^{*}+1,i^{*}}) \\
& \ge & f^{*}-\delta(h_{p}^{*}+1;a,b),
\end{eqnarray}
where Eq (32) use the assumption that $f(c_{h_{p}^{*}+1,i^{*}})\le\mathcal{U}(c_{h_{p}^{*}+1,i^{*}})$, and Eq (33) use Lemma 3. Thus, the lemma holds.
\end{proof}
From Lemma 12, we deduce that once an optimal node of depth $h$ is expanded, it takes at most $|I_{h+1}|$ node expansions at depth $h+1$ before the optimal node of depth $h+1$ is expanded. From that observation, we deduce the following lemma (corresponding to Lemma 5 in the main paper.)
\begin{lemma}
Assume that $f(c_{h,i}) \le \mathcal U(c_{h,i})$ for all optimal node $(h,i)$ at each depth $0 \le h \le h_{max}(n)$. Then for any depth $0 \le h \le h_{max}(n)$, whenever $n \ge h_{max}(n)\sum_{i=0}^{h}|I_i|$, we have $h^*_n \ge h$.
\end{lemma}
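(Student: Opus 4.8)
This is the BOO counterpart of Lemma~2 in \citet{munos2011}, and the plan is to prove it by the same round-counting argument, with the GP-UCB score $\mathcal U_p$ playing the role that $f$ plays in SOO. Call one pass of the inner \textbf{for} loop (over depths $h=0,\dots,\min(\mathrm{depth}(\mathcal T_p),h_{max}(p))$) a \emph{round}: within a round $v_{max}$ starts at $-\infty$ and is non-decreasing, and at each depth at most one node is expanded, so a round costs at most $h_{max}(n)+1$ expansions. I would first bound the number of rounds until the optimal node $(h,i^*)$ at depth $h$ gets expanded (which forces $h^*_n\ge h$), and then multiply by the per-round cost.

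\textbf{Per-round dichotomy.} Fix a round and let $g=h^*$ at its start, so $(0,i^*),\dots,(g,i^*)$ are already expanded and $(g{+}1,i^*)$ is a leaf. When the \textbf{for} loop reaches depth $g+1$, Line~4 selects a leaf $(g{+}1,j)$ with $\mathcal U_p(c_{g+1,j})\ge \mathcal U_p(c_{g+1,i^*})\ge f(c_{g+1,i^*})\ge f(x^*)-\delta(g{+}1;a,b)$, using the hypothesis $f\le\mathcal U$ at centers of optimal nodes together with Lemma~3. In case (i), $(g{+}1,j)$ is expanded; then the displayed inequality certifies $(g{+}1,j)\in I_{g+1}$, and since $I_{g+1}$ is finite and contains $(g{+}1,i^*)$, while every depth-$(g{+}1)$ leaf selectable in such a round is itself forced into $I_{g+1}$ (its score dominates that of the leaf $(g{+}1,i^*)$), after at most $|I_{g+1}|$ rounds of this type --- all occurring while $h^*=g$ --- the node expanded at depth $g+1$ is $(g{+}1,i^*)$ and $h^*$ increases. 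In case (ii), $(g{+}1,j)$ is not expanded, so $v_{max}>f(x^*)-\delta(g{+}1;a,b)$ at that moment; letting $(h'',j'')$ with $h''\le g$ be the node expanded earlier in the round that realises $v_{max}=f(c_{h'',j''})$, non-increasingness of $\delta(\cdot;a,b)$ gives $f(c_{h'',j''})=v_{max}>f(x^*)-\delta(g{+}1;a,b)\ge f(x^*)-\delta(h'';a,b)$, and since $(h'',j'')$ has been evaluated, Lemma~6 turns this into $\mathcal U_{p''}(c_{h'',j''})\ge f(c_{h'',j''})>f(x^*)-\delta(h'';a,b)$ at its evaluation index, so $(h'',j'')\in I_{h''}$.

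\textbf{Counting.} I would then attach to each round a node of $\bigcup_{l=0}^{h}I_l$ --- the depth-$(g{+}1)$ node it expands in case (i), the value-maximising node $(h'',j'')$ in case (ii) --- and observe these are pairwise distinct, because every node is expanded in exactly one round, and each lies in $\bigcup_{l=0}^{h}I_l$ since $g+1\le h$ or $h''\le g\le h-1$. Hence at most $\sum_{l=0}^{h}|I_l|$ rounds elapse before $(h,i^*)$ is expanded, so at most $(h_{max}(n)+1)\sum_{l=0}^{h}|I_l|$ expansions do; using that depth $0$ is expanded only once (in the first round) absorbs the stray $+1$ and yields the stated threshold $n\ge h_{max}(n)\sum_{l=0}^{h}|I_l|$.

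\textbf{Main obstacle.} I expect the fiddliest point to be the bookkeeping linking $h_{max}(p)$ --- which for the choice $h_{max}(p)=\sqrt p$ grows with $p$ --- to the single $h_{max}(n)$ in the statement: the few early rounds with $h_{max}(p)<g+1$ never reach depth $g+1$ and so escape the dichotomy above. This I would handle as in \citet{munos2011}, noting that the hypothesis already forces $\sqrt n\ge h+1$, so the depth cap has exceeded $h$ well before $n$ expansions have been made and those early rounds contribute only lower-order terms. The only genuinely new ingredients beyond SOO are the substitution of $\mathcal U_p$ for $f$ in Line~4 --- absorbed by the hypothesis on optimal-node centers and by Lemma~6 for evaluated nodes --- and Lemma~3, which keeps the optimal node inside every $I_h$.
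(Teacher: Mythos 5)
Your proposal is correct and is essentially the paper's argument: it is the SOO round-counting proof of \citet{munos2011}, with the domination step $\mathcal U_p(c_{g+1,j})\ge\mathcal U_p(c_{g+1,i^*})\ge f(c_{g+1,i^*})\ge f(x^*)-\delta(g{+}1;a,b)$ playing exactly the role of the paper's auxiliary lemma on $\delta$-optimality of expanded nodes, followed by a count of rounds times the per-round cost $h_{max}(n)$. The one place you diverge is organisational but worth noting: the paper proves the statement by induction on $h$ and, at the inductive step, simply asserts that after $h_{max}(n)|I_{h_0+1}|$ further expansions the optimal node at depth $h_0+1$ must have been expanded, without accounting for rounds in which the selected depth-$(h_0{+}1)$ leaf fails the test $\mathcal U_p\ge v_{max}$ and nothing at that depth is expanded. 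Your case (ii) charges precisely those rounds to the shallower node realising $v_{max}$, which lands in $I_{h''}$ for some $h''\le g$; this makes the count $\sum_{l=0}^{h}|I_l|$ honest rather than depth-by-depth, and is a genuine tightening of the paper's terse bookkeeping. The only caveat is that this step uses $\mathcal U_{p''}(c_{h'',j''})\ge f(c_{h'',j''})$ for a possibly non-optimal evaluated node, which goes beyond the lemma's stated hypothesis (which covers only optimal-node centers) and so imports the Lemma~6 event; this is harmless in context, since the paper only ever applies the lemma under that event, but strictly speaking it slightly strengthens the hypothesis. Your handling of the $h_{max}(p)$ versus $h_{max}(n)$ bookkeeping and of the stray $+1$ per round is at the same level of informality as the paper's own proof, so no objection there.
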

\begin{proof}
We prove it by induction. For $h = 0$, we have $h_n^* \ge 0$.

Assume that the proposition is true for all $o \le h \le h_0$ with $h_0 < h_{max}(n)$. Let us prove that it is also true for $h_0 +1$. Let $n \ge h_{max}(n) (|I_0| + |I_1| +...+ |I_{h_0 +1}|)$. Since $n \ge h_{max}(n) (|I_0| + |I_1| +...+ |I_{h_0}|)$, we have $h^*_n \ge h_0$. If $h^*_n \ge h_0 +1$ then the proof is finished. If $h^*_n = h_0$, we consider the nodes of depth $h_0 +1$ that are expanded. We have seen that as long as the optimal node of depth $h_0 +1$ is not expanded, any node of depth $h_0 +1$ that is expanded must be $\delta(h_0+1;a,b)$-optimal, i.e., belongs to $I_{h_0 +1}$. Since there are $|I_{h_0 +1}|$ of them, after $h_{max}(n)|I_{h_0 +1}|$ node expansions, the optimal one must be expanded, thus $h^*_n \ge h_0 +1$.
\end{proof}
\section{Proof of Lemma 6}
We use $A_{p}$ to denote the set of all points evaluated by the algorithm and all centers of optimal nodes of the tree $\mathcal{T}_{p}$ after $p$ evaluations.
\begin{lemma}
Pick a $\eta\in(0,1)$. Set $\beta_{p}=2log(\pi^{2}p^{3}/3\eta)$ and $\mathcal L_p(c) = \mu_p(c) - \beta^{1/2}_p\sigma_p(c)$. With probability $1-\eta$, we have
$$\mathcal L_p(c) \le f(c) \le \mathcal U_p(c),$$
for every $p \ge 1$ and for every $c\in A_{p}$.
\end{lemma}
\begin{proof}
After $p$ evaluations, there are at most $p$ evaluated points by the algorithm. On the other hand, after $p$ evaluations, the deepest depth of the tree $\mathcal T_p$ is $p$. In addition, at each depth, there is only one optimal node which contains $x^*$. Therefore, there are at most $p$ centers of optimal nodes which belong to tree $\mathcal{T}_{p}$. Thus, $|A_p| \le 2p$.

The proof is similar to Lemma 5.1 in \cite{Srinivas12} and Lemma 4 in \cite{WangSJF14} with the set $A_p$ (here we use the fact that $f$ is a sample from the GP). If we let $\beta_{p}=2log(\pi^{2}p^2|A_p|/6\eta)$, then with probability $1 - \eta$,  we have
$$\mathcal L_p(c) \le f(c) \le \mathcal U_p(c),$$
for every $p \ge 1$ and for every $c\in A_{p}$.
Since $|A_p| \le 2p$, we will use $\beta_{p}=2log(\pi^{2}p^3/3\eta)$ instead and the lemma also holds with this $\beta_p = 2log(\pi^{2}p^3/3\eta)$.
\end{proof}
Lemma 6 implies that with probability $1 -\eta$, all conditions $\mathcal L_p(c) \le f(c) \le \mathcal U_p(c)$ in Lemma 4, Theorem 1, and Lemma 5 in the main paper hold for every $1 \le p \le N$.
\section{Proof of Theorem 2}
\begin{theorem}[Regret Bound]
Assume that there is a partitioning procedure $P(m;a,b)$ where $a = \mathcal{O}(N^{1/D})$, $b=D$ and $2 \le m  < \sqrt{N}-1$. Let the depth function $h_{max}(p) = \sqrt{p}$. We consider $m^2 < p \le N$, and define $h(p)$ as the smallest integer $h$ such that $$ h \ge \frac{\sqrt{p}-m-1}{C} + 2,$$
where $C$ is the constant defined by Theorem 1. Pick a $\eta \in(0,1)$. Then for every $N \ge N_1$, the loss is bounded as
\begin{eqnarray*}
r_{p} & \le & \delta(\text{min}\{h(p), \sqrt{p}+1\};a,b)  +  4C_1\beta^{1/2}_p(\delta(\text{min}\{h(p)-1, \sqrt{p}\};a,b))^{\nu/2-D/4},
\end{eqnarray*}
with probability $1-\eta$, where $N_1$ is the constant defined in Theorem 1, $C_1$ is the constant defined in lemma 4 and $\beta_N = \sqrt{2log(\pi^{2}N^{3}/3\eta)}$.
\end{theorem}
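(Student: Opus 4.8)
The plan is to work throughout on the confidence event of Lemma~6, which holds with probability $1-\eta$ and gives $\mathcal L_p(c)\le f(c)\le\mathcal U_p(c)$ for every $p\ge 1$ and every $c\in A_p$; on this event the hypotheses of Theorem~1 are met for $N\ge N_1$. I will bound $r_p$ by $f(x^*)-f(c_{h,j})$, where $(h,j)$ is the deepest node the algorithm has expanded (hence evaluated, Line~7 of Algorithm~1) within the first $p$ expansions, and then control this gap in two stages: (i) a counting argument giving a lower bound on $h_p^*$, the depth of the deepest expanded node in the branch containing $x^*$; (ii) propagating optimism from that optimal branch to $(h,j)$ through the expansion rule, then trading upper confidence bounds for function values and estimating posterior variances with Lemma~4.

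For stage (i), I would combine Theorem~1 ($|I_l|\le C$ for all $l\ge 2$ once $N\ge N_1$) with $|I_0|=1$, $|I_1|\le m$ and the definition of $h(p)$ to get $\sum_{l=0}^{h(p)-1}|I_l|\le 1+m+C(h(p)-2)\le\sqrt p$. Since $h_{max}(p)=\sqrt p$, this means $p\ge h_{max}(p)\sum_{l=0}^{h(p)-1}|I_l|$, so Lemma~5 yields $h_p^*\ge h(p)-1$ provided $h(p)-1\le\sqrt p$; and if $h(p)-1>\sqrt p$ the depth cap $h_{max}(p)=\sqrt p$ forces $h_p^*=\sqrt p$. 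Hence in all cases $\min\{h(p)-1,\sqrt p\}\le h_p^*\le\sqrt p$, so also $\min\{h(p),\sqrt p+1\}\le h_p^*+1\le\sqrt p+1$.

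For stage (ii), let $(h_p^*+1,i^*)$ be the optimal node at depth $h_p^*+1$, which by definition of $h_p^*$ has not been expanded. Since $(h,j)$ was expanded (with $h\ge h_p^*+1$), at that moment $\mathcal U_p(c_{h,j})\ge v_{max}\ge f(c_{h_p^*+1,o})$, where $(h_p^*+1,o)$ is the leaf of depth $h_p^*+1$ picked by Line~4 and evaluated at some earlier step $p'\le p$. By Lemma~6, $f(c_{h_p^*+1,o})\ge\mathcal L_{p'}(c_{h_p^*+1,o})=\mathcal U_{p'}(c_{h_p^*+1,o})-2\beta_{p'}^{1/2}\sigma_{p'}(c_{h_p^*+1,o})$; the argmax property of $o$, Lemma~6 again, and Lemma~3 give $\mathcal U_{p'}(c_{h_p^*+1,o})\ge\mathcal U_{p'}(c_{h_p^*+1,i^*})\ge f(c_{h_p^*+1,i^*})\ge f(x^*)-\delta(h_p^*+1;a,b)$. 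Together with $\mathcal U_p(c_{h,j})=\mathcal L_p(c_{h,j})+2\beta_p^{1/2}\sigma_p(c_{h,j})\le f(c_{h,j})+2\beta_p^{1/2}\sigma_p(c_{h,j})$, this yields
\[
f(x^*)-f(c_{h,j})\le\delta(h_p^*+1;a,b)+2\beta_{p'}^{1/2}\sigma_{p'}(c_{h_p^*+1,o})+2\beta_p^{1/2}\sigma_p(c_{h,j}).
\]

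To conclude, I would bound the variances with Lemma~4, namely $\sigma_{p'}(c_{h_p^*+1,o})\le C_1(\delta(h_p^*;a,b))^{\nu/2-D/4}$ and $\sigma_p(c_{h,j})\le C_1(\delta(h-1;a,b))^{\nu/2-D/4}$, use $\beta_{p'}^{1/2}\le\beta_p^{1/2}$, and apply the monotonicity of $\delta(\cdot;a,b)$ with $h-1\ge h_p^*\ge\min\{h(p)-1,\sqrt p\}$ and $h_p^*+1\ge\min\{h(p),\sqrt p+1\}$ to collapse the right-hand side into $\delta(\min\{h(p),\sqrt p+1\};a,b)+4C_1\beta_p^{1/2}(\delta(\min\{h(p)-1,\sqrt p\};a,b))^{\nu/2-D/4}$; since $c_{h,j}$ was evaluated, $r_p\le f(x^*)-f(c_{h,j})$, giving the claim with probability $1-\eta$. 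The analytic estimates here are routine; the delicate part is the structural bookkeeping — reading off from the sweep order of Algorithm~1 and the $v_{max}$ update at Line~10 that $h\ge h_p^*+1$, that such a node $(h_p^*+1,o)$ exists and was evaluated before $(h,j)$, and that $v_{max}$ at the expansion of $(h,j)$ is at least $f(c_{h_p^*+1,o})$. These order-of-events arguments, together with making the counting in stage (i) tight enough to stay within the $\sqrt p$ budget, are where the real work lies.
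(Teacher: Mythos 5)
Your proposal is correct and follows essentially the same route as the paper's proof: the same counting argument combining Theorem~1, $|I_0|=1$, $|I_1|\le m$ and Lemma~5 to get $h_p^*\ge\min\{h(p)-1,\sqrt p\}$, and the same optimism-propagation chain through the $v_{max}$/argmax mechanism, Lemma~6 and Lemma~3, finished off with Lemma~4 and the monotonicity of $\delta(\cdot;a,b)$. The structural bookkeeping you flag as the delicate part (existence of the evaluated node $(h_p^*+1,o)$ and the inequality $\mathcal U_p(c_{h,j})\ge v_{max}\ge f(c_{h_p^*+1,o})$) is handled in the paper at exactly the same level of detail, so nothing is missing relative to the reference argument.
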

\begin{proof}
By Theorem 1, the definition of $h(p)$ and the facts that $|I_0| = 1$ and $|I_1| \le m$, we have
\begin{eqnarray*}
\sum_{l=0}^{h(p)-1} |I_l| & = & |I_0| + |I_1| + (|I_2| + ... + |I|_{h(p)-1}) \\
& \le & 1 + m + C(h(p)-2) \le \sqrt{p}
\end{eqnarray*}
Therefore, $\sum_{l=0}^{h(p)-1}|I_{l}|\le \sqrt{p}$. By Lemma 5 when $h(p)-1\le h_{max}(p)= \sqrt{p}$, we have $h_{p}^{*}\ge h(p)-1$. If  $h(p)-1 > \sqrt{p}$ then $h_{p}^{*}= h_{max}(p) = \sqrt{p}$ since the BOO algorithm does not expand nodes beyond depth $h_{max}(p)$. Thus, in all cases, $h_{p}^{*}\ge\text{min}\{h(p)-1, \sqrt{p}\}$.

Let $(h,j)$ be the deepest node in $\mathcal{T}_{p}$ that has been expanded by the algorithm up to $p$ expansions. Thus $h\ge h_{p}^{*}$. By Algorithm 1, we only expand a node when its GP-UCB value is larger than $v_{max}$ which is updated at Line 10 of Algorithm 1. Thus, since the node $(h,j)$ has been expanded, its GP-UCB value is at least as high as that of the some node $(h_{p}^{*}+1, o)$ at depth $h_{p}^{*}+1$, such that
\begin{itemize}
  \item (1) node $(h_{p}^{*}+1, o)$ has been evaluated at some $p'$-th expansion before node $(h,j)$  and
  \item (2)  $(h_{p}^{*}+1, o) \in  \text{argmax}_{(h_p^{*} +1,i) \in \mathcal L} \mathcal U_{p'}(c_{h_p^{*} +1,i})$ (see Line 4 of Algorithm 1).
\end{itemize}
We let node $(h_{p}^{*}+1, o^*)$ be the optimal node at depth $h_{p}^{*}+1$. With probability $ 1 - \eta$,
\begin{eqnarray}
f(x^*) - \delta(h_{p}^{*}+1; a, b) & \le & f(c_{h_{p}^{*}+1, o^*}) \\
& \le & \mathcal U_{p'}(c_{h_{p}^{*}+1, o^*}) \\
& \le & \mathcal U_{p'}(c_{h_{p}^{*}+1, o}) \\
& \le & \mu_{p'}(c_{h_{p}^{*}+1, o}) +  \beta^{1/2}_{p'}\sigma_{p'}(c_{h_p^{*} +1,o})\\
& \le & \mu_{p'}(c_{h_{p}^{*}+1, o}) -   \beta^{1/2}_{p'}\sigma_{p'}(c_{h_p^{*} +1,o}) + 2\beta^{1/2}_{p'}\sigma_{p'}(c_{h_p^{*} +1,o})\\
& \le & \mathcal L_{p'} (c_{h_{p}^{*}+1, o})  +  2\beta^{1/2}_{p'}\sigma_{p'}(c_{h_p^{*} +1,o})\\
& \le & f(c_{h_{p}^{*}+1, o})  +  2\beta^{1/2}_{p'}\sigma_{p'}(c_{h_p^{*} +1,o})\\
& \le & \mathcal U_p(c_{h, j})  +  2\beta^{1/2}_{p'}\sigma_{p'}(c_{h_p^{*} +1,o}),
\end{eqnarray}
where in Eq (34), we use Lemma 3. Eq (35) holds with probability $1 -\eta$  by using Lemma 6. In Eq (36), we use the above condition (2). Eq (37) uses the definition of $\mathcal U_{p'}$. Eq (39) uses the definition of $\mathcal L_{p'}$. Eq (40) holds with probability $1 -\eta$  by using Lemma 6. Finally, Eq (41) uses the updating condition at Line 5 and Line 10 of Algorithm 1.

Eq (41) implies that with probability $1 -\eta$,
$$f(x^*) - \mathcal U_p(c_{h, j}) \le \delta(h_{p}^{*}+1; a, b) + 2\beta^{1/2}_{p'}\sigma_{p'}(c_{h_p^{*} +1,o}).$$

On the other hand,  by Lemma 6, with probability $1 -\eta$, we have
\begin{eqnarray*}
U_p(c_{h,j}) & = &  \mu_p(c_{h,j}) + \beta^{1/2}_p\sigma_{p}(c_{h,j}) \\
& = &  \mathcal L_p(c_{h,j}) + 2\beta_p^{1/2}\sigma_{p}(c_{h,j}) \\
& \le & f(c_{h,j}) + 2\beta_p^{1/2}\sigma_{p}(c_{h,j})
\end{eqnarray*}

Combining these two results, we have $$f(x^*) - f(c_{h,j}) \le \delta(h_p^{*}+1;a,b) + 2\beta^{1/2}_{p'}\sigma_{p'}(c_{h_p^{*} +1,o}) + 2\beta_p^{1/2}\sigma_{p}(c_{h,j}),$$ with a probability $1 -\eta$.

Finally, by using Lemma 4 to bound $\sigma_{p'}(c_{h_p^{*} +1,o})$ and $\sigma_{p}(c_{h,j})$ and using the fact that the function $\delta(*;a,b)$ decreases with their depths, we achieve
\begin{eqnarray*}
r_p & \le & f(x^*) - f(c_{h,j}) \\
& \le  & \delta(\text{min}\{h(p), \sqrt{p}+1\};a,b) + 4C_1\beta^{1/2}_p(\delta(\text{min}\{h(p)-1, \sqrt{p}\};a,b))^{\nu/2-D/4}
\end{eqnarray*}
with a probability $1 -\eta$.
\end{proof}
\section{Proof of Corollary 1}
\begin{corollary}
Pick a $\eta \in(0,1)$. There exists a constant $N_2 > 0$ such that for every $N \ge N_2$ we have that the simple regret of the proposed BOO with the partitioning procedure $P(m;a,b)$ where $a=\floor{(\frac{\sqrt{N}}{2})^{\frac{1}{D}}}$, $b=D$, is bounded as
$$r_N \le \mathcal O(N^{-\sqrt{N}}),$$ with probability $1-\eta$.
\end{corollary}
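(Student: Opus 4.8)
The plan is to apply Theorem~2 with $p = N$ and the stated parameters $a = \floor{(\sqrt{N}/2)^{1/D}}$, $b = D$, $m = a^D$, and then show that each of the two terms in the regret bound decays at rate $N^{-\Theta(\sqrt{N})}$. First I would check that the hypotheses of Theorem~2 (hence also those of Theorem~1, which supplies the constants $C$ and $N_1$) hold for all large $N$: since $a \le (\sqrt{N}/2)^{1/D}$ we get $m = a^D \le \sqrt{N}/2$, so $m^2 \le N/4 < N$ and $m < \sqrt{N}-1$; also $a \le N^{1/(2D)} = \mathcal{O}(N^{1/D})$; and $a \ge 2$ as soon as $N \ge 4^{D+1}$, which forces $m \ge 2$. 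Thus Theorem~2 applies for $N \ge N_2 := \max\{N_1, 4^{D+1}\}$, and the choice $b = D$ makes $\delta(h;a,b) = L_1 D\, a^{-2h}$, a decreasing function of $h$.

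Next I would lower-bound the two depths appearing in the bound. By definition $h(N)$ is the least integer with $h(N) \ge \frac{\sqrt{N}-m-1}{C} + 2$, and $m \le \sqrt{N}/2$ gives $h(N) \ge \frac{\sqrt{N}/2 - 1}{C} + 2$. Since also $\sqrt{N}+1 \ge \sqrt{N}$, both $\min\{h(N), \sqrt{N}+1\}$ and $\min\{h(N)-1, \sqrt{N}\}$ are at least $\kappa\sqrt{N}$ for a constant $\kappa > 0$ depending only on $C$ (e.g.\ $\kappa = \tfrac{1}{8\max\{C,1\}}$ works once $N$ is large enough to absorb the additive constants). Plugging this and $\delta(h;a,b) = L_1 D\, a^{-2h}$ into Theorem~2, and using $\nu > 4 + \tfrac{D}{2}$ so that $\tfrac{\nu}{2} - \tfrac{D}{4} > 2 > 0$, gives, with probability $1-\eta$,
\[
r_N \;\le\; L_1 D\, a^{-2\kappa\sqrt{N}} \;+\; 4C_1 (L_1 D)^{\nu/2 - D/4}\,\beta_N^{1/2}\, a^{-(\nu - D/2)\kappa\sqrt{N}},
\]
where I used $(a^{-2h})^{\nu/2-D/4} = a^{-(\nu-D/2)h}$ and monotonicity in $h$.

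The last step is to convert $a^{-\Theta(\sqrt{N})}$ into $N^{-\Theta(\sqrt{N})}$. Because $a = \floor{(\sqrt{N}/2)^{1/D}} \ge \tfrac{1}{2}(\sqrt{N}/2)^{1/D}$ for $N \ge N_2$, we have $\log a \ge \tfrac{1}{2D}\log N - \mathcal{O}(1) \ge \tfrac{1}{4D}\log N$ for $N$ large; hence $a^{-2\kappa\sqrt{N}} = \exp(-2\kappa\sqrt{N}\log a) \le \exp\!\big(-\tfrac{\kappa}{2D}\sqrt{N}\log N\big) = N^{-\frac{\kappa}{2D}\sqrt{N}}$, and similarly the second term is at most $N^{-\frac{(\nu - D/2)\kappa}{4D}\sqrt{N}}$ up to the factor $\beta_N^{1/2} = \mathcal{O}(\sqrt{\log N})$, which is sub-polynomial and so does not affect the $N^{-\Theta(\sqrt{N})}$ rate. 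Collecting the constants $L_1, D, C_1, \nu$ (all independent of $N$) yields $r_N \le \mathcal{O}(N^{-c\sqrt{N}})$ for some $c > 0$, i.e.\ the claimed $\mathcal{O}(N^{-\sqrt{N}})$ bound; tracking the constants explicitly recovers the exponent $-\tfrac{\sqrt{N}}{CD} + \tfrac{2}{CD} - \tfrac{2}{D}$ of Remark~1.

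I expect the main obstacle to be bookkeeping rather than anything deep: threading the chain of constants ($C$ from Theorem~1, $C_1$ from Lemma~4, $L_1, L_2, \nu$) through the nested minima and the floor function without mishandling signs, and---the one genuinely load-bearing estimate---making the claim $\log a = \Theta(\log N)$ fully rigorous. That estimate is precisely where the polynomial scaling $a \asymp N^{1/(2D)}$ enters: if $a$ were only $\mathcal{O}(1)$ (as in SOO-type methods) the same computation would give merely an $e^{-\Theta(\sqrt{N})}$ rate, so the improvement to $N^{-\Theta(\sqrt{N})}$ hinges entirely on being able to take $a$ polynomially large in $N$, which is exactly what the $b = D$ partitioning procedure and the untied branch factor enable.
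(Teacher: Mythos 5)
Your proposal is correct and follows essentially the same route as the paper: apply Theorem~2 at $p=N$, verify $m=a^D\le\sqrt{N}/2$ satisfies its hypotheses, lower-bound the two depths $\min\{h(N),\sqrt{N}+1\}$ and $\min\{h(N)-1,\sqrt{N}\}$ by $\Theta(\sqrt{N})$, and use $a=\Theta(N^{1/(2D)})$ so that $a^{-\Theta(\sqrt{N})}=N^{-\Theta(\sqrt{N})}$ with $\beta_N^{1/2}$ absorbed as a sub-polynomial factor. The only (cosmetic) difference is that you lower-bound both branches of each minimum by a single $\kappa\sqrt{N}$ rather than splitting into the paper's explicit case analysis, and you are more candid than the paper that the resulting exponent is $-c\sqrt{N}$ for a constant $c$ depending on $C$ and $D$ (consistent with Remark~1) rather than literally $-\sqrt{N}$.
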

\begin{proof}
With $a=\floor{(\frac{\sqrt{N}}{2})^{\frac{1}{D}}}$ and $b=D$, $m = a^b \le \sqrt{N}/2$. These conditions satisfy the assumptions of Theorem 2, therefore following Theorem 2 with probability $1 -\eta$, we have that
\begin{eqnarray*}
r_N  \le  \underbrace{\delta(\text{min}\{h(N), \sqrt{N}+1\};a,b)}_{\text{Term 1}} + \underbrace{4C_1\beta^{1/2}_N(\delta(\text{min}\{h(N)-1, \sqrt{N}\};a,b))^{\nu/2-D/4}}_{\text{Term 2}}.
\end{eqnarray*}
We consider Term 1. There are two cases:

(1) If $\text{min}\{h(N), \sqrt{N}+1\} = \sqrt{N} +1$ then $\delta(\text{min}\{h(N), \sqrt{N}+1\};a,b) = \delta(\sqrt{N}+1;a,b) = L_1D a^{-2(\sqrt{N} +1)} \le \mathcal O(N^{-\sqrt{N}})$ by replacing $a = \floor{(\frac{\sqrt{N}}{2})^{\frac{1}{D}}}$.

(2) If $\text{min}\{h(N), \sqrt{N}+1\} = h(N)$. By definition of $h(N)$ in Theorem 2, $h(N) \ge \frac{\sqrt{N}-m-1}{C} + 2 \ge \frac{\sqrt{N}}{2C} - \frac{1}{C} + 2$. Therefore, $\delta(\text{min}\{h(N), \sqrt{N}+1\};a,b) = \delta(h(N);a,b) = L_1D a^{-2h(N)} \le \mathcal O(N^{-\sqrt{N}})$.

Thus, for both cases, Term 1 is bounded by $O(N^{-\sqrt{N}})$. We now consider Term 2. There are also two cases:

(1) If  $\text{min}\{h(N) - 1, \sqrt{N}\} = \sqrt{N}$ then $4C_1\beta^{1/2}_N(\delta(\text{min}\{h(N)-1, \sqrt{N}\};a,b))^{\nu/2-D/4} = 4C_1\beta^{1/2}_N(\delta(\sqrt{N};a,b))^{\nu/2-D/4} = 4C_1L_1D\beta^{1/2}_N a^{-2(\nu/2-D/4)\sqrt{N}} \le 4C_1L_1D\beta^{1/2}_N a^{-4\sqrt{N}}$. In the last inequality, we use the assumption that $\nu > 4 + D/2$. The component $a^{-4\sqrt{N}}$ with $a = \floor{(\frac{\sqrt{N}}{2})^{\frac{1}{D}}}$ dominates $\beta_N$ which is $\mathcal O(\sqrt{N})$. Therefore Term 2 is bounded by $O(N^{-\sqrt{N}})$.

(2) If  $\text{min}\{h(N) - 1, \sqrt{N}\} = h(N) - 1$. By definition of $h(N)$ in Theorem 2, $h(N) -1 \ge \frac{\sqrt{N}-m-1}{C} + 1 \ge \frac{\sqrt{N}}{2C} - \frac{1}{C} + 1$. Then $4C_1\beta^{1/2}_N(\delta(\text{min}\{h(N)-1, \sqrt{N}\};a,b))^{\nu/2-D/4} = 4C_1\beta^{1/2}_N(\delta(h(N) -1;a,b))^{\nu/2-D/4} = 4C_1L_1D\beta^{1/2}_N a^{-2(\nu/2-D/4)(h(N) -1)} \le 4C_1L_1D\beta^{1/2}_N a^{-4(h(N) -1)}$. By the argument similar as above, we have that Term 2 is bounded by $O(N^{-\sqrt{N}})$.

Finally, for all cases, we get that $r_N \le O(N^{-\sqrt{N}})$ with probability $1 -\eta$.
\end{proof}

\section{Ablation study between function sampling and partitioning procedure in the proposed BOO algorithm}
To show this point, we have performed additional experiments with $m=64$ (see Figure below). The left plot shows different partitioning procedures  while function sampling is fixed to our proposed scheme. We can see good improvement when $b=D$ compared to $b=1$ case. The right plot compares BaMSOO with our method which uses the proposed function sampling scheme but keeps using BaMSOO's partitioning procedure ($b=1$, $m=64$). In this case, we are not able to outperform BaMSOO. However, our result for $b=D$ in the left plot is significantly better than that of BaMSOO. This clearly shows that the effect of partitioning procedure is higher than that of the function sampling.

\begin{figure}[ht]
\centering
\subfigure{\includegraphics[scale=1.0,width=.45\textwidth]{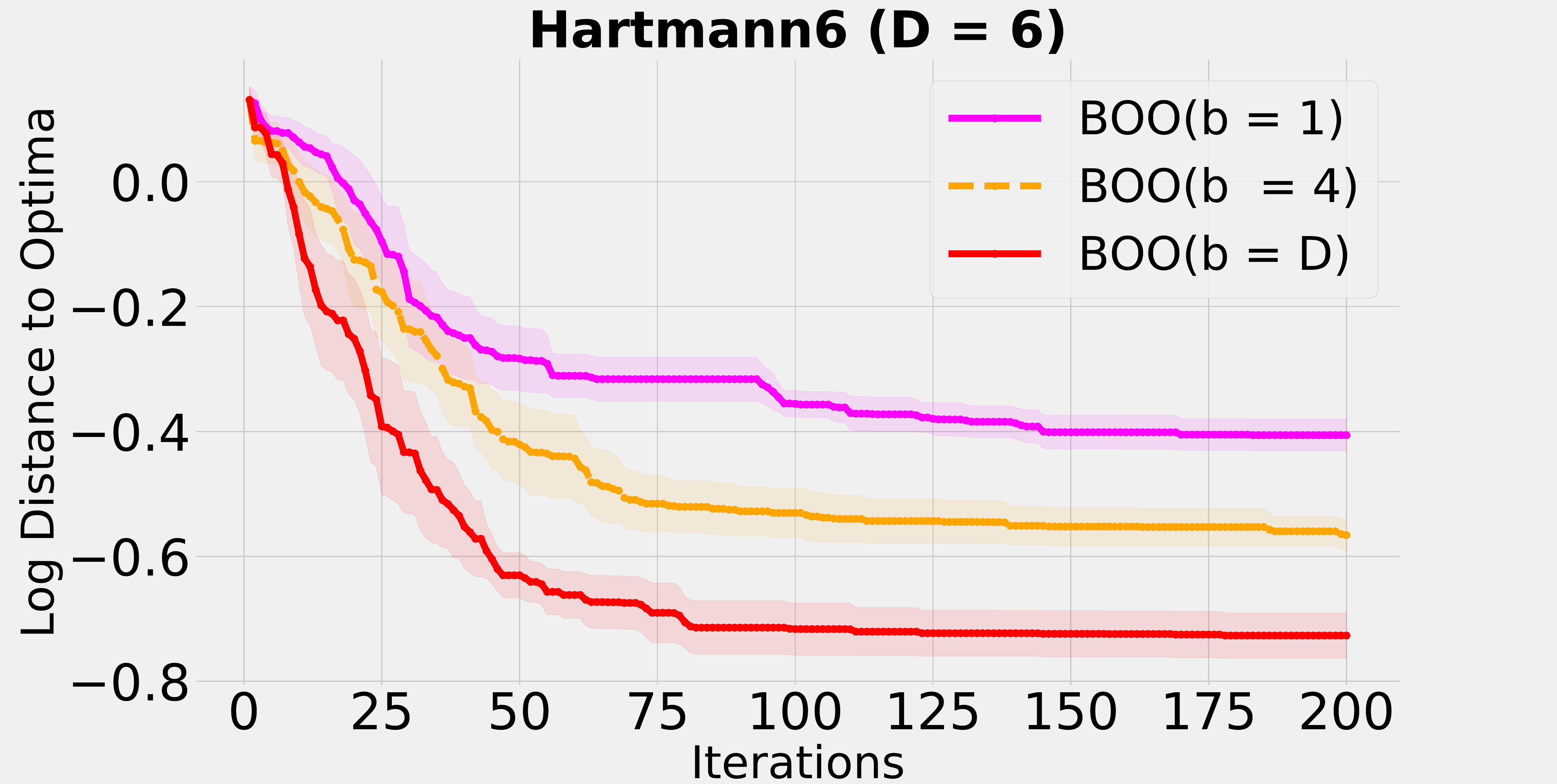}
}\hfill
\subfigure{\includegraphics[scale=1.0,width=.45\textwidth]{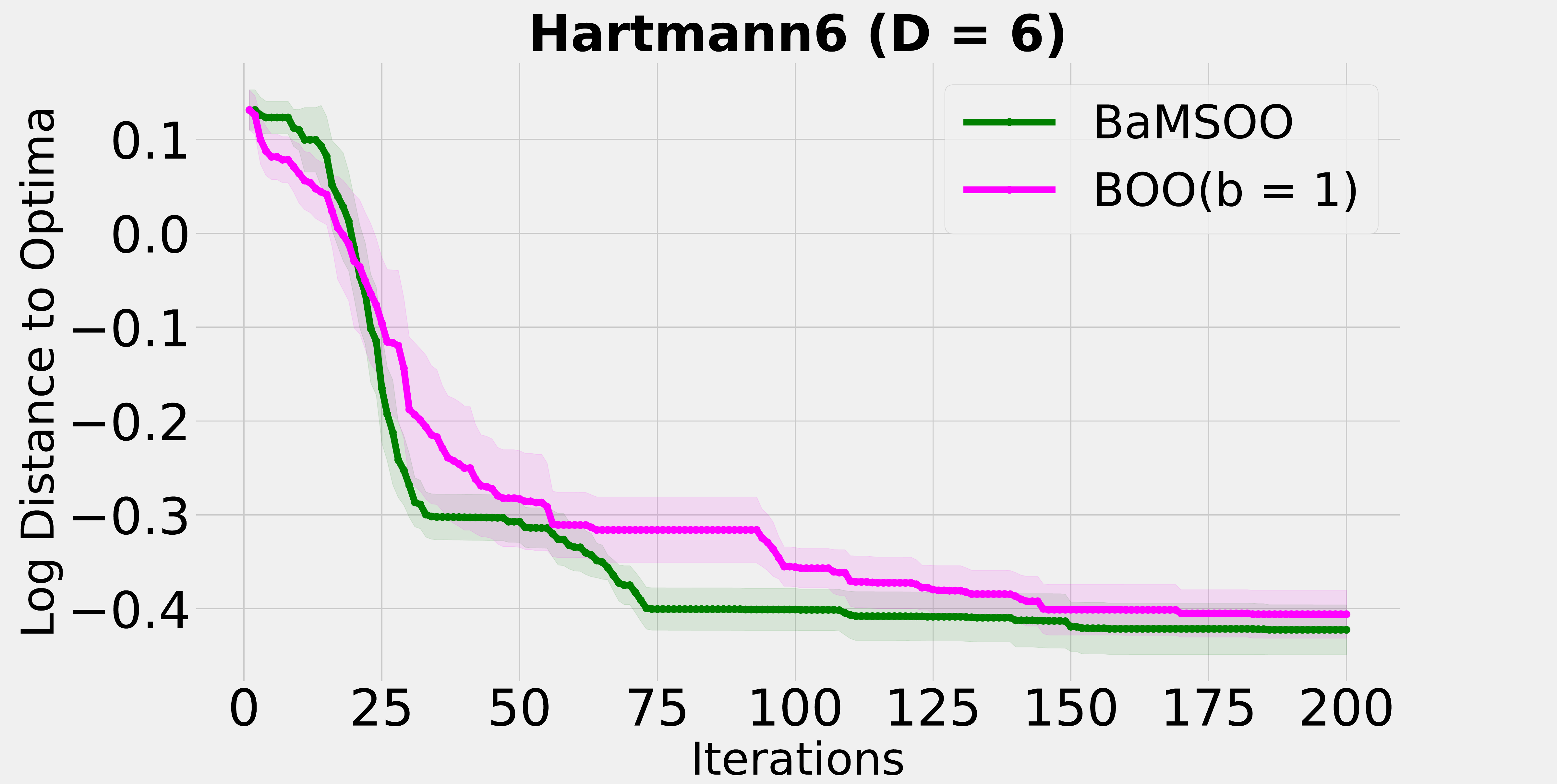}
}
\end{figure}

\end{document}